\def\eqref#1{equation~\ref{#1}}
\def\1{\bm{1}}
\DeclareMathAlphabet{\mathsfit}{\encodingdefault}{\sfdefault}{m}{sl}
\SetMathAlphabet{\mathsfit}{bold}{\encodingdefault}{\sfdefault}{bx}{n}
\newcommand{\E}{\mathbb{E}}
\DeclareMathOperator*{\argmax}{arg\,max}
\def\@makefnmark}
\def\@makefnmark{}\def\useless@macro}
\newcommand\blfootnote[1]{%
  \begingroup
  \renewcommand\thefootnote{}\footnote{#1}%
  \addtocounter{footnote}{-1}%
  \endgroup
}
\newenvironment{myquote}[1]%
  {\list{}{\leftmargin=#1\rightmargin=#1}\item[]}%
  {\endlist}
\newtheorem{assumption}{Assumption}
\newtheorem{theorem}{Theorem}
\newtheorem{proposition}{Proposition}
\newtheorem{lemma}{Lemma}
\newtheorem{corollary}{Corollary}
\newcommand{\diag}{\operatorname{diag}}
\def\E{\mathbb{E}}
\def\P{\mathbb{P}}
\def\one{\mathbbm{1}}
\title{A non-asymptotic analysis of oversmoothing \\in Graph Neural Networks}
\author[1,2]{Xinyi Wu}
\author[3*]{Zhengdao Chen}
\author[2]{William Wang}
\author[1,2]{Ali Jadbabaie}
\affil[1]{\stackunder{Institute for Data, Systems, and Society (IDSS), Massachusetts Institute of Technology}}
\affil[2]{\stackunder{Laboratory for Information and Decision Systems (LIDS), Massachusetts Institute of Technology}}
\affil[3]{Courant Institute of Mathematical Sciences, New York University}
\begin{document}
\maketitle

\blfootnote{
Correspondence to: xinyiwu@mit.edu \quad *Now at Google.}

\vspace{-12ex}
\begin{abstract}
Oversmoothing is a central challenge of building more powerful Graph Neural Networks (GNNs). While previous works have only demonstrated that oversmoothing is inevitable when the number of graph convolutions tends to infinity, in this paper, we precisely characterize the mechanism behind the phenomenon via a non-asymptotic analysis. Specifically, we distinguish between two different effects when applying graph convolutions---an undesirable mixing effect that homogenizes node representations in different classes, and a desirable denoising effect that homogenizes node representations in the same class. By quantifying these two effects on random graphs sampled from the Contextual Stochastic Block Model (CSBM), we show that oversmoothing happens once the mixing effect starts to dominate the denoising effect, and the number of layers required for this transition is $O(\log N/\log (\log N))$ for sufficiently dense graphs with $N$ nodes. We also extend our analysis to study the effects of Personalized PageRank (PPR), or equivalently, the effects of initial residual connections on oversmoothing. Our results suggest that while PPR mitigates oversmoothing at deeper layers, PPR-based architectures still achieve their best performance at a shallow depth and are outperformed by the graph convolution approach on certain graphs. Finally, we support our theoretical results with numerical experiments, which further suggest that the oversmoothing phenomenon observed in practice can be magnified by the difficulty of optimizing deep GNN models.
\end{abstract}

\section{Introduction}
Graph Neural Networks (GNNs) are a powerful framework for learning with
graph-structured data and have shown great promise in diverse domains such as molecular chemistry, physics, and social network analysis~\citep{Gori2005GNN, Scarselli2009TheGN,Bruna2014SpectralNA,Duvenaud2015ConvolutionalNO, Defferrard2016ConvolutionalNN, Battaglia2016Interaction, Li2016Gated}. Most GNN models are built by stacking graph convolutions or \emph{message-passing} layers~\citep{Gilmer2017NeuralMP}, where the representation of each node
is computed by recursively aggregating and transforming the representations of its neighboring nodes.  The most representative and popular example is the Graph Convolutional Network (GCN)~\citep{Kipf2017SemiSupervisedCW}, which has demonstrated success in \emph{node classification}, a primary graph task which asks for node labels and identifies community structures in real graphs.

Despite these achievements, the choice of depth for these GNN models remains an intriguing question. GNNs often achieve optimal classification performance when networks are shallow. Many widely used GNNs such as the GCN are no deeper than 4 layers~\citep{Kipf2017SemiSupervisedCW,Wu2019ACS}, and it has been observed that for deeper GNNs, repeated message-passing makes node representations in different classes indistinguishable and leads to lower node classification accuracy---a phenomenon known as \emph{oversmoothing}~\citep{Kipf2017SemiSupervisedCW, Li2018DeeperII, Klicpera2019PredictTP, Wu2019ACS, Oono2020Graph,Chen2020MeasuringAR,Chen2020SimpleAD,Keriven2022NotTL}. Through the insight that graph convolutions can be regarded as low-pass filters on graph signals, prior studies have established that oversmoothing is inevitable when the number of layers in a GNN increases to infinity~\citep{Li2018DeeperII, Oono2020Graph}. However, these asymptotic analyses do not fully explain the rapid occurrence of oversmoothing when we increase the network depth, let alone the fact that for some datasets, having no graph convolution is even optimal~\citep{Liu2021NonLocalGN}. These observations motivate the following key questions about oversmoothing in GNNs:
\vspace{-1ex}
\begin{myquote}{0.1in}
\textit{Why does oversmoothing happen at a relatively shallow depth?}\\
\end{myquote}
\vspace{-5ex}
\begin{myquote}{0.1in}
\textit{Can we quantitatively model the effect of applying a finite number of graph convolutions and theoretically predict the ``sweet spot'' for the choice of depth?}\\
\end{myquote}
\vspace{-4ex}

In this paper, we propose a non-asymptotic analysis framework to study the effects of graph convolutions and oversmoothing using the \emph{Contextual Stochastic Block Model (CSBM)}~\citep{Deshpande2018ContextualSB}. The CSBM mimics the community structure of real graphs and enables us to evaluate the performance of linear GNNs through the probabilistic model with ground truth community labels. More importantly, as a generative model, the CSBM gives us full control over the graph structure and allows us to analyze the effect of graph convolutions non-asymptotically. In particular, we distinguish between two counteracting effects of graph convolutions:
\vspace{-1ex}
\advance\leftmargini -2em
\begin{itemize}
    \item \textbf{mixing effect (undesirable)}: homogenizing node representations in different classes; 
    \item \textbf{denoising effect (desirable)}: homogenizing node representations in the same class.
    \vspace{-1ex}
\end{itemize}

\begin{figure}[t]
    \centering
    \includegraphics[width = \textwidth]{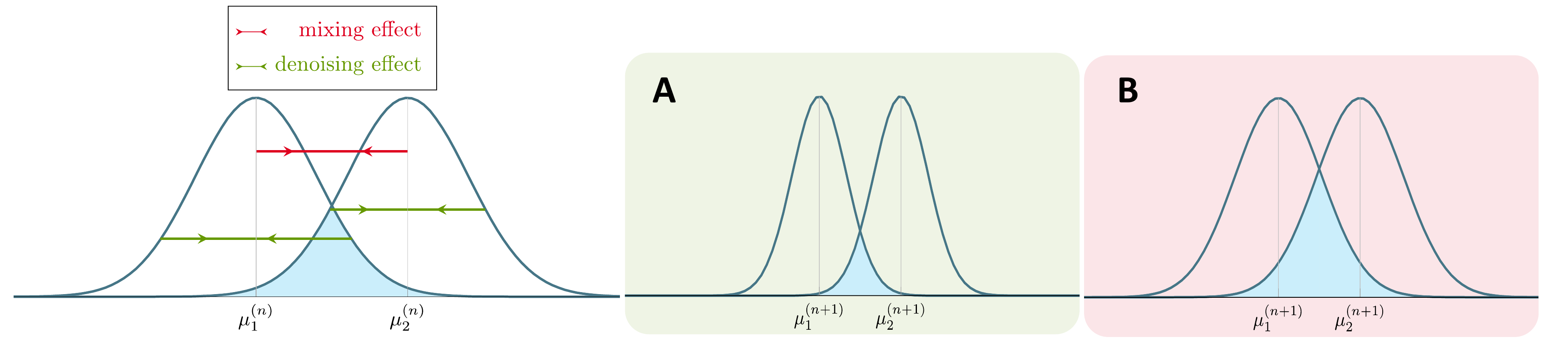}
    \caption{Illustration of how oversmoothing happens. 
    Stacking GNN layers will increase both the mixing and denoising effects counteracting each other. Depending on the graph characteristics, either the denoising effect dominates the mixing effect, resulting in less difficulty classifying nodes (\textbf{A}), or the mixing effect dominates the denoising effect, resulting in more difficulty classifying nodes (\textbf{B})---this is when oversmoothing starts to happen.
    }
    \vspace{-3ex}
    \label{fig:intro}
\end{figure}

Adding graph convolutions will increase both the mixing and denoising effects. 
As a result, oversmoothing happens not just because the mixing effect keeps accumulating as the depth increases, on which the asymptotic analyses are based~\citep{Li2018DeeperII, Oono2020Graph}, but rather because the mixing effect starts to dominate the denoising effect (see Figure~\ref{fig:intro} for a schematic illustration). By quantifying both effects as a function of the model depth, we show that the turning point of the tradeoff between the two effects is $O(\log N/\log(\log N))$ for graphs with $N$ nodes sampled from the CSBM in sufficiently dense regimes. Besides new theory, this paper also presents numerical results directly comparing theoretical predictions and empirical results. This comparison leads to new insights highlighting the fact that the oversmoothing phenomenon observed in practice is often a mixture of pure oversmoothing and difficulty of optimizing weights in deep GNN models. 

In addition, we apply our framework to analyze the effects of Personalized PageRank (PPR) on oversmoothing. Personalized propagation of neural predictions (PPNP) and its approximate variant (APPNP) make use of PPR and its approximate variant, respectively, and were proposed as a solution to mitigate oversmoothing while retaining the ability to aggregate information from larger neighborhoods in the graph~\citep{Klicpera2019PredictTP}. 
We show mathematically that PPR makes the model performance more robust to increasing number of layers by reducing the mixing effect at each layer, while it nonetheless
reduces the desirable denoising effect at the same time. For graphs with a large size or strong community structure, the reduction of the denoising effect would be greater than the reduction of the mixing effect and thus PPNP and APPNP would perform worse than the baseline GNN on those graphs.

\textbf{Our contributions are summarized as follows:}
\begin{itemize}
    \item We show that adding graph convolutions strengthens the denoising effect while exacerbates the mixing effect. Oversmoothing happens because the mixing effect dominates the denoising effect beyond a certain depth. For sufficiently dense CSBM graphs with $N$ nodes, the required number of layers for this to happen is $O(\log N / \log(\log N))$.
    \item We apply our framework to rigorously characterize the effects of PPR on oversmoothing. We show that PPR reduces both the mixing effect and the denoising effect of message-passing and thus does not necessarily improve node classification performance.
    
    \item We verify our theoretical results in experiments. Through comparison between theory and experiments, we find that the difficulty of optimizing weights in deep GNN architectures often aggravates oversmoothing.
\end{itemize}

\section{Additional Related Work}

 \paragraph{Oversmoothing problem in GNNs} Oversmoothing is a well-known issue in deep GNNs, and many techniques have been proposed to relieve it practically~\citep{Xu2018Representation, Li2019DeepGCNsCG, Chen2020SimpleAD, Huang2020Tackling, Zhao2020PairNorm}. On the theory side, by viewing GNN layers as a form of
Laplacian filter, prior works have shown that as the number of layers goes to infinity, the node representations within each connected component of the graph will converge to the same values~\citep{Li2018DeeperII,Oono2020Graph}. 
However, oversmoothing can be observed in GNNs with as few as $2-4$ layers~\citep{Kipf2017SemiSupervisedCW,Wu2019ACS}. The early onset of oversmoothing renders it an important concern in practice, and it has not been satisfyingly explained by the previous asymptotic studies. Our work addresses this gap by quantifying the effects of graph convolutions as a function of model depth and justifying why oversmoothing happens in shallow GNNs.
A recent study shared a similar insight of distinguishing between two competing effects of message-passing and showed the existence of an optimal number of layers for node prediction tasks on a latent space random graph model. But the result had no further quantification and hence the oversmoothing phenomemon was still only characterized asymptotically~\citep{Keriven2022NotTL}. 

\paragraph{Analysis of GNNs on CSBMs}
Stochastic block models (SBMs) and their contextual counterparts have been widely used to study node classification problems~\citep{Abbe2018CommunityDA, Chen2019SupervisedCD}. Recently there have been several works proposing to use CSBMs to theoretically analyze GNNs for the node classification task. \cite{Wei2022UnderstandingNI} used CSBMs to study the function of nonlinearity on the node classification performance, while~\cite{Fountoulakis2022GraphAR} used CSBMs to study the attention-based GNNs. More relevantly, \cite{Baranwal2021GraphCF, Baranwal2022EffectsOG} showed the advantage of applying graph convolutions up to three times for node classification on CSBM graphs. Nonetheless, they only focused on the desirable denoising effect of graph convolution instead of its tradeoff with the undesirable mixing effect, and therefore did not explain the occurance of oversmoothing.

\section{Problem Setting and Main Results}\label{sec:main_results}
We first introduce our theoretical analysis setup using the Contextual Stochastic Block Model (CSBM), a random graph model with planted community structure~\citep{Deshpande2018ContextualSB,Baranwal2021GraphCF,Baranwal2022EffectsOG,Ma2021IsHA,Wei2022UnderstandingNI,Fountoulakis2022GraphAR}. We choose the CSBM to study GNNs for the node classification task because the main goal of node classification is to discover node communities from the data. The CSBM imitates the community structure of real graphs and provides a clear ground truth for us to evaluate the model performance. Moreover, it is a generative model which gives us full control of the data to perform a non-asymptotic analysis. We then present a set of theoretical results establishing bounds for the representation power of GNNs in terms of the best-case node classification accuracy. The proofs of all the theorems and additional claims will be provided in the Appendix. 

\subsection{Notations}
We represent an undirected graph with $N$ nodes by $\mathcal{G} = (A, X)$, where $A \in \{0, 1 \}^{N \times N}$ is the \emph{adjacency matrix} and $X \in \mathbb{R}^N$ is the \emph{node feature vector}. For nodes $u, v \in [N]$, $A_{uv} = 1$ if and only if $u$ and $v$ are connected with an edge in $\mathcal{G}$, and $X_u \in \mathbb{R}$ represents the node feature of $u$. We let $\mathbbm{1}_N$ denote the all-one vector of length $N$ and $D = \diag(A\mathbbm{1}_N)$ be the \emph{degree matrix} of $\mathcal{G}$.

\subsection{Theoretical Analysis Framework}\label{framework}
\paragraph{Contextual Stochastic Block Models} We will focus on the case where the CSBM consists of two classes $\mathcal{C}_1$ and $\mathcal{C}_2$ of nodes of equal size, in total with $N$ nodes. For any two nodes in the graph, if they are from the same class, they are connected by an edge independently with probability $p$, or if they are from different classes, the probability is $q$. For each node $v \in \mathcal{C}_i, i\in\{1,2\}$, the initial feature $X_v$ is sampled independently from a Gaussian distribution $\mathcal{N}(\mu_i, {\sigma^2})$, where $\mu_i \in \mathbb{R}, \sigma \in (0,\infty)$. Without loss of generality, we assume that $\mu_1<\mu_2$.
We denote a graph generated from such a CSBM as $\mathcal{G}(A,X)\sim$ CSBM$(N, p, q, \mu_1, \mu_2, \sigma^2 )$. We further impose the following assumption on the CSBM used in our analysis. 
\begin{assumption}
$p, q = \omega(\log N/N)$ and $p>q>0$.
\label{assumption}
\end{assumption}

The choice $p ,q =\omega(\log N/N)$ ensures that the generated graph $\mathcal{G}$ is connected almost surely~\citep{Abbe2018CommunityDA} while being slightly more general than the $p, q=\omega(\log^2 N/N)$ regime considered in some concurrent works~\citep{Baranwal2021GraphCF, Wei2022UnderstandingNI}.
In addition, this regime also guarantees that $\mathcal{G}$ has a small diameter. Real-world graphs are known to exhibit the ``small-world" phenomenon---even if the number of nodes $N$ is very large, the diameter of graph remains small~\citep{Girvan2002CommunitySI,Chung2010GraphTI}. We will see in the theoretical analysis (Section~\ref{main results}) how this small-diameter characteristic contributes to the occurrence of oversmoothing in shallow GNNs. We remark that our results in fact hold for the more general choice of $p, q = \Omega(\log N/N)$, for which only the concentration bound in Theorem~\ref{thm:mean_concentration} needs to be modified in the threshold $\log N/N$ case where all the constants need a more careful treatment. 

Further, the choice $p>q$ ensures that the graph structure has \emph{homophily}, meaning that nodes from the same class are more likely to be connected than nodes from different classes. This characteristic is observed in a wide range of real-world graphs~\citep{David2010NetworksCA,Ma2021IsHA}. We note that this homophily assumption ($p>q$) is not essential to our analysis, though we add it for simplicity since the discussion of homophily versus heterophily  ($p<q$) is not the focus of our paper.

\paragraph{Graph convolution and linear GNN} In this paper, our theoretical analysis focuses on the simplified linear GNN model defined as follows: a \emph{graph convolution} using the (left-)normalized adjacency matrix takes the operation $h' = (D^{-1}A)h$, where $h$ and $h'$ are the input and output node representations, respectively.  A linear GNN layer can then be defined as $h' = (D^{-1}A) h W$, where $W$ is a learnable weight matrix.

As a result, the output of $n$ linear GNN layers can be written as $h^{(n)}\prod_{k=1}^n W^{(k)}$, where $h^{(n)}=(D^{-1}A)^n X$ is the output of $n$ graph convolutions, and $W^{(k)}$ is the weight matrix of the $k^{\text{th}}$ layer. Since this is linear in $h^{(n)}$, it follows that $n$-layer linear GNNs have the equivalent representation power as linear classifiers applied to $h^{(n)}$.

In practice, when building GNN models, nonlinear activation functions can be added between consecutive linear GNN layers. For additional results showing that adding certain nonlinearity would not improve the classification performance, see Appendix~\ref{app:nonlinearity}.

\paragraph{Bayes error rate and z-score}
Thanks to the linearity of the model, we see that the representation of node $v\in\mathcal{C}_i$ after $n$ graph convolutions is distributed as $\mathcal{N}(\mu_i^{(n)},(\sigma^{(n)})^2)$, where the variance $(\sigma^{(n)})^2$ is shared between classes. The optimal node-wise classifier in this case is the \emph{Bayes optimal classifier}, given by the following lemma.

\begin{lemma}
Suppose the label $y$ is drawn uniformly from $\{1, 2\}$, and given $y$, $x\sim \mathcal{N}(\mu_y^{(n)},(\sigma^{(n)})^2)$. Then the Bayes optimal classifier, which minimizes the probability of misclassification among all classifiers, has decision boundary $\mathcal{D} =(\mu_1+\mu_2)/2$, and predicts
$y = 1, \text{if } x \leq \mathcal{D}$ or $y = 2, \text{if } x > \mathcal{D}$. The associated Bayes error rate is $1-\Phi(z^{(n)})$, where $\Phi$ denotes the cumulative distribution function of the standard Gaussian distribution and $z^{(n)} = \frac{1}{2}(\mu_2^{(n)} - \mu_1^{(n)}) / \sigma^{(n)}$ is the z-score of $\mathcal{D}$ with respect to $\mathcal{N}(\mu_1^{(n)},(\sigma^{(n)})^2)$.
\label{lem:overlapping_area}
\end{lemma}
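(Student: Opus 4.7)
The plan is to derive the decision rule from Bayes' rule and then compute the resulting misclassification probability by direct integration. There is no genuine obstacle here: this is the textbook two-class Gaussian classification problem with equal priors and shared variance, and the derivation amounts to a few lines of algebra, so the main job is just to lay the steps out carefully.

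First I would invoke Bayes' rule. Since $P(y=1) = P(y=2) = 1/2$, the posterior ratio equals the likelihood ratio, so the Bayes optimal classifier predicts $y = 2$ whenever $p(x \mid y=2) \geq p(x \mid y=1)$. Plugging in the two Gaussian densities $\mathcal{N}(\mu_i^{(n)}, (\sigma^{(n)})^2)$ and taking the log, the quadratic-in-$x$ terms cancel because the variances coincide, and the comparison reduces to the linear condition $x \geq (\mu_1^{(n)} + \mu_2^{(n)})/2 = \mathcal{D}$. Combined with $\mu_1^{(n)} < \mu_2^{(n)}$ — which I would verify separately, as it follows from $\mu_1 < \mu_2$ and the fact that $(D^{-1}A)^n$ is row-stochastic and preserves the order of the class means in the CSBM setting — this yields the stated decision rule.

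Next I would compute the Bayes error by the law of total probability:
\[
P(\text{error}) = \tfrac{1}{2}\, P(x > \mathcal{D} \mid y=1) + \tfrac{1}{2}\, P(x \leq \mathcal{D} \mid y=2).
\]
Standardizing each conditional Gaussian gives $P(x > \mathcal{D} \mid y = 1) = 1 - \Phi\bigl((\mathcal{D} - \mu_1^{(n)})/\sigma^{(n)}\bigr) = 1 - \Phi(z^{(n)})$, and symmetrically $P(x \leq \mathcal{D} \mid y=2) = \Phi\bigl((\mathcal{D} - \mu_2^{(n)})/\sigma^{(n)}\bigr) = \Phi(-z^{(n)}) = 1 - \Phi(z^{(n)})$, so averaging yields the stated rate $1 - \Phi(z^{(n)})$.

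The only point that I would flag as meriting care, rather than as a real obstacle, is the sign and orientation bookkeeping: checking that $z^{(n)} > 0$ (so the direction of the rule ``predict $y=1$ if $x \leq \mathcal{D}$'' is consistent with the ordering of the means) and that the two conditional tail probabilities really do coincide by symmetry about $\mathcal{D}$. Once those are in place, the claim follows from standard properties of the Gaussian CDF.
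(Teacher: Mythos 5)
Your proposal is correct and follows essentially the same route as the paper: derive the midpoint decision boundary from the equal-prior, equal-variance likelihood ratio, then evaluate the two conditional tail probabilities (the paper phrases this step geometrically as half the overlapping area of the two Gaussians, whereas you standardize explicitly, but the computation is identical). Your explicit check that $\mathcal{D}=(\mu_1^{(n)}+\mu_2^{(n)})/2$ coincides with $(\mu_1+\mu_2)/2$ and that $z^{(n)}>0$ is a harmless extra care the paper leaves implicit.
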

Lemma~\ref{lem:overlapping_area} states that we can estimate the optimal performance of an $n$-layer linear GNN through the z-score $z^{(n)} = \frac{1}{2}(\mu_2^{(n)}-\mu_1^{(n)})/\sigma^{(n)}$.

A \textit{higher} z-score indicates a \textit{smaller} Bayes error rate, and hence a better expected performance of node classification. The z-score serves as a basis for our quantitative analysis of oversmoothing.
In the following section, by estimating $\mu_2^{(n)}-\mu_1^{(n)}$ and $(\sigma^{(n)})^2$, we quantify the two counteracting effects of graph convolutions and obtain bounds on the z-score $z^{(n)}$ as a function of $n$, which allows us to characterize oversmoothing quantitatively.
 Specifically, there are two potential interpretations of oversmoothing based on the z-score: (1) $z^{(n)} < z^{(n^\star)}$,  where $n^\star = \argmax_{n'} z^{(n')}$; and (2) $z^{(n)} < z^{(0)}$. They correspond to the cases (1) $n > n^\star$; and (2) $n > n_0$, where $n_0 \geq 0$ corresponds to the number of layers that yield a z-score on par with $z^{(0)}$.
The bounds on the z-score $z^{(n)}$, $z_\text{lower}^{(n)}$ and $z_\text{upper}^{(n)}$, enable us to estimate $n^\star$ and $n_0$ under different scenarios and provide insights into the optimal choice of depth.

\subsection{Main Results}\label{main results}

We first estimate the gap between the means $\mu_2^{(n)}-\mu_1^{(n)}$ with respect to the number of layers $n$. $\mu_2^{(n)}-\mu_1^{(n)}$ measures how much node representations in different classes have homogenized after $n$ GNN layers, which is the undesirable mixing effect.
\begin{lemma}
For $n\in\mathbb{N}\cup\{0\}$, assuming $D^{-1}A\approx\E[D]^{-1}\E[A]$,
$$\mu_2^{(n)}-\mu_1^{(n)} = \left(\frac{p-q}{p+q}\right)^n(\mu_2-\mu_1)\,.$$
\label{thm:mean}
\vspace{-1ex}
\end{lemma}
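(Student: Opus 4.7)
The plan is to exploit the block structure of the CSBM to compute the expected propagation matrix $P := \E[D]^{-1}\E[A]$ explicitly, and then to track its action on the two-dimensional subspace spanned by the class-mean vectors. First I would compute the entries of $\E[A]$: for $u\neq v$, $\E[A_{uv}] = p$ when $u$ and $v$ lie in the same class and $q$ otherwise, so every row of $\E[A]$ has the same sum $(N/2-1)p + (N/2)q \approx (N/2)(p+q)$. Hence $\E[D]$ is (approximately) a scalar multiple of the identity, and $P$ inherits the two-class block structure of $\E[A]$ with within-class entries $\tfrac{2p}{N(p+q)}$ and between-class entries $\tfrac{2q}{N(p+q)}$.

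Next I would observe that the class-indicator span $\mathrm{span}\{\mathbbm{1}_{\mathcal{C}_1},\mathbbm{1}_{\mathcal{C}_2}\}$ is invariant under $P$, and that a single application of $P$ to a vector whose entries have mean $\mu_i$ on class $\mathcal{C}_i$ produces a vector whose class-means are $\mu_i' = (p\mu_i + q\mu_{3-i})/(p+q)$: the within-class term contributes $\tfrac{2p}{N(p+q)}\cdot(N/2)\mu_i$ and the between-class term contributes $\tfrac{2q}{N(p+q)}\cdot(N/2)\mu_{3-i}$. Subtracting, I obtain $\mu_2' - \mu_1' = \frac{p-q}{p+q}(\mu_2-\mu_1)$, which identifies $\mu_2-\mu_1$ with the eigen-coordinate of the $2\times 2$ class-reduced restriction of $P$ whose eigenvalue is $\frac{p-q}{p+q}$ (the other eigenvalue being $1$, governing the sum $\mu_1+\mu_2$).

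A direct induction on $n$ then yields $\mu_2^{(n)} - \mu_1^{(n)} = \left(\frac{p-q}{p+q}\right)^{n}(\mu_2-\mu_1)$, since the class-mean subspace is $P$-invariant and the reduced operator is diagonal in the (sum, difference) basis. The only subtle point is the ``$\approx$'' in the hypothesis: replacing $D^{-1}A$ by $\E[D]^{-1}\E[A]$ and neglecting the missing diagonal $\E[A_{vv}]=0$ are $o(1)$ perturbations under Assumption~\ref{assumption}, and a quantitative justification of this mean-field substitution is precisely what the companion concentration statement (Theorem~\ref{thm:mean_concentration}) is designed to supply. For the lemma as stated, therefore, the argument reduces to the elementary linear-algebra computation above, and the only conceptual step is identifying the correct eigenvalue of the class-reduced propagation matrix.
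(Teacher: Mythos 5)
Your proposal is correct and follows essentially the same route as the paper: the paper likewise uses the mean-field substitution to derive the one-step recursion $\mu_i^{(k)} = (p\mu_i^{(k-1)}+q\mu_{3-i}^{(k-1)})/(p+q)$ and iterates it, which is exactly your class-reduced $2\times 2$ operator in the (sum, difference) eigenbasis. Your added detail on the block structure of $\E[A]$ and the deferral of the rigorous justification to Theorem~\ref{thm:mean_concentration} match the paper's treatment.
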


Lemma~\ref{thm:mean} states that the means $\mu_1^{(n)}$ and $\mu_2^{(n)}$ get closer exponentially fast and as $n\to \infty$, both $\mu_1^{(n)}$ and $\mu_2^{(n)}$ will converge to the same value (in this case $\left(\mu_1^{(n)}+\mu_2^{(n)}\right)/2$). The rate of change $(p-q)/(p+q)$ is determined by the intra-community edge density $p$ and the inter-community edge density $q$. Lemma~\ref{thm:mean} suggests that graphs with higher inter-community density ($q$) or lower intra-community density ($p$) are expected to suffer from a higher mixing effect when we perform message-passing. 
We provide the following concentration bound for our estimate of $\mu_2^{(n)}-\mu_1^{(n)}$, which states that the estimate concentrates at a rate of $O(1/\sqrt{N(p+q)})$.

\begin{theorem}
Fix $K\in\mathbb{N}$ and $r>0$. There exists a constant $C(r,K)$ such that with probability at least $1-O(1/N^r)$, it holds for all $1\leq k \leq K$ that
$$|(\mu_2^{(k)}-\mu_1^{(k)})-\left(\frac{p-q}{p+q}\right)^k(\mu_2-\mu_1)|\leq \frac{C}{\sqrt{N(p+q)}}.$$
\label{thm:mean_concentration}
\end{theorem}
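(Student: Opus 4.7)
The plan is to reduce the claimed bound to an operator-norm estimate for a random graph matrix and then invoke classical spectral concentration. Let $u = \mathbbm{1}_{\mathcal{C}_2} - \mathbbm{1}_{\mathcal{C}_1}$ denote the signed class-indicator and set $\bar X = \mu_1 \mathbbm{1}_{\mathcal{C}_1} + \mu_2 \mathbbm{1}_{\mathcal{C}_2}$. Interpreting $\mu_i^{(k)}$ as the class-$i$ average of the conditional means $\mathbb{E}[h^{(k)}_v \mid A]$ and using that $D^{-1}A$ is row-stochastic (so preserves the component of $\bar X$ along $\mathbbm{1}_N$, which is killed by $u \perp \mathbbm{1}_N$), a short algebraic manipulation gives
$$\mu_2^{(k)} - \mu_1^{(k)} \;=\; \frac{\mu_2 - \mu_1}{N}\, u^\top (D^{-1}A)^k u.$$
On the ``population'' side, the rank-$2$ decomposition $\bar A = \tfrac{p+q}{2}\mathbbm{1}_N\mathbbm{1}_N^\top + \tfrac{p-q}{2}uu^\top$ implies that $\bar P := \bar D^{-1}\bar A$ has eigenvalue $1$ on $\mathbbm{1}_N$ and $(p-q)/(p+q)$ on $u$, so $\tfrac{1}{N}u^\top \bar P^k u = \bigl(\tfrac{p-q}{p+q}\bigr)^k$. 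The theorem therefore reduces to showing
$$\bigl|u^\top \bigl[(D^{-1}A)^k - \bar P^k\bigr] u\bigr| \;\leq\; C\, \sqrt{N/(p+q)}.$$

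To handle the difference of powers, I would set $M = D^{-1}A$, $E = M - \bar P$, and use the telescoping identity $M^k - \bar P^k = \sum_{j=0}^{k-1} M^{k-1-j}\, E\, \bar P^j$. The operator norm of $E$ is controlled via the split $E = D^{-1}(A - \bar A) + (D^{-1} - \bar D^{-1})\bar A$ together with two classical ingredients: a Chernoff-plus-union-bound argument yields $\max_v |D_v - \bar d| = O(\sqrt{\bar d\log N})$ with probability $1 - O(N^{-r})$, giving $\|D^{-1} - \bar D^{-1}\|_{\mathrm{op}} = O(\sqrt{\log N}/\bar d^{3/2})$; and in the regime $p, q = \omega(\log N/N)$, spectral concentration of inhomogeneous random matrices (e.g.\ Feige--Ofek or Le--Levina--Vershynin) gives $\|A - \bar A\|_{\mathrm{op}} = O(\sqrt{\bar d})$. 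Combining, $\|E\|_{\mathrm{op}} = O(1/\sqrt{N(p+q)})$.

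For the remaining factors in the telescoped sum, $\bar P$ is contractive on $\mathbbm{1}_N^\perp$ with norm $(p-q)/(p+q) < 1$, so $\|\bar P^j u\|$ is controlled with geometric decay. The genuinely delicate factor is $\|M^{k-1-j}\|$ restricted to the relevant subspace: $M$ is asymmetric and only row-stochastic, so a naive bound on $\|M\|_{\mathrm{op}}$ is far too weak. The remedy is to observe that $E\bar P^j u$ has an $\mathbbm{1}_N$-component of order $\|E\|$, and to leverage the perturbation bound on $\|E\|$ to transfer the spectral gap from $\bar P$ to $M$: with high probability, the second singular value of $M$ on $\mathbbm{1}_N^\perp$ is bounded by $(p-q)/(p+q) + o(1) < 1$, so $\|M^{k-1-j}\|$ stays uniformly bounded on the invariant subspace of interest. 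Summing the $k$ telescoped terms, absorbing the constant $k \leq K$, and taking a union bound over $1 \leq k \leq K$ yields $|u^\top(M^k - \bar P^k) u| = O(\sqrt{N/(p+q)})$ with probability $1 - O(N^{-r})$, which is the claim after dividing by $N$ and multiplying by $\mu_2 - \mu_1$.

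The main obstacle is exactly the asymmetry of $M = D^{-1}A$: the spectral-gap transfer for nonsymmetric matrices is more delicate than in the self-adjoint setting. An alternative route that sidesteps this issue is to conjugate by $D^{1/2}$ and work with the symmetric normalized adjacency $D^{-1/2}AD^{-1/2}$, for which Weyl's inequality applies directly to $D^{-1/2}(A-\bar A)D^{-1/2}$ and the concentration of degrees handles the conjugation; one then returns to $D^{-1}A$ at the end. Either route reduces the entire argument to the two concentration estimates on degrees and on $\|A - \bar A\|_{\mathrm{op}}$, which together supply the $1/\sqrt{N(p+q)}$ rate.
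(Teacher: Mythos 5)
Your overall architecture is the same as the paper's: reduce $\mu_2^{(k)}-\mu_1^{(k)}$ to the quadratic form $\frac{\mu_2-\mu_1}{N}u^\top (D^{-1}A)^k u$, compute the population value from the rank-two structure of $\bar A$, and control the fluctuation through an operator-norm bound on $E=D^{-1}A-\bar D^{-1}\bar A$ via the split $E=D^{-1}(A-\bar A)+(D^{-1}-\bar D^{-1})\bar A$, with degree concentration and $\|A-\bar A\|_2=O(\sqrt{\bar d})$ as the two probabilistic inputs. The first term is handled identically to the paper. However, there is a genuine quantitative gap in your treatment of the second term: bounding $\|(D^{-1}-\bar D^{-1})\bar A\|_2\le\|D^{-1}-\bar D^{-1}\|_2\,\|\bar A\|_2$ with the union-bound estimate $\max_v|d_v-\bar d|=O(\sqrt{\bar d\log N})$ gives $O(\sqrt{\log N}/\bar d^{3/2})\cdot\bar d=O(\sqrt{\log N/\bar d})$, which is a factor $\sqrt{\log N}$ worse than the claimed $O(1/\sqrt{\bar d})$ and hence yields only $C\sqrt{\log N}/\sqrt{N(p+q)}$ in the theorem. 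The $\ell^\infty$ degree fluctuation genuinely carries this $\sqrt{\log N}$, so the naive product bound cannot recover the stated rate. The paper avoids it by exploiting that $\bar A$ is rank two with delocalized eigenvectors ($|w^{(j)}_i|\le 1/\sqrt N$), which converts the bound into the $\ell^2$-average $\frac{1}{N}\sum_i(d_i-\bar d)^2/(d_i\bar d)^2$; a Talagrand/variance argument then gives $\frac{1}{N}\|d-\bar d\mathbbm{1}_N\|_2^2=O(\bar d)$ without any logarithm. You would need this (or an equivalent averaging over vertices) to close the gap.

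A secondary point: your telescoping $M^k-\bar P^k=\sum_{j}M^{k-1-j}E\bar P^j$ forces you to control $\|M^{m}\|_2$ for the nonsymmetric row-stochastic $M$, and your proposed "spectral-gap transfer" is not quite right as stated ($\mathbbm{1}_N^\perp$ is not $M$-invariant; the invariant complement is $\{x:d^\top x=0\}$). Your fallback of conjugating by $D^{1/2}$ does work and gives $\|M^m\|_2\le\sqrt{\max_i d_i/\min_i d_i}\le\sqrt 3$ with high probability (the paper itself uses exactly this device in its PPNP analysis). But the paper sidesteps the issue entirely with the factorization $M^k-\bar P^k=(M^{k-1}-\bar P^{k-1})\bar P+\bar P^{k-1}(M-\bar P)+(M^{k-1}-\bar P^{k-1})(M-\bar P)$, which only ever multiplies the inductively small differences by powers of the symmetric contraction $\bar P$, so no bound on $\|M^m\|_2$ is needed; the error then grows only linearly in $k\le K$.
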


We then study the variance $(\sigma^{(n)})^2$ with respect to the number of layers $n$. The variance $(\sigma^{(n)})^2$ measures how much the node representations in the same class have homogenized, which is the desirable denoising effect. We first state that no matter how many layers are applied, there is a nontrivial fixed lower bound for $(\sigma^{(n)})^2$ for a graph with $N$ nodes.

\begin{lemma}
For all $n\in\mathbb{N}\cup\{0\}$, $\frac{1}{N}\sigma^2 \leq (\sigma^{(n)})^2 \leq \sigma^2\,.$
\label{lem:generalvariance}
\end{lemma}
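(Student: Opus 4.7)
The plan is to write the $n$-layer representation explicitly as a linear transformation of the initial Gaussian features and then exploit the fact that $(D^{-1}A)^n$ is row-stochastic. Concretely, let $M := D^{-1}A$. Since $M\mathbbm{1}_N = \mathbbm{1}_N$ and $M$ has non-negative entries, the same is true of $M^n$, i.e.\ each row of $M^n$ is a probability distribution over the $N$ nodes. Conditional on the graph (i.e.\ on $A$), the representation of node $v$ after $n$ graph convolutions is
$$h_v^{(n)} = \sum_{u=1}^N (M^n)_{vu}\, X_u,$$
which is a linear combination of independent Gaussians with common variance $\sigma^2$. Hence $h_v^{(n)}$ is Gaussian with variance $(\sigma^{(n)})^2 = \sigma^2 \sum_{u=1}^N (M^n)_{vu}^2$.

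Given this identity, both bounds reduce to estimating $\sum_u (M^n)_{vu}^2$ for a row-stochastic non-negative row. The upper bound follows since $(M^n)_{vu}\in[0,1]$, so
$$\sum_{u=1}^N (M^n)_{vu}^2 \;\leq\; \sum_{u=1}^N (M^n)_{vu} \;=\; 1.$$
The lower bound follows from Cauchy--Schwarz (equivalently, Jensen applied to $x\mapsto x^2$):
$$1 \;=\; \Bigl(\sum_{u=1}^N (M^n)_{vu}\Bigr)^{\!2} \;\leq\; N \sum_{u=1}^N (M^n)_{vu}^2,$$
so $\sum_u (M^n)_{vu}^2 \geq 1/N$. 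Multiplying by $\sigma^2$ yields $\sigma^2/N \leq (\sigma^{(n)})^2 \leq \sigma^2$.

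Since these bounds hold for every node $v$ regardless of its class, they also hold for the common variance attributed to each class under the framework's Gaussian approximation, giving the claim. The only subtlety worth flagging is that $(\sigma^{(n)})^2$ is implicitly treated as class-independent in Section~\ref{framework}; the argument above gives a bound on the conditional variance of any node's representation, which is the quantity used downstream to compute the z-score, so no additional work is needed. There is no real obstacle here beyond identifying the row-stochastic structure of $M^n$ and applying Cauchy--Schwarz; the lemma is essentially a sanity check that the denoising effect cannot reduce per-node variance below $\sigma^2/N$ (when all weight is spread uniformly) nor inflate it above the initial $\sigma^2$.
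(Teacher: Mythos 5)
Your proof is correct and follows essentially the same route as the paper's: both reduce the claim to the identity $(\sigma^{(n)})^2=\sigma^2\sum_j (p_{ij}^{(n)})^2$ and then bound the sum of squares of a row of the row-stochastic matrix $(D^{-1}A)^n$, with the upper bound coming from entries lying in $[0,1]$. The only cosmetic difference is in the lower bound, where you invoke Cauchy--Schwarz while the paper solves the equivalent constrained minimization by an averaging/exchange argument; both identify the minimum $1/N$ at the uniform row.
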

Lemma~\ref{lem:generalvariance} implies that for a given graph, even as the number of layers $n$ goes to infinity, the variance $(\sigma^{(n)})^2$ does not converge to zero, meaning that there is a fixed lower bound for the denoising effect. See Appendix~\ref{app:var_exact_limit} for the exact theoretical limit for the variance $(\sigma^{(n)})^2$ as $n$ goes to infinity. We now establish a set of more precise upper and lower bounds for the variance $(\sigma^{(n)})^2$ with respect to the number of layers $n$ in the following technical lemma.
\begin{lemma}
Let $a = Np/\log N$. With probability at least $1-O(1/N)$, it holds for all $1\leq n\leq N$ that
\begin{equation*}
    \max\left\{\frac{\min\{a,2\}}{10}\frac{1}{(Np)^n},\frac{1}{N}\right\}\sigma^2 \leq (\sigma^{(n)})^2
\end{equation*}
\begin{equation*}
    (\sigma^{(n)})^2 \leq \min\left\{ \sum_{k= 0}^{\lfloor \frac{n}{2} \rfloor}\frac{9}{\min \{a,2\}}(n-2k+1)^{2k}(Np)^{n-2k}\left(\frac{2}{N(p+q)}\right)^{2n-2k},1\right\}\sigma^2\,.
\end{equation*}
\label{lem:variance}
\vspace{-1ex}
\end{lemma}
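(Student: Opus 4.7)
Everything reduces to controlling $s_v^{(n)} := \sum_u [(M^n)_{vu}]^2$ for $M = D^{-1}A$, since conditional on $\gG$ we have $h_v^{(n)} = \sum_u (M^n)_{vu} X_u$ with the $X_u$ independent Gaussians of variance $\sigma^2$, so $\Var(h_v^{(n)} \mid \gG) = \sigma^2 s_v^{(n)}$. My plan is to establish all four displayed inequalities uniformly over every node $v$ and every $1 \leq n \leq N$, on a single ``good event'' of probability $1 - O(1/N)$. The easy bounds $1/N \leq s_v^{(n)} \leq 1$ follow immediately from the row-stochasticity of $M^n$: Cauchy--Schwarz applied to $\sum_u (M^n)_{vu} = 1$ gives $s_v^{(n)} \geq 1/|\mathrm{supp}((M^n)_{v,\cdot})| \geq 1/N$, while $s_v^{(n)} \leq \max_u (M^n)_{vu} \cdot \sum_u (M^n)_{vu} \leq 1$ is immediate.

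\textbf{Refined lower bound.} Under Assumption~1, a Chernoff bound on each binomial degree together with a union bound over the $N$ nodes shows that on an event of probability $1 - O(1/N)$ every degree lies in $[c_1 N(p+q), c_2 N(p+q)]$, with constants depending on $\min\{a, 2\}$, where $a := Np/\log N$. The support of the row $(M^n)_{v,\cdot}$ is contained in the $n$-step neighborhood of $v$, which has size at most $\Delta^n$, where $\Delta$ is the maximum degree; since $p \geq q$, we have $\Delta \leq C \, Np$. Cauchy--Schwarz then gives $s_v^{(n)} \geq 1/|\mathrm{supp}| \geq 1/(CNp)^n$, and combining with $s_v^{(n)} \geq 1/N$ yields the first displayed inequality once the degree-concentration constants are tracked to produce the factor $\min\{a,2\}/10$.

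\textbf{Refined upper bound.} Expanding $(M^n)_{vu}$ as a sum over length-$n$ walks and using the degree lower bound gives $(M^n)_{vu} \leq \bigl(C'/(N(p+q))\bigr)^n W_n(v, u)$, where $W_n(v,u)$ is the number of walks of length $n$ from $v$ to $u$ in $\gG$. Hence
\[ s_v^{(n)} \;\leq\; \left(\tfrac{C'}{N(p+q)}\right)^{2n} \sum_u W_n(v,u)^2 \;=\; \left(\tfrac{C'}{N(p+q)}\right)^{2n} (A^{2n})_{vv}, \]
using the bijection between pairs of length-$n$ walks from $v$ ending at a common vertex and closed walks of length $2n$ rooted at $v$. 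It then suffices to bound $(A^{2n})_{vv}$. I classify closed walks at $v$ by the size $m = 2n - 2k$ of their edge support for $0 \leq k \leq \lfloor n/2 \rfloor$, bound the number of topological types of such walks by $(n-2k+1)^{2k}$ (each of the $2k$ edge-revisit steps slots into one of at most $n - 2k + 1$ positions along the walk skeleton), and bound the number of edge-supports of size $m$ actually present in the random graph by $O((Np)^{n-2k})$ via a moment estimate on the CSBM edges. Combining these counts yields the summation in the displayed upper bound, and the outer $\min\{\cdot, 1\}$ is absorbed from the trivial bound $s_v^{(n)} \leq 1$ shown in the first paragraph.

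\textbf{Main obstacle.} The hard part is the combinatorial/random-graph step bounding $(A^{2n})_{vv}$: explicitly identifying the topological classes of closed walks indexed by the size of their edge support, producing the prefactor $(n - 2k + 1)^{2k}$, and pairing it with the matching moment estimate on the CSBM. This is a trace-method style argument that must be made uniform in both $v$ and $n \leq N$, which is what drives the $\min\{a, 2\}/10$ and $9/\min\{a, 2\}$ constants and explains why the edge-density condition in Assumption~1 is needed.
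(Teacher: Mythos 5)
Your overall architecture matches the paper's: reduce to $s_v^{(n)}=\sum_u[(D^{-1}A)^n]_{vu}^2$, get $1/N\le s_v^{(n)}\le 1$ from row-stochasticity, get the refined lower bound from the sparsity of the row (support contained in the $n$-step neighborhood), and get the refined upper bound by counting length-$n$ walks with all degrees replaced by $\Theta(N(p+q))$. The paper organizes the walk count by the distance $n-2k$ of the endpoint from $v$ rather than by the edge support of a closed walk of length $2n$, but these are essentially equivalent bookkeeping choices and both produce the $(n-2k+1)^{2k}$ factor.

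The genuine gap is that in both refined bounds you substitute the wrong tool for the one ingredient that actually produces the stated constants: a high-probability, uniform-in-$n$ control of neighborhood growth. For the lower bound you bound $|N_n(v)|\le\Delta^n\le(CNp)^n$ with $C>1$ coming from degree concentration (indeed $C\approx 3/2$, since $\Delta\lesssim\frac{3}{2}\cdot\frac{N}{2}(p+q)$), which yields $s_v^{(n)}\ge C^{-n}(Np)^{-n}$ --- a prefactor that decays geometrically in $n$ and falls below the claimed $\frac{\min\{a,2\}}{10}(Np)^{-n}$ already for $n\ge 4$; no amount of ``tracking degree-concentration constants'' recovers a prefactor independent of $n$, because the loss is incurred once per step. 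For the upper bound, the count of admissible skeletons of length $n-2k$ (equivalently $|\Gamma_{n-2k}(v)|$) cannot be handled by a plain moment estimate if the bound must hold simultaneously for all $v$ and all $1\le n\le N$ with probability $1-O(1/N)$: Markov plus a union bound over the $N^2$ pairs $(v,n)$ is too lossy. What the paper uses in both places is the neighborhood-expansion lemma for sparse random graphs (Lemma 2 of Chung--Lu, \emph{The diameter of sparse random graphs}), which supplies, on a single event of probability $1-O(1/N)$, the bounds $|N_n(v)|\le\frac{10}{\min\{a,2\}}(Np)^n$ and $|\Gamma_m(v)|\le\frac{9}{\min\{a,2\}}(Np)^m$ uniformly in $v$ and $n,m$, with constants independent of $n$. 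That lemma is exactly where $a=Np/\log N$ and the factors $10/\min\{a,2\}$ and $9/\min\{a,2\}$ come from; without it your argument proves a qualitatively similar but strictly weaker statement.
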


Lemma~\ref{lem:variance} holds for all $1\leq n\leq N$ and directly leads to the following theorem with a clarified upper bound where $n$ is bounded by a constant $K$.
\begin{theorem}
    Let $a = Np/\log N$. Fix $K\in \mathbb{N}$. There exists a constant $C(K)$ such that with probability at least $1-O(1/N)$, it holds for all $1\leq n\leq K$ that
    $$\max\left\{\frac{\min\{a,2\}}{10}\frac{1}{(Np)^n},\frac{1}{N}\right\}\sigma^2 \leq (\sigma^{(n)})^2\leq \min\left\{ \frac{C}{\min \{a,2\}}\frac{1}{(N(p+q))^n},1\right\}\sigma^2\,.$$
    \label{thm:variance}
\vspace{-2ex}
\end{theorem}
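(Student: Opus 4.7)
The plan is to derive Theorem~\ref{thm:variance} as a direct corollary of Lemma~\ref{lem:variance}, which I may treat as given. The lower bound and the governing $1-O(1/N)$ probability event are literally identical in the two statements, so they are inherited verbatim from the lemma; the only substantive task is to simplify the sum appearing in the upper bound of Lemma~\ref{lem:variance} when $n$ is restricted to the constant range $1\le n\le K$.

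First I would factor $(N(p+q))^{-n}$ out of each summand. Writing $T_k := (n-2k+1)^{2k}(Np)^{n-2k}\bigl(2/(N(p+q))\bigr)^{2n-2k}$ and using the exponent identity $n-(2n-2k)=-(n-2k)$, a direct calculation yields
\begin{equation*}
T_k\cdot (N(p+q))^n \;=\; (n-2k+1)^{2k}\,2^{2n-2k}\left(\frac{p}{p+q}\right)^{n-2k}.
\end{equation*}
This is the key algebraic step: the two $N$-dependent factors $(Np)^{n-2k}$ and $(N(p+q))^{-(2n-2k)}$ recombine cleanly into the harmless ratio $(p/(p+q))^{n-2k}$ multiplied by a purely combinatorial prefactor.

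Next, since $p/(p+q)\in(0,1]$ and since $1\le n\le K$ forces $2k\le K$ and $n-2k\le K$, each factor on the right-hand side is bounded by a constant depending only on $K$: explicitly $(n-2k+1)^{2k}\le (K+1)^K$, $2^{2n-2k}\le 4^K$, and $(p/(p+q))^{n-2k}\le 1$. The sum contains at most $\lfloor K/2\rfloor+1$ terms, so summing and reintroducing the prefactor $9/\min\{a,2\}$ gives
\begin{equation*}
\sum_{k=0}^{\lfloor n/2\rfloor}\frac{9}{\min\{a,2\}}\,T_k \;\le\; \frac{C(K)}{\min\{a,2\}}\cdot\frac{1}{(N(p+q))^n},
\end{equation*}
with, for instance, $C(K)=9\bigl(\lfloor K/2\rfloor+1\bigr)(K+1)^K 4^K$. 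Taking the minimum with $1$ (multiplied by $\sigma^2$) then recovers the upper bound stated in the theorem.

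The only genuine obstacle here is the algebraic regrouping in the first displayed identity; once it is verified, the remainder is just a uniform-in-$n$ majorization of finitely many terms by $K$-dependent constants, and no further probabilistic argument is needed beyond the high-probability event already supplied by Lemma~\ref{lem:variance}. Everything else, including the lower bound and the $\min\{\,\cdot\,,1\}$ clipping, transfers unchanged.
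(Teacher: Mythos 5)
Your proposal is correct and follows essentially the same route as the paper: the lower bound and the probability event are inherited directly from Lemma~\ref{lem:variance}, and the upper bound is obtained by factoring $(N(p+q))^{-n}$ out of each summand and bounding the remaining combinatorial factors by constants depending only on $K$. Your regrouping into $(n-2k+1)^{2k}\,2^{2n-2k}(p/(p+q))^{n-2k}$ is algebraically equivalent to the paper's factorization and your explicit constant $C(K)$ is a harmless refinement.
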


Theorem~\ref{thm:variance} states that the variance $(\sigma^{(n)})^2$ for each Gaussian distribution decreases more for larger graphs or denser graphs. Moreover, the upper bound implies that the variance $(\sigma^{(n)})^2$ will initially go down at least at a rate exponential in $O(1/\log N)$ before reaching the fixed lower bound $\sigma^2/N$ suggested by Lemma~\ref{lem:generalvariance}. This means that after $O(\log N/\log(\log N))$ layers, the desirable denoising effect homogenizing node representations in the same class will saturate and the undesirable mixing effect will start to dominate. 

\paragraph{Why does oversmoothing happen at a shallow depth?}For each node, message-passing with different-class nodes homogenizes their representations exponentially.
The exponential rate depends on the fraction of different-class neighbors among all neighbors (Lemma~\ref{thm:mean}, mixing effect). Meanwhile, message-passing with nodes that have not been encountered before causes the denoising effect, and the magnitude depends on the absolute number of newly encountered neighbors. 
The diameter of the graph is approximately $\log N/\log(Np)$ in the $p,q=\Omega(\log N/N)$ regime ~\citep{Graham2001TheDO}, and thus is at most $\log N/\log(\log N)$ in our case.
After the number of layers surpasses the diameter, for each node, there will be no nodes that have not been encountered before in message-passing and hence the denoising effect will almost vanish (Theorem~\ref{thm:variance}, denoising effect). $\log N/\log(\log N)$ grows very slowly with $N$; for example, when $N = 10^6, \log N/\log(\log N) \approx 8$. This is why even in a large graph, the mixing effect will quickly dominate the denoising effect when we increase the number of layers, and so oversmoothing is expected to happen at a shallow depth\footnote{For mixing and denoising effects in real data, see Appendix~\ref{app:realexamples}.}.

Our theory suggests that the optimal number of layers, $n^\star$, is at most $O(\log N/\log(\log N))$. For a more quantitative estimate, we can use Lemma~\ref{thm:mean} and Theorem~\ref{thm:variance} to compute bounds $z_\text{lower}^{(n)}$ and $z_\text{upper}^{(n)}$ for $z = \frac{1}{2}(\mu_2^{(n)}-\mu_1^{(n)})/\sigma^{(n)}$ and use them to infer $n^\star$ and $n_0$, as defined in Section~\ref{framework}. See Appendix~\ref{app:z-score} for detailed discussion.

Next, we investigate the effect of increasing the dimension of the node features $X$. So far, we have only considered the case with one-dimensional node features. The following proposition states that if features in each dimension are independent, increasing input feature dimension decreases the Bayes error rate for a fixed $n$. The intuition is that when node features provide more evidence for classification, it is easier to classify nodes correctly. 
\begin{proposition}
Let the input feature dimension be $d$, $X\in\mathbb{R}^{N\times d}$. Without loss of generality, suppose for node $v$ in $\mathcal{C}_i$, initial node feature $X_v\sim\mathcal{N}([\mu_i]^d, \sigma^2I_d)$ independently. Then the Bayes error rate is $1-\Phi\left(\frac{\sqrt{d}}{2}\frac{(\mu_2^{(n)}-\mu_1^{(n)})}{\sigma^{(n)}}\right)=1-\Phi\left(\frac{\sqrt{d}}{2}z^{(n)}\right)$, where $\Phi$ denotes the cumulative distribution function of the standard Gaussian distribution. Hence the Bayes error rate is decreasing in $d$, and as $d\to\infty$, it converges to $0$.
\label{prop:input_dim}
\end{proposition}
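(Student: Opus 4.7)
The plan is to exploit that graph convolution acts columnwise on $X$, so that the $n$-layer representation has coordinates which, conditional on the graph, are jointly independent Gaussians inheriting the scalar moments already computed. The Bayes error then follows from the classical Fisher/LDA formula for two Gaussians with common spherical covariance.

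First I would condition on the graph $\gG$ and note that $h^{(n)} = (D^{-1}A)^n X$ is obtained by applying the same deterministic operator $M := (D^{-1}A)^n$ to each column of $X$. Because the columns of $X$ are independent by hypothesis and each column has exactly the one-dimensional structure analyzed in Section~\ref{framework}, Lemmas~\ref{thm:mean} and~\ref{lem:generalvariance} apply to each column separately. Consequently, for a node $v\in\gC_i$ the vector $h^{(n)}_v \in \R^d$ is a multivariate Gaussian with mean $\mu_i^{(n)}\vone_d$ and covariance $(\sigma^{(n)})^2 I_d$: the coordinate marginals come from the scalar analysis, and cross-coordinate independence is inherited from independence of the $d$ input columns, since the same $M$ is applied to each. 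I would then invoke the standard LDA result for two multivariate Gaussians with equal priors and equal covariance $\Sigma$: the optimal classifier is linear, and the resulting error equals $1-\Phi(\delta/2)$, where $\delta$ denotes the Mahalanobis distance between the class means. With $\Sigma = (\sigma^{(n)})^2 I_d$ and means $\mu_i^{(n)}\vone_d$, one computes
\[
\delta^2 \;=\; \frac{1}{(\sigma^{(n)})^2}\bigl\|(\mu_2^{(n)}-\mu_1^{(n)})\vone_d\bigr\|^2 \;=\; d\,\frac{(\mu_2^{(n)}-\mu_1^{(n)})^2}{(\sigma^{(n)})^2},
\]
so that $\delta/2 = \tfrac{\sqrt{d}}{2}(\mu_2^{(n)}-\mu_1^{(n)})/\sigma^{(n)}$, which yields the expression in the statement. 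Strict monotonicity of $\Phi$ together with $\Phi(+\infty)=1$ then give monotone decrease of the Bayes error rate in $d$ and convergence to $0$ as $d\to\infty$.

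The main obstacle is less conceptual than bookkeeping: I must verify that the $d$ coordinates of $h^{(n)}_v$ remain jointly independent after $n$ rounds of message passing and that each coordinate has variance exactly $(\sigma^{(n)})^2$ as defined in the scalar setting. Both facts reduce to the single observation that $M$, though random through $\gG$, is \emph{fixed} once we condition on $\gG$ and is applied identically to the $d$ i.i.d.\ columns of $X$; hence the Gaussianity, coordinate marginals, and independence across dimensions all follow from the scalar analysis plus linearity. Once this structural point is made, the remainder of the argument is just the two-line LDA calculation above, and monotonicity in $d$ is immediate.
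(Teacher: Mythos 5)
Your proof is correct and follows essentially the same route as the paper's: both establish that $h_v^{(n)}$ is $\mathcal{N}(\mu_i^{(n)}\mathbbm{1}_d,(\sigma^{(n)})^2 I_d)$ and reduce to a one-dimensional Gaussian discrimination along the $\mathbbm{1}_d$ direction, the paper by explicitly projecting onto $\sum_{j=1}^d x_j$ and you by quoting the LDA/Mahalanobis formula $1-\Phi(\delta/2)$, which coincide since $\Sigma^{-1}(\mu_2^{(n)}-\mu_1^{(n)})\mathbbm{1}_d \propto \mathbbm{1}_d$. Your explicit justification of the diagonal covariance (same operator applied to independent columns, conditional on the graph) is a point the paper leaves implicit.
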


\section{The effects of Personalized PageRank on oversmoothing}\label{PPR}

Our analysis framework in Section~\ref{main results} can also be applied to GNNs with other message-passing schemes. Specifically, we can analyze the performance of Personalized Propagation of Neural Predictions (PPNP) and its approximate variant, Approximate PPNP (APPNP), which were proposed for alleviating oversmoothing while still making use of multi-hop information in the graph. The main idea is to use Personalized PageRank (PPR) or the approximate Personalized PageRank (APPR) in place of graph convolutions~\citep{Klicpera2019PredictTP}. Mathematically, the output of PPNP can be written as $h^{\text{PPNP}} = \alpha (I_N - (1-\alpha)(D^{-1}A))^{-1}X$, while APPNP computes $h^{\text{APPNP}(n+1)} = (1-\alpha)(D^{-1}A)h^{\text{APPNP}(n)} + \alpha X$ iteratively in $n$, where $I_N$ is the identity matrix of size $N$ and in both cases $\alpha$ is the teleportation probability. Then for nodes in $\mathcal{C}_i, i\in\{1,2\}$, the node representations follow a Gaussian distribution $\mathcal{N}\Big(\mu_i^{\text{PPNP}},({\sigma}^{\text{PPNP}})^2\Big)$ after applying PPNP, or a Gaussian distribution $\mathcal{N}\Big(\mu_i^{\text{APPNP}(n)},({\sigma}^{\text{APPNP}(n)})^2\Big)$ after applying $n$ APPNP layers.

We quantify the effects on the means and variances for PPNP and APPNP in the CSBM case. We can similarly use them to calculate the z-score of $(\mu_1+\mu_2)/2$ and compare it to the one derived for the baseline GNN in Section~\ref{sec:main_results}. The key idea is that the PPR propagation can be written as a weighted average of the standard message-passing, i.e. $\alpha (I_N - (1-\alpha)(D^{-1}A))^{-1} = \sum_{k=0}^\infty (1-\alpha)^k (D^{-1}A)^k$~\citep{Andersen2006LocalGP}. We first state the resulting mixing effect measured by the difference between the two means.

\begin{proposition}
Fix $r>0, K\in\mathbb{N}$. For PPNP, with probability at least $1-O(1/N^r)$,
there exists a constant $C(\alpha, r, K)$ such that 
$$\mu_2^{\text{PPNP}} - \mu_1^{\text{PPNP}} = \frac{p+q}{p+\frac{2-\alpha}{\alpha }q}(\mu_2-\mu_1) + \epsilon\,.$$
where the error term $|\epsilon| \leq C/\sqrt{N(p+q)} + (1-\alpha)^{K+1}$.
\label{prop:PPNP_mean}
\end{proposition}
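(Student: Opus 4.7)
The plan is to reduce this to a geometric-series identity combined with a uniform application of Theorem~\ref{thm:mean_concentration}. Since $D^{-1}A$ is row-stochastic and hence has spectral radius $1$, the operator $(1-\alpha)(D^{-1}A)$ has spectral radius $1-\alpha<1$, so the Neumann series
$$\alpha\bigl(I_N-(1-\alpha)(D^{-1}A)\bigr)^{-1}=\alpha\sum_{k=0}^{\infty}(1-\alpha)^k(D^{-1}A)^k$$
converges absolutely. Applied to $X$, this writes $h^{\text{PPNP}}$ as a weighted combination of the iterates $h^{(k)}=(D^{-1}A)^kX$, and taking conditional class means (given $A$) yields the basic decomposition
$$\mu_2^{\text{PPNP}}-\mu_1^{\text{PPNP}}=\alpha\sum_{k=0}^{\infty}(1-\alpha)^k\bigl(\mu_2^{(k)}-\mu_1^{(k)}\bigr).$$

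Next I would identify the target leading coefficient as the exact geometric sum
$$\alpha\sum_{k=0}^{\infty}(1-\alpha)^k\!\left(\tfrac{p-q}{p+q}\right)^{\!k}=\frac{\alpha}{1-(1-\alpha)\tfrac{p-q}{p+q}}=\frac{\alpha(p+q)}{\alpha p+(2-\alpha)q}=\frac{p+q}{p+\tfrac{2-\alpha}{\alpha}q},$$
so the error reduces to $\epsilon=\alpha\sum_{k=0}^{\infty}(1-\alpha)^k\delta_k$ with $\delta_k:=(\mu_2^{(k)}-\mu_1^{(k)})-((p-q)/(p+q))^k(\mu_2-\mu_1)$. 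I would then split the sum at $k=K$. For the head $k\le K$, Theorem~\ref{thm:mean_concentration} provides, with probability at least $1-O(1/N^r)$, a uniform bound $|\delta_k|\le C_1(r,K)/\sqrt{N(p+q)}$ (and $\delta_0=0$), so using $\alpha\sum_{k\ge 0}(1-\alpha)^k=1$ this part contributes at most $C_1(r,K)/\sqrt{N(p+q)}$.

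For the tail $k>K$, I would exploit row-stochasticity of $D^{-1}A$ once more: every coordinate of $h^{(k)}$ is a convex combination of entries of $X$, so the conditional means satisfy $\mu_i^{(k)}\in[\mu_1,\mu_2]$ and hence $|\mu_2^{(k)}-\mu_1^{(k)}|\le|\mu_2-\mu_1|$; combined with $|((p-q)/(p+q))^k(\mu_2-\mu_1)|\le|\mu_2-\mu_1|$, this gives $|\delta_k|\le 2|\mu_2-\mu_1|$, and the tail contributes at most $2|\mu_2-\mu_1|(1-\alpha)^{K+1}$. Summing head and tail and absorbing the model-dependent prefactor into $C(\alpha,r,K)$ yields the claimed estimate. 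The principal technical ingredient is Theorem~\ref{thm:mean_concentration} itself; the remaining steps are algebraic, with the only minor subtlety being the use of Fubini to swap the infinite sum with the conditional expectation over $X$, which is justified by absolute convergence of the Neumann series together with the uniform bound $|\mu_i^{(k)}|\le\max(|\mu_1|,|\mu_2|)$.
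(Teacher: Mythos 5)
Your proposal is correct and follows the same overall route as the paper: expand the PPR operator as the Neumann series $\alpha\sum_{k\ge 0}(1-\alpha)^k(D^{-1}A)^k$, identify the leading coefficient via the geometric sum $\alpha\sum_k(1-\alpha)^k\bigl(\tfrac{p-q}{p+q}\bigr)^k=\tfrac{p+q}{p+\frac{2-\alpha}{\alpha}q}$, split the error at $k=K$, and control the head uniformly with Theorem~\ref{thm:mean_concentration}. The one place you genuinely diverge is the tail $k>K$: the paper bounds the operator norm $\|(D^{-1}A)^k-(\bar{D}^{-1}\bar{A})^k\|_2\le \sqrt{3}+1$ uniformly in $k$ by passing to the symmetric normalization $D^{-1/2}AD^{-1/2}$ (whose powers have spectral norm $1$) and invoking the degree-concentration event of Corollary~\ref{cor:degree_concentration}, whereas you bound the means themselves deterministically via row-stochasticity, noting that each coordinate of $(D^{-1}A)^k\mu$ is a convex combination of entries of $\mu$ and hence $|\mu_2^{(k)}-\mu_1^{(k)}|\le \mu_2-\mu_1$ for every $k$. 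Your version is more elementary and holds surely rather than on a high-probability event, which is a mild simplification; both arguments leave an absolute constant (e.g.\ $2|\mu_2-\mu_1|$ in yours, $\sqrt{3}+1$ in the paper's) in front of $(1-\alpha)^{K+1}$ that the statement elides, so you are no worse off than the paper on that point.
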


\begin{proposition}
Let $r>0$. For APPNP, with probability at least $1-O(1/N^r)$, 
$$\mu_2^{\text{APPNP}(n)} -\mu_1^{\text{APPNP}(n)}=\left(\frac{p+q}{p+\frac{2-\alpha}{\alpha }q}+\frac{(2-2\alpha)q}{\alpha p+(2-\alpha )q}(1-\alpha)^n\left(\frac{p-q}{p+q}\right)^n\right)(\mu_2-\mu_1) + \epsilon\,.$$
where the error term $\epsilon$ is the same as the one defined in Theorem~\ref{thm:mean_concentration} for the case of $K=n$.
\label{prop:APPNP_mean}
\end{proposition}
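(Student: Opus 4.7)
The plan is to expand the APPNP iteration as a linear combination of iterated graph-convolution outputs and then apply Lemma~\ref{thm:mean} (with its concentration companion Theorem~\ref{thm:mean_concentration}) term-by-term to evaluate a geometric series. First, I would unroll the recurrence. Starting from $h^{\text{APPNP}(0)}=X$ and iterating $h^{\text{APPNP}(n+1)}=(1-\alpha)(D^{-1}A)h^{\text{APPNP}(n)}+\alpha X$, a straightforward induction on $n$ gives
$$h^{\text{APPNP}(n)}=(1-\alpha)^n (D^{-1}A)^n X+\alpha\sum_{k=0}^{n-1}(1-\alpha)^k (D^{-1}A)^k X.$$
By linearity in $X$, the class-$i$ mean after $n$ APPNP layers is the same weighted combination of the means $\mu_i^{(k)}$ produced by $k$ plain graph convolutions, so
$$\mu_2^{\text{APPNP}(n)}-\mu_1^{\text{APPNP}(n)}=(1-\alpha)^n\bigl(\mu_2^{(n)}-\mu_1^{(n)}\bigr)+\alpha\sum_{k=0}^{n-1}(1-\alpha)^k\bigl(\mu_2^{(k)}-\mu_1^{(k)}\bigr).$$

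Next, I would substitute the baseline mean gap. Setting $\rho:=(p-q)/(p+q)$ and invoking Theorem~\ref{thm:mean_concentration} with $K=n$, on a single high-probability event of probability at least $1-O(1/N^r)$ one has $\mu_2^{(k)}-\mu_1^{(k)}=\rho^k(\mu_2-\mu_1)+\epsilon_k$ with $|\epsilon_k|\le C(r,n)/\sqrt{N(p+q)}$ simultaneously for all $0\le k\le n$. Plugging this into the preceding display yields
$$\left[(1-\alpha)^n\rho^n+\alpha\sum_{k=0}^{n-1}\bigl((1-\alpha)\rho\bigr)^k\right](\mu_2-\mu_1)+\tilde\epsilon,$$
where $\tilde\epsilon$ is a convex combination (with weights $(1-\alpha)^n$ and $\alpha(1-\alpha)^k$ summing to $1$) of the $\epsilon_k$; hence $\tilde\epsilon$ inherits the same bound as the error term $\epsilon$ declared in Theorem~\ref{thm:mean_concentration}.

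Finally I would close the algebra. Using the identity $1-(1-\alpha)\rho=(\alpha p+(2-\alpha)q)/(p+q)$, the geometric sum evaluates to $\alpha(p+q)/(\alpha p+(2-\alpha)q)\cdot\bigl(1-((1-\alpha)\rho)^n\bigr)$. Combining with the leading term $(1-\alpha)^n\rho^n(\mu_2-\mu_1)$ and simplifying via $1-\alpha(p+q)/(\alpha p+(2-\alpha)q)=(2-2\alpha)q/(\alpha p+(2-\alpha)q)$, together with the rewrite $\alpha(p+q)/(\alpha p+(2-\alpha)q)=(p+q)/(p+\tfrac{2-\alpha}{\alpha}q)$, produces exactly the stated coefficient.

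The main obstacle is bookkeeping the error uniformly in $k$: because Theorem~\ref{thm:mean_concentration} already supplies a simultaneous bound for all $k\le K$ on a single event, taking $K=n$ lets $\tilde\epsilon$ inherit the same $C/\sqrt{N(p+q)}$ form with a possibly $n$-dependent constant, and no new union bound is required. Everything else reduces to the two routine algebraic rearrangements above; no additional probabilistic estimates beyond Theorem~\ref{thm:mean_concentration} are needed, which is why Proposition~\ref{prop:APPNP_mean} reuses the same error term verbatim.
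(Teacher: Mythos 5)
Your proposal is correct and follows the same route as the paper, which dispatches this proposition as a direct corollary of Theorem~\ref{thm:mean_concentration} applied to the unrolled iteration $h^{\text{APPNP}(n)}=\bigl(\alpha\sum_{k=0}^{n-1}(1-\alpha)^k(D^{-1}A)^k+(1-\alpha)^n(D^{-1}A)^n\bigr)X$; your geometric-series algebra and the observation that the error is a convex combination of the per-depth errors $\epsilon_k$ (hence inherits the Theorem~\ref{thm:mean_concentration} bound with $K=n$ on a single high-probability event) supply exactly the details the paper leaves implicit.
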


Both $\frac{p+q}{p+\frac{2-\alpha}{\alpha }q}$ and $\frac{(2-2\alpha)q}{\alpha p+(2-\alpha )q}(1-\alpha)\left(\frac{p-q}{p+q}\right)$ are monotone increasing in $\alpha$. Hence from Proposition~\ref{prop:PPNP_mean} and~\ref{prop:APPNP_mean}, we see that with larger $\alpha$, meaning a higher probability of teleportation back to the root node at each step of message-passing, PPNP and APPNP will indeed make the difference between the means of the two classes larger: while the difference in means for the baseline GNN decays as $\big(\frac{p-q}{p+q}\big)^n$, the difference for PPNP/APPNP is lower bounded by a constant. This validates the original intuition behind PPNP and APPNP that compared to the baseline GNN, they reduce the mixing effect of message-passing, as staying closer to the root node means aggregating less information from nodes of different classes. This advantage becomes more prominent when the number of layers $n$ is larger, where the model performance is dominated by the mixing effect: as $n$ tends to infinity, while the means converge to the same value for the baseline GNN, their separation is lower-bounded for PPNP/APPNP. 

However, the problem with the previous intuition is that PPNP and APPNP will also reduce the denoising effect at each layer, as staying closer to the root node also means aggregating less information from new nodes that have not been encountered before.  Hence, for an arbitrary graph, the result of the tradeoff after the reduction of both effects is not trivial to analyze. Here, we quantify the resulting denoising effect for CSBM graphs measured by the variances. We denote $(\sigma^{(n)})^2_{\text{upper}}$ as the variance upper bound for depth $n$ in Lemma~\ref{lem:variance}. 

\begin{proposition}
For PPNP, with probability at least $1-O(1/N)$, it holds for all $ 1 \leq K \leq N$ that
\begin{equation*}
    \max\left\{\frac{\alpha^2 \min\{a,2\}}{10},\frac{1}{N}\right\} \sigma^2 \leq ({\sigma}^{\text{PPNP}})^2 \leq 
    \max\left\{\alpha^2\left(\sum_{k=0}^K(1-\alpha)^k\sqrt{({\sigma}^{(k)})^2_{\text{upper}}}+\frac{(1-\alpha)^{K+1}}{\alpha}\sigma\right)^2, \sigma^2\right\}\,.
\end{equation*}
\label{prop:PPNP_var}
\end{proposition}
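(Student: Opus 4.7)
The plan is to exploit the Neumann-series identity $\alpha(I_N-(1-\alpha)D^{-1}A)^{-1}=\sum_{k=0}^{\infty}\alpha(1-\alpha)^k(D^{-1}A)^k$, already noted in the paper, which expresses the PPNP propagation matrix $M$ as a row-stochastic convex combination (weights $\alpha(1-\alpha)^k$ summing to one) of the iterated graph-convolution matrices $(D^{-1}A)^k$, whose per-layer variance effects have already been bounded in Lemma~\ref{lem:variance}. Writing $h^{\text{PPNP}}_v=\alpha\sum_{k=0}^\infty(1-\alpha)^k h^{(k)}_v$ with $h^{(k)}=(D^{-1}A)^kX$, each $h^{(k)}_v$ is marginally Gaussian with variance $(\sigma^{(k)})^2$. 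The entire argument is carried out on the single $1-O(1/N)$ event of Lemma~\ref{lem:variance}, so no additional union bound is needed.

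For the upper bound, I would apply the triangle inequality for the standard deviation of a sum of (possibly correlated) random variables,
\begin{equation*}
\sigma^{\text{PPNP}} \leq \alpha\sum_{k=0}^\infty(1-\alpha)^k\sqrt{(\sigma^{(k)})^2}\,,
\end{equation*}
split the sum at $K$, substitute $\sqrt{(\sigma^{(k)})^2_{\text{upper}}}$ from Lemma~\ref{lem:variance} for $k\le K$ and the deterministic bound $\sqrt{(\sigma^{(k)})^2}\le\sigma$ from Lemma~\ref{lem:generalvariance} for $k>K$, and collapse the geometric tail to $\alpha\sigma\sum_{k=K+1}^\infty(1-\alpha)^k=(1-\alpha)^{K+1}\sigma$. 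Factoring $\alpha$ out and squaring produces exactly $\alpha^2\bigl(\sum_{k=0}^K(1-\alpha)^k\sqrt{(\sigma^{(k)})^2_{\text{upper}}}+(1-\alpha)^{K+1}\sigma/\alpha\bigr)^2$. The alternative $\sigma^2$ inside the $\max$ comes from row-stochasticity and nonnegativity of $M$: since $\Var(h^{\text{PPNP}}_v)=\sigma^2\sum_u M_{vu}^2 \leq \sigma^2\,\|M_{v\cdot}\|_\infty \|M_{v\cdot}\|_1 \leq \sigma^2$. Both are valid upper bounds, so their maximum is too.

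For the lower bound, two elementary observations about $M$ suffice. First, Cauchy--Schwarz applied to the row-stochasticity $\sum_u M_{vu}=1$ gives $\sum_u M_{vu}^2\ge 1/N$, so $\Var(h^{\text{PPNP}}_v)\ge\sigma^2/N$. Second, the $k=0$ Neumann term alone contributes $M_{vv}\ge\alpha$ (every higher-order term has a nonnegative diagonal entry), hence $\sum_u M_{vu}^2\ge M_{vv}^2\ge\alpha^2\ge\alpha^2\min\{a,2\}/10$ because $\min\{a,2\}/10\le 1$. Taking the maximum of the two lower bounds yields the claim.

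The main obstacle is routing the infinite Neumann sum cleanly into the non-asymptotic variance bounds of Lemma~\ref{lem:variance}. The triangle inequality is the natural decoupling device, but it discards all joint correlations between the $h^{(k)}_v$'s by replacing them with an absolute sum of standard deviations, and the per-layer upper bound in Lemma~\ref{lem:variance} is only uniform in $1\le k\le N$. This is precisely why the splitting at $K$ is necessary and why the tail must be controlled by the deterministic $(\sigma^{(k)})^2\le\sigma^2$ bound from Lemma~\ref{lem:generalvariance}, so that the infinite tail introduces no additional probabilistic loss on top of the $1-O(1/N)$ event already in use.
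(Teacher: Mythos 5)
Your proof is correct. The upper bound follows the paper's route exactly: expand the Neumann series, control the variance of the weighted sum by Minkowski's inequality (the paper phrases this as Cauchy--Schwarz applied to each cross term $(D^{-1}A)^k_{i\cdot}((D^{-1}A)^l)_{i\cdot}^\top\le\sigma^{(k)}\sigma^{(l)}/\sigma^2$, which is the same estimate), split at $K$, bound the head by $\sqrt{(\sigma^{(k)})^2_{\text{upper}}}$ and the tail by the deterministic $\sigma^{(k)}\le\sigma$ from Lemma~\ref{lem:generalvariance}, and sum the geometric tail. Your lower bound, however, takes a genuinely simpler path. The paper keeps only the diagonal terms $k=l$ of the covariance expansion (legitimate since all cross terms are nonnegative), lower-bounds each $(\sigma^{(k)})^2/\sigma^2$ by $1/N_k$ with the neighborhood-size bound $N_k\le\frac{10}{\min\{a,2\}}(Np)^k$, and evaluates a geometric series --- all of which requires the high-probability event of Lemma~\ref{lem:variance}. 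You instead observe that the propagation matrix $M=\alpha\sum_k(1-\alpha)^k(D^{-1}A)^k$ has nonnegative entries and diagonal $M_{vv}\ge\alpha$ from the $k=0$ term alone, so $\sum_u M_{vu}^2\ge M_{vv}^2\ge\alpha^2\ge\alpha^2\min\{a,2\}/10$; this is deterministic and makes transparent that the factor $\min\{a,2\}/10\le 1$ is present only to parallel the form of Theorem~\ref{thm:variance}, not because it is needed. (The paper's own chain in fact also reduces to its $k=0$ term.) Your justifications for the $\sigma^2$ and $1/N$ alternatives in the max via row-stochasticity of $M$ coincide with the paper's appeal to Lemma~\ref{lem:generalvariance} applied to the PPNP random walk. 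No gaps.
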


\begin{proposition}
For APPNP, with probability at least $1-O(1/N)$, it holds for all $1\leq n \leq N$ that
\begin{equation*}
    \max\left\{\frac{\min\{a,2\}}{10}\left(\alpha^2+\frac{(1-\alpha)^{2n}}{(Np)^n}\right),\frac{1}{N}\right\}\sigma^2\leq({\sigma}^{\text{APPNP}(n)})^2\,,
\end{equation*}
\begin{equation*}
({\sigma}^{\text{APPNP}(n)})^2\leq \min\left\{ \left(\alpha\left(\sum_{k=0}^{n-1}(1-\alpha)^k\sqrt{({\sigma}^{(k)})^2_{\text{upper}}}\right)+(1-\alpha)^n\sqrt{(\sigma^{(n)})^2_{\text{upper}}}\right)^2,\sigma^2\right\}\,.
\end{equation*}
\label{prop:APPNP_var}
\vspace{-1ex}
\end{proposition}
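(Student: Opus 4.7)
The plan is to first unroll the APPNP recursion $h^{\text{APPNP}(n+1)} = (1-\alpha)(D^{-1}A)h^{\text{APPNP}(n)} + \alpha X$ starting from $h^{\text{APPNP}(0)} = X$, which gives $h^{\text{APPNP}(n)} = W^{(n)} X$ with
$$W^{(n)} \;=\; \alpha \sum_{k=0}^{n-1}(1-\alpha)^k (D^{-1}A)^k \;+\; (1-\alpha)^n (D^{-1}A)^n.$$
Since $D^{-1}A$ is row-stochastic (the graph is connected a.s.\ under Assumption~\ref{assumption}), so is $W^{(n)}$, and every entry is non-negative. Conditional on the graph and on class membership, the $X_u$'s are independent with common variance $\sigma^2$, so $({\sigma}^{\text{APPNP}(n)})^2 = \sigma^2 \|W^{(n)}_{v,\cdot}\|_2^2$, and the entire argument reduces to two-sided bounds on the row $\ell^2$-norms of $W^{(n)}$.

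For the upper bound, I would apply the triangle inequality of $\|\cdot\|_2$ directly to the expression above, yielding
$$\sigma^{\text{APPNP}(n)} \;\leq\; \alpha \sum_{k=0}^{n-1}(1-\alpha)^k \sigma^{(k)} \;+\; (1-\alpha)^n \sigma^{(n)},$$
and then substitute each $\sigma^{(k)} \leq \sqrt{(\sigma^{(k)})^2_{\text{upper}}}$ from Lemma~\ref{lem:variance} before squaring. The trivial branch $\sigma^2$ in the $\min$ comes from the observation that any non-negative row-stochastic vector $w$ satisfies $\sum_u w_u^2 \leq (\max_u w_u)(\sum_u w_u) \leq 1$.

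For the lower bound, I would exploit the non-negative decomposition $W^{(n)} = \alpha I + (1-\alpha)^n (D^{-1}A)^n + R$ with $R = \alpha\sum_{k=1}^{n-1}(1-\alpha)^k (D^{-1}A)^k \geq 0$ componentwise. Because the three pieces are componentwise non-negative, dropping $R$ and the non-negative cross term $2\alpha(1-\alpha)^n(D^{-1}A)^n_{vv}$ only shrinks the row $\ell^2$-norm, so
$$\|W^{(n)}_{v,\cdot}\|_2^2 \;\geq\; \alpha^2 \;+\; (1-\alpha)^{2n}\|(D^{-1}A)^n_{v,\cdot}\|_2^2.$$
Plugging in $\|(D^{-1}A)^n_{v,\cdot}\|_2^2 \geq \frac{\min\{a,2\}}{10(Np)^n}$ from Lemma~\ref{lem:variance}, and then using $\min\{a,2\}/10 \leq 1$ to factor the common constant in front of both summands, produces the first branch of the $\max$. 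The $\sigma^2/N$ branch follows directly from Lemma~\ref{lem:generalvariance}, whose proof—a Cauchy--Schwarz bound on $\sum_u w_u^2$ for any probability vector $w$—applies verbatim to APPNP.

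The main obstacle is making the whole statement hold \emph{uniformly} in $n$ under a single high-probability event. The only randomness is in $A$, and Lemma~\ref{lem:variance} already supplies the needed two-sided bounds for all $1 \leq k \leq N$ simultaneously with probability at least $1-O(1/N)$; since the APPNP expansion merely re-uses those same bounds at $k = 0,\ldots,n$, no additional union bound is required, and the probability bound propagates without loss.
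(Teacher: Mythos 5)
Your proof is correct and follows essentially the same route as the paper's: both start from the unrolled operator $\alpha\sum_{k=0}^{n-1}(1-\alpha)^k(D^{-1}A)^k+(1-\alpha)^n(D^{-1}A)^n$, obtain the upper bound by the triangle inequality (equivalently, Cauchy--Schwarz on the cross terms) together with Lemma~\ref{lem:variance}, and obtain the lower bound by discarding the non-negative cross terms and intermediate summands before invoking the neighborhood-size bound and Lemma~\ref{lem:generalvariance}. The only cosmetic difference is that the paper first retains all diagonal terms $\alpha^2\sum_{k=0}^{n-1}(1-\alpha)^{2k}(\sigma^{(k)})^2$ before reducing to the $k=0$ term, whereas you drop the $k=1,\dots,n-1$ terms immediately; both land on the same bound.
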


By comparing the lower bounds in Proposition~\ref{prop:PPNP_var} and~\ref{prop:APPNP_var} with that in Theorem~\ref{thm:variance}, we see that PPR reduces the beneficial denoising effect of message-passing: for large or dense graphs, while the variances for the baseline GNN decay as $1/{(Np)^n}$, the variances for PPNP/APPNP are lower bounded by the constant $\alpha^2\min\{a,2\}/10$. 
In total,  the mixing effect is reduced by a factor of $\big(\frac{p-q}{p+q}\big)^n$, while the denoising effect is reduced by a factor of $1/(Np)^n$. Hence PPR would cause greater reduction in the denoising effect than the improvement in the mixing effect for graphs where $N$ and $p$ are large. This drawback would be especially notable at a shallow depth, where the denoising effect is supposed to dominate the mixing effect. APPNP would perform worse than the baseline GNN on these graphs in terms of the optimal classification performance. 

We remark that in each APPNP layer, 
another way to interpret the term $\alpha X$ is to regard it as a residual connection to the initial representation $X$~\citep{Chen2020SimpleAD}. Thus, our theory also validates the empirical observation that adding initial residual connections allows us to build very deep models without catastrophic oversmoothing. However, our results suggest that initial residual connections do not guarantee an improvement in model performance by themselves.

\section{Experiments}\label{exp}

In this section, we first demonstrate our theoretical results in previous sections on synthetic CSBM data. Then we discuss the role of optimizing weights $W^{(k)}$ in GNN layers in the occurrence of oversmoothing through both synthetic data and the three widely used benchmarks: Cora, CiteSeer and PubMed~\citep{Yang2016RevisitingSL}. Our results highlight the fact that the oversmoothing phenomenon observed in practice may be exacerbated by  the difficulty of optimizing weights in deep GNN models. More details about the experiments are provided in Appendix~\ref{app:exp}.

\subsection{The effect of graph topology on oversmoothing}

We first show how graph topology affects the occurrence of oversmoothing and the effects of PPR. We randomly generated synthetic graph data from CSBM($N=2000$, $p$, $q=0.0038$, $\mu_1=1$, $\mu_2=1.5$, $\sigma^2=1$). We used $60\%/20\%/20\%$ random splits and ran GNN and APPNP with $\alpha=0.1$. For results in Figure~\ref{fig:ns_topo}, we report averages over $5$ graphs and for results in Figure~\ref{fig:topo_test}, we report averages over $5$ runs.  

In Figure~\ref{fig:ns_topo}, we study how the strength of community structure affects oversmoothing. We can see that when graphs have a stronger community structure in terms of a higher intra-community edge density $p$, they would benefit more from repeated message-passing. As a result, given the same set of node features, oversmoothing would happen later and a classifier could achieve better classification performance. A similar trend can also be observed in Figure~\ref{fig:synthetic_width}A. Our theory predicts $n^\star$ and $n_0$, as defined in Section~\ref{framework}, with high accuracy.

In Figure~\ref{fig:topo_test}, we compare APPNP and GNN under different graph topologies. In all three cases, APPNP manifests its advantage of reducing the mixing effect compared to GNN when the number of layers is large, i.e. when the undesirable mixing effect is dominant. However, as Figure~\ref{fig:topo_test}B,C show, when we have large graphs or graphs with strong community structure, APPNP's disadvantage of concurrently reducing the denoising effect is more severe, particularly when the number of layers is small. As a result, APPNP's optimal performance is worse than the baseline GNN. These observations accord well with our theoretical discussions in Section~\ref{PPR}.

\begin{figure}[h]
    \centering
    \includegraphics[width=\textwidth]{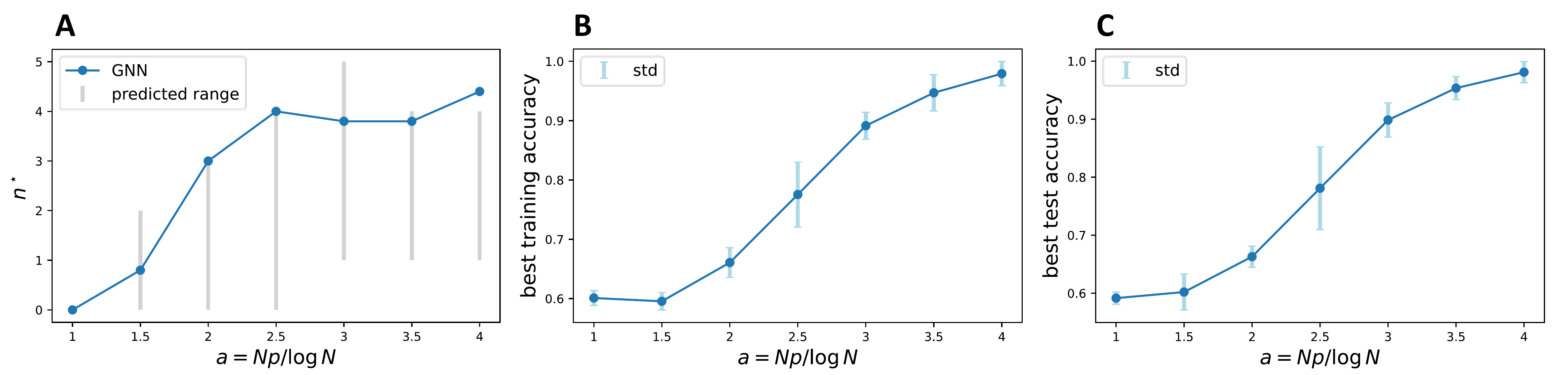}
    \caption{How the strength of community structure affects oversmoothing. When graphs have stronger community structure (i.e. higher $a$), oversmoothing would happen later. Our theory (gray bar) predicts the optimal number of layers $n^\star$ in practice (blue) with high accuracy (\textbf{A}). Given the same set of features, a classifier has significantly better performance on graphs with higher $a$ (\textbf{B,C}).}
    \label{fig:ns_topo}
    \vspace{-1ex}
\end{figure}

\begin{figure}[h]
    \centering
    \includegraphics[width=\textwidth]{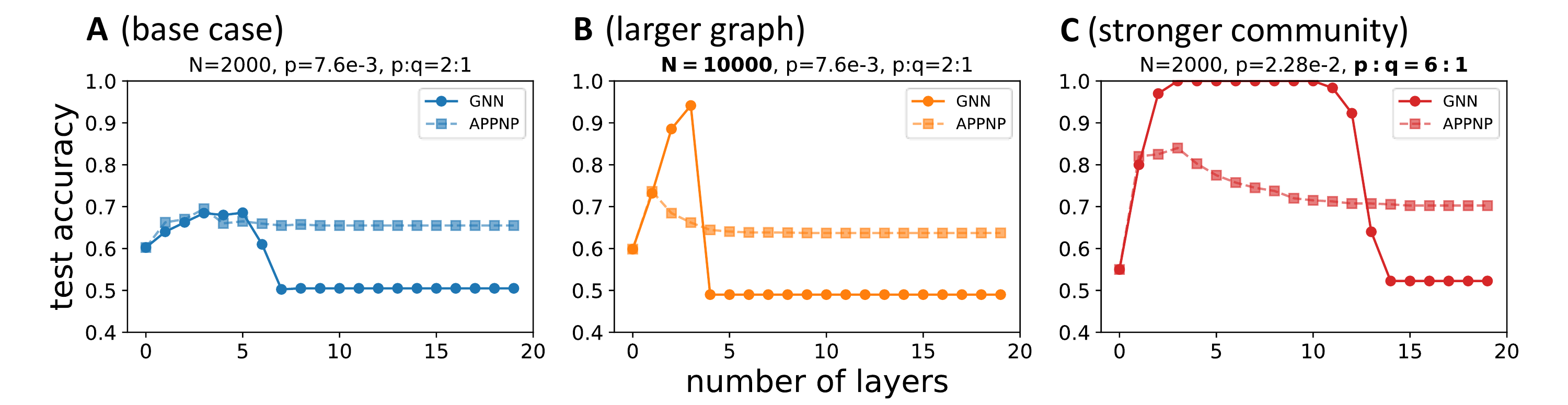}
    \caption{Comparison of node classification performance between the baseline GNN and APPNP. The performance of APPNP is more robust when we increase the model depth. However, compared to the base case (\textbf{A}), APPNP tends to have worse optimal performance than GNN on graphs with larger size (\textbf{B}) or stronger community structure (\textbf{C}), as predicted by the theory. }
    \label{fig:topo_test}
     \vspace{-2ex}
\end{figure}

\subsection{The effect of optimizing weights on oversmoothing}\label{exp:syn_optim}

We investigate how adding learnable weights $W^{(k)}$ in each GNN layers affects the node classification performance in practice. Consider the case when all the GNN layers used have width one, meaning that the learnable weight matrix $W^{(k)}$ in each layer is a scalar. In theory, the effects of adding such weights on the means and the variances would cancel each other and therefore they would not affect the z-score of our interest and the classification performance. Figure~\ref{fig:synthetic_width}A shows the the value of $n_0$ predicted by the z-score, the actual $n_0$ without learnable weights in each GNN layer (meaning that we apply pure graph convolutions first, and only train weights in the final linear classifier) according to the test accuracy and the actual $n_0$ with learnable weights in each GNN layer according to the test accuracy. The results are averages over $5$ graphs for each case. We empirically observe that GNNs with weights are much harder to train, and the difficulty increases as we increase the number of layers. As a result, $n_0$ is smaller for the model with weights and the gap is larger when $n_0$ is supposed to be larger, possibly due to greater difficulty in optimizing deeper architectures~\citep{Shamir2019ExponentialCT}. 

\begin{figure}[b]
    \centering
    \includegraphics[width=\textwidth]{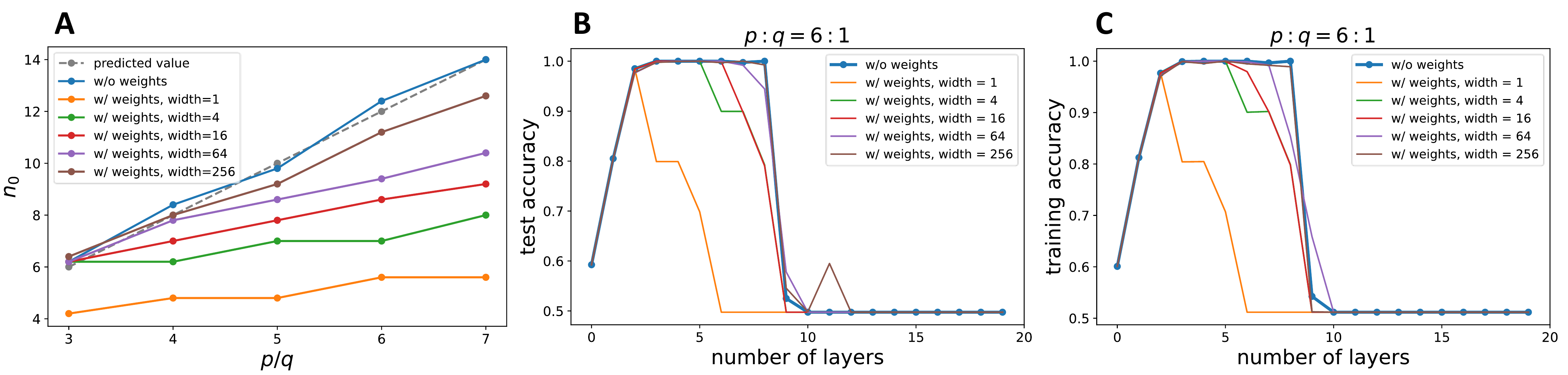}
    \caption{The effect of optimizing weights on oversmoothing using synthetic CSBM data. Compared to the GNN without weights, oversmoothing happens much sooner after adding learnable weights in each GNN layer, although these two models have the same representation power (\textbf{A}). As we increase the width of each GNN layer, the performance of GNN with weights is able to gradually match that of GNN without weights (\textbf{B,C}).}
    \label{fig:synthetic_width}
\end{figure}

To relieve this potential optimization problem, we increase the width of each GNN layer~\citep{Du2019WidthPM}. Figure~\ref{fig:synthetic_width}B,C presents the training and testing accuracies of GNNs with increasing width with respect to the number of layers on a specific synthetic example. The results are averages over $5$ runs. We observe that increasing the width of the network mitigates the difficulty of optimizing weights, and the performance after adding weights is able to gradually match the performance without weights. This empirically validates our claim in Section~\ref{framework} that adding learnable weights should not affect the representation power of GNN in terms of node classification accuracy on CSBM graphs, besides empirical optimization issues.

\begin{figure}[t]
    \centering
    \includegraphics[width = \textwidth]{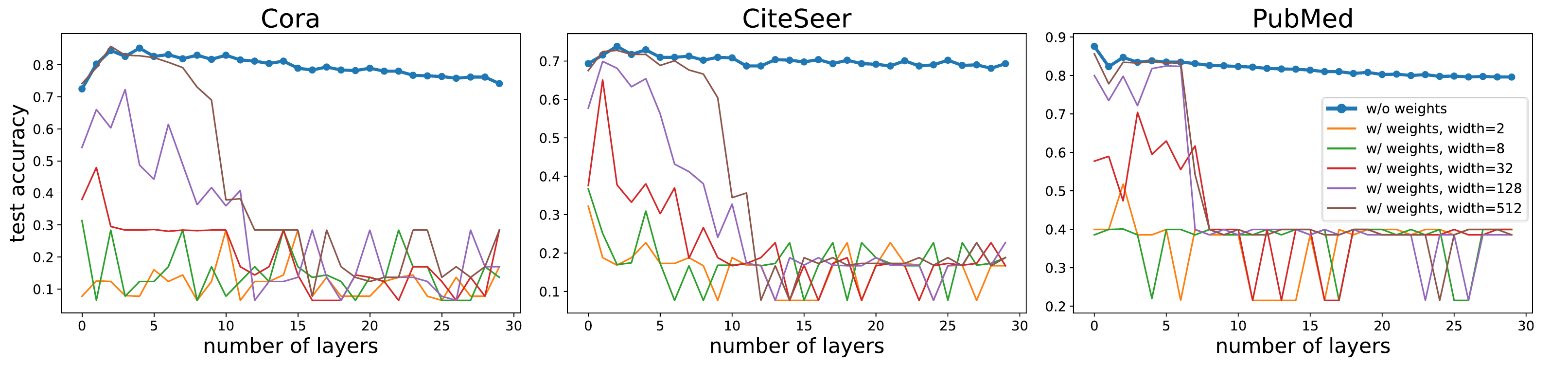}
    \caption{The effect of optimizing weights on oversmoothing using real-world benchmark datasets. Adding learnable weights in each GNN layer does not improve node classification performance but rather leads to optimization difficulty. } 
    \label{fig:Planetoid}
    \vspace{-2ex}
\end{figure}

In practice, as we build deeper GNNs for more complicated tasks on real graph data, the difficulty of optimizing weights in deep GNN models persists. We revisit the multi-class node classification task on the three widely used benchmark datasets: Cora, CiteSeer and PubMed~\citep{Yang2016RevisitingSL}. We compare the performance of GNN without weights against the performance of GNN with weights in terms of test accuracy. We used $60\%/20\%/20\%$ random splits, as in~\cite{Wang2020UnifyingGC} and \cite{Huang2021CombiningLP} and report averages over $5$ runs. Figure~\ref{fig:Planetoid} shows the same kind of difficulty in optimizing deeper models with learnable weights in each GNN layer as we have seen for the synthetic data. Increasing the width of each GNN layer still mitigates the problem for shallower models, but it becomes much more difficult to tackle beyond $10$ layers to the point that simply increasing the width could not solve it. As a result, although GNNs with and without weights are on par with each other when both are shallow, the former has much worse performance when the number of layers goes beyond $10$. These results suggest that the oversmoothing phenomenon observed in practice is aggravated by the difficulty of optimizing deep GNN models.

\section{Discussion}
Designing more powerful GNNs requires deeper understanding of current GNNs---how they work and why they fail. In this paper, we precisely characterize the mechanism of overmoothing via a non-asymptotic analysis and justify why oversmoothing happens at a shallow depth. Our analysis suggests that oversmoothing happens once the undesirable mixing effect homogenizing node representations in different classes starts to dominate the desirable denoising effect homogenizing node representations in the same class. Due to the small diameter characteristic of real graphs, the turning point of the tradeoff will occur after only a few rounds of message-passing, resulting in oversmoothing in shallow GNNs.

It is worth noting that oversmoothing becomes an important problem in the literature partly because typical Convolutional Neural Networks (CNNs) used for image processing are much deeper than GNNs~\citep{He2016DeepRL}. As such, researchers have been trying to use methods that have previously worked for CNNs to make current GNNs deeper ~\citep{Li2019DeepGCNsCG,Chen2020SimpleAD}.
However, if we regard images as grids, images and real-world graphs have fundamentally different characteristics. 
In particular, images are giant grids, meaning that aggregating information between faraway pixels often requires a large number of layers. This contrasts with real-world graphs, which often have small diameters. 
Hence we believe that building more powerful GNNs will require us to think beyond CNNs and images and take advantage of the structure in real graphs.

There are many outlooks to our work and possible directions for further research. First, while our use of the CSBM provided important insights into GNNs, it will be helpful to incorporate other real graph properties such as degree heterogeneity in the analysis. Additionally, further research can focus on the learning perspective of the problem.

\section{Acknowledgement}
This research has been supported by a Vannevar Bush Fellowship from the Office of the Secretary of Defense. The U.S. Government is authorized to reproduce and distribute reprints for Governmental purposes notwithstanding any copyright annotation thereon. The views and conclusions contained herein are those of the authors and should not be interpreted as necessarily representing the official policies or endorsements, either expressed or implied, of the Department of Defense or the U.S. Government.

\bibliography{references}

\begin{thebibliography}{10}

\bibitem{Gori2005GNN}
M.~Gori, G.~Monfardini, and F.~Scarselli.
\newblock A new model for learning in graph domains.
\newblock In {\em IJCNN}, 2005.

\bibitem{Scarselli2009TheGN}
Franco Scarselli, Marco Gori, Ah~Chung Tsoi, Markus Hagenbuchner, and Gabriele
  Monfardini.
\newblock The graph neural network model.
\newblock {\em IEEE Transactions on Neural Networks}, 20:61--80, 2009.

\bibitem{Bruna2014SpectralNA}
Joan Bruna, Wojciech Zaremba, Arthur~D. Szlam, and Yann LeCun.
\newblock Spectral networks and locally connected networks on graphs.
\newblock In {\em ICLR}, 2014.

\bibitem{Duvenaud2015ConvolutionalNO}
David~Kristjanson Duvenaud, Dougal Maclaurin, Jorge Aguilera-Iparraguirre,
  Rafael G{\'o}mez-Bombarelli, Timothy~D. Hirzel, Al{\'a}n Aspuru-Guzik, and
  Ryan~P. Adams.
\newblock Convolutional networks on graphs for learning molecular fingerprints.
\newblock In {\em NeurIPS}, 2015.

\bibitem{Defferrard2016ConvolutionalNN}
Micha{\"e}l Defferrard, Xavier Bresson, and Pierre Vandergheynst.
\newblock Convolutional neural networks on graphs with fast localized spectral
  filtering.
\newblock In {\em NeurIPS}, 2016.

\bibitem{Battaglia2016Interaction}
Peter Battaglia, Razvan Pascanu, Matthew Lai, Danilo Jimenez~Rezende, and koray
  kavukcuoglu.
\newblock Interaction networks for learning about objects, relations and
  physics.
\newblock In {\em NeurIPS}, 2016.

\bibitem{Li2016Gated}
Yujia Li, Daniel Tarlow, Marc Brockschmidt, and Richard~S. Zemel.
\newblock Gated graph sequence neural networks.
\newblock In {\em ICLR}, 2016.

\bibitem{Gilmer2017NeuralMP}
Justin Gilmer, Samuel~S. Schoenholz, Patrick~F. Riley, Oriol Vinyals, and
  George~E. Dahl.
\newblock Neural message passing for quantum chemistry.
\newblock In {\em ICML}, 2017.

\bibitem{Kipf2017SemiSupervisedCW}
Thomas Kipf and Max Welling.
\newblock Semi-supervised classification with graph convolutional networks.
\newblock In {\em ICLR}, 2017.

\bibitem{Wu2019ACS}
Zonghan Wu, Shirui Pan, Fengwen Chen, Guodong Long, Chengqi Zhang, and
  Philip~S. Yu.
\newblock A comprehensive survey on graph neural networks.
\newblock {\em IEEE Transactions on Neural Networks and Learning Systems},
  32:4--24, 2019.

\bibitem{Li2018DeeperII}
Qimai Li, Zhichao Han, and Xiao-Ming Wu.
\newblock Deeper insights into graph convolutional networks for semi-supervised
  learning.
\newblock In {\em AAAI}, 2018.

\bibitem{Klicpera2019PredictTP}
Johannes Klicpera, Aleksandar Bojchevski, and Stephan G{\"u}nnemann.
\newblock Predict then propagate: Graph neural networks meet personalized
  pagerank.
\newblock In {\em ICLR}, 2019.

\bibitem{Oono2020Graph}
Kenta Oono and Taiji Suzuki.
\newblock Graph neural networks exponentially lose expressive power for node
  classification.
\newblock In {\em ICLR}, 2020.

\bibitem{Chen2020MeasuringAR}
Deli Chen, Yankai Lin, Wei Li, Peng Li, Jie Zhou, and Xu~Sun.
\newblock Measuring and relieving the over-smoothing problem for graph neural
  networks from the topological view.
\newblock In {\em AAAI}, 2020.

\bibitem{Chen2020SimpleAD}
Ming Chen, Zhewei Wei, Zengfeng Huang, Bolin Ding, and Yaliang Li.
\newblock Simple and deep graph convolutional networks.
\newblock In {\em ICML}, 2020.

\bibitem{Keriven2022NotTL}
Nicolas Keriven.
\newblock Not too little, not too much: a theoretical analysis of graph
  (over)smoothing.
\newblock In {\em NeurIPS}, 2022.

\bibitem{Liu2021NonLocalGN}
Meng Liu, Zhengyang Wang, and Shuiwang Ji.
\newblock Non-local graph neural networks.
\newblock {\em IEEE transactions on pattern analysis and machine intelligence},
  2021.

\bibitem{Deshpande2018ContextualSB}
Yash Deshpande, Andrea Montanari, Elchanan Mossel, and Subhabrata Sen.
\newblock Contextual stochastic block models.
\newblock In {\em NeurIPS}, 2018.

\bibitem{Xu2018Representation}
Keyulu Xu, Chengtao Li, Yonglong Tian, Tomohiro Sonobe, Ken-ichi Kawarabayashi,
  and Stefanie Jegelka.
\newblock Representation learning on graphs with jumping knowledge networks.
\newblock In {\em ICML}, 2018.

\bibitem{Li2019DeepGCNsCG}
Guohao Li, Matthias M{\"u}ller, Ali Thabet, and Bernard Ghanem.
\newblock Deepgcns: Can gcns go as deep as cnns?
\newblock In {\em ICCV}, 2019.

\bibitem{Huang2020Tackling}
Wenbing Huang, Yu~Rong, Tingyang Xu, Fuchun Sun, and Junzhou Huang.
\newblock Tackling over-smoothing for general graph convolutional networks.
\newblock {\em arXiv preprint arXiv:2008.09864}, 2020.

\bibitem{Zhao2020PairNorm}
Lingxiao Zhao and Leman Akoglu.
\newblock Pairnorm: Tackling oversmoothing in gnns.
\newblock In {\em ICLR}, 2020.

\bibitem{Abbe2018CommunityDA}
Emmanuel Abbe.
\newblock Community detection and stochastic block models.
\newblock {\em Foundations and Trends in Communications and Information
  Theory}, 14:1--162, 2018.

\bibitem{Chen2019SupervisedCD}
Zhengdao Chen, Lisha Li, and Joan Bruna.
\newblock Supervised community detection with line graph neural networks.
\newblock In {\em ICLR}, 2019.

\bibitem{Wei2022UnderstandingNI}
Rongzhe Wei, Haoteng Yin, J.~Jia, Austin~R. Benson, and Pan Li.
\newblock Understanding non-linearity in graph neural networks from the
  bayesian-inference perspective.
\newblock {\em ArXiv}, abs/2207.11311, 2022.

\bibitem{Fountoulakis2022GraphAR}
Kimon Fountoulakis, Amit Levi, Shenghao Yang, Aseem Baranwal, and Aukosh
  Jagannath.
\newblock Graph attention retrospective.
\newblock {\em ArXiv}, abs/2202.13060, 2022.

\bibitem{Baranwal2021GraphCF}
Aseem Baranwal, Kimon Fountoulakis, and Aukosh Jagannath.
\newblock Graph convolution for semi-supervised classification: Improved linear
  separability and out-of-distribution generalization.
\newblock In {\em ICML}, 2021.

\bibitem{Baranwal2022EffectsOG}
Aseem Baranwal, Kimon Fountoulakis, and Aukosh Jagannath.
\newblock Effects of graph convolutions in deep networks.
\newblock {\em ArXiv}, abs/2204.09297, 2022.

\bibitem{Ma2021IsHA}
Yao Ma, Xiaorui Liu, Neil Shah, and Jiliang Tang.
\newblock Is homophily a necessity for graph neural networks?
\newblock In {\em ICLR}, 2022.

\bibitem{Girvan2002CommunitySI}
Michelle Girvan and Mark E.~J. Newman.
\newblock Community structure in social and biological networks.
\newblock {\em Proceedings of the National Academy of Sciences}, 99:7821 --
  7826, 2002.

\bibitem{Chung2010GraphTI}
Fan R.~K. Chung.
\newblock Graph theory in the information age.
\newblock volume~57, pages 726--732, 2010.

\bibitem{David2010NetworksCA}
David~A. Easley and Jon~M. Kleinberg.
\newblock {\em Networks, Crowds, and Markets: Reasoning about a Highly
  Connected World}.
\newblock 2010.

\bibitem{Graham2001TheDO}
Fan~Chung Graham and Linyuan Lu.
\newblock The diameter of sparse random graphs.
\newblock {\em Advances in Applied Mathematics}, 26:257--279, 2001.

\bibitem{Andersen2006LocalGP}
Reid Andersen, Fan~Chung Graham, and Kevin~J. Lang.
\newblock Local graph partitioning using pagerank vectors.
\newblock In {\em FOCS}, 2006.

\bibitem{Yang2016RevisitingSL}
Zhilin Yang, William~W. Cohen, and Ruslan Salakhutdinov.
\newblock Revisiting semi-supervised learning with graph embeddings.
\newblock In {\em ICML}, 2016.

\bibitem{Shamir2019ExponentialCT}
Ohad Shamir.
\newblock Exponential convergence time of gradient descent for one-dimensional
  deep linear neural networks.
\newblock In {\em COLT}, 2019.

\bibitem{Du2019WidthPM}
Simon~Shaolei Du and Wei Hu.
\newblock Width provably matters in optimization for deep linear neural
  networks.
\newblock In {\em ICML}, 2019.

\bibitem{Wang2020UnifyingGC}
Hongwei Wang and Jure Leskovec.
\newblock Unifying graph convolutional neural networks and label propagation.
\newblock {\em ArXiv}, abs/2002.06755, 2020.

\bibitem{Huang2021CombiningLP}
Qian Huang, Horace He, Abhay Singh, Ser-Nam Lim, and Austin~R. Benson.
\newblock Combining label propagation and simple models out-performs graph
  neural networks.
\newblock In {\em ICLR}, 2021.

\bibitem{He2016DeepRL}
Kaiming He, X.~Zhang, Shaoqing Ren, and Jian Sun.
\newblock Deep residual learning for image recognition.
\newblock In {\em CVPR}, 2016.

\bibitem{Devroye1996APT}
Luc Devroye, L{\'a}szl{\'o} Gy{\"o}rfi, and G{\'a}bor Lugosi.
\newblock A probabilistic theory of pattern recognition.
\newblock In {\em Stochastic Modelling and Applied Probability}, 1996.

\bibitem{chung2006concentration}
Fan Chung and Linyuan Lu.
\newblock Concentration inequalities and martingale inequalities: a survey.
\newblock {\em Internet mathematics}, 3(1):79--127, 2006.

\bibitem{bandeira2016sharp}
Afonso~S Bandeira and Ramon Van~Handel.
\newblock Sharp nonasymptotic bounds on the norm of random matrices with
  independent entries.
\newblock {\em The Annals of Probability}, 44(4):2479--2506, 2016.

\bibitem{Lu2013SpectraOE}
Linyuan Lu and Xing Peng.
\newblock Spectra of edge-independent random graphs.
\newblock {\em The Electronic Journal of Combinatorics}, 20:27, 2013.

\bibitem{boucheron2013concentration}
St{\'e}phane Boucheron, G{\'a}bor Lugosi, and Pascal Massart.
\newblock {\em Concentration inequalities: A nonasymptotic theory of
  independence}.
\newblock Oxford university press, 2013.

\bibitem{Paszke2019PyTorchAI}
Adam Paszke, Sam Gross, Francisco Massa, Adam Lerer, James Bradbury, Gregory
  Chanan, Trevor Killeen, Zeming Lin, Natalia Gimelshein, Luca Antiga, Alban
  Desmaison, Andreas K{\"o}pf, Edward Yang, Zach DeVito, Martin Raison, Alykhan
  Tejani, Sasank Chilamkurthy, Benoit Steiner, Lu~Fang, Junjie Bai, and Soumith
  Chintala.
\newblock Pytorch: An imperative style, high-performance deep learning library.
\newblock In {\em NeurIPS}, 2019.

\bibitem{Fey/Lenssen/2019}
Matthias Fey and Jan~E. Lenssen.
\newblock Fast graph representation learning with {PyTorch Geometric}.
\newblock In {\em ICLR Workshop on Representation Learning on Graphs and
  Manifolds}, 2019.

\end{thebibliography}
\bibliographystyle{unsrt}

\appendix
\section{Proof of Lemma~\ref{lem:overlapping_area}}
Following the definition of the Bayes optimal classifier~\citep{Devroye1996APT},
$$\mathcal{B}(x) = \underset{i=1,2}{\argmax}\quad\P[y=i| x]\,,$$
we get that the Bayes optimal classifier has a linear decision boundary $\mathcal{D}=(\mu_1+\mu_2)/2$ such that the decision rule is

$\begin{cases}
y=1 & \quad \text{if } x \leq \mathcal{D} \\
y=2 & \quad \text{if } x > \mathcal{D}.
\end{cases}$

Probability of misclassification could be written as
\begin{align*}
    \mathbb{P}[y=1, x > \mathcal{D}]+\mathbb{P}[y=2, x \leq \mathcal{D}] &= \mathbb{P}[x > \mathcal{D}|y=1]\mathbb{P}[y=1]+\mathbb{P}[x \leq \mathcal{D}|y=2]\mathbb{P}[y=2]\\
    & = \frac{1}{2}(\mathbb{P}[x > \mathcal{D}|y=1]+\mathbb{P}[x \leq \mathcal{D}|y=2])\,.
\end{align*}

When $\mathcal{D}=(\mu_1+\mu_2)/2$, the expression is called the Bayes error rate, which is the minimal probability of misclassification among all classifiers. Geometrically, it is easy to see that the Bayes error rate equals $\frac{1}{2}S$, where $S$ is the overlapping area between the two Gaussian distributions $\mathcal{N}\Big(\mu_1^{(n)},(\sigma^{(n)})^2\Big)$ and $\mathcal{N}\Big(\mu_2^{(n)},(\sigma^{(n)})^2\Big)$. Hence one  can use the z-score of $(\mu_1+\mu_2)/2 $ with respect to either of the two Gaussian distributions to directly calculate the Bayes error rate.

\section{Proof of Lemma~\ref{thm:mean}}
Under the heuristic assumption $D^{-1}A \approx \E[D]^{-1}\E[A]$, we can write 
$$\mu_1^{(1)} = \frac{p\mu_1+q\mu_2}{p+q},\quad \mu_2^{(1)} = \frac{p\mu_2+q\mu_1}{p+q}\,$$
$$\mu_1^{(k)} = \frac{p\mu_1^{(k-1)}+q\mu_2^{(k-1)}}{p+q},\quad \mu_2^{(k)} = \frac{p\mu_2^{(k-1)}+q\mu_1^{(k-1)}}{p+q}\,, \text{ for all } k\in \mathbb{N}.$$
Writing recursively, we get that
$$\mu_1^{(n)} = \frac{(p+q)^n+(p-q)^n}{2(p+q)^n}\mu_1 + \frac{(p+q)^n-(p-q)^n}{2(p+q)^n}\mu_2\,,$$
$$\mu_2^{(n)} = \frac{(p+q)^n+(p-q)^n}{2(p+q)^n}\mu_2 + \frac{(p+q)^n-(p-q)^n}{2(p+q)^n}\mu_1\,.$$

\section{Proof of Theorem~\ref{thm:mean_concentration}}
We use $\|\cdot\|_2$ to denote the spectral norm, $\|A\|_2=\max_{x:\|x\|=1}\|Ax\|_2$. We denote $\bar{A}=\E[A]$, $\bar{D}=\E[D]$, $d = A\mathbbm{1}_N$ and $\bar{d}=\E[d]_{i}$. We further define the following relevant vectors:
\[w_1:=\one_N,\quad w_2:=\begin{pmatrix}\one_{N/2}\\-\one_{N/2}\end{pmatrix},\quad \mu:=\begin{pmatrix}\mu_1\one_{N/2}\\\mu_2\one_{N/2}\end{pmatrix}.
\]
The quantity of interest is  $\mu_2^{(k)}-\mu_1^{(k)}=\frac{1}{N/2}w_2^\top(D^{-1}A)^k\mu$.
\subsection{Auxiliary results}
We record some properties of the adjacency matrices:
\begin{enumerate}
    \item $D^{-1}A$ and $\bar{D}^{-1}\bar{A}$ have an eigenvalue of $1$, corresponding to the (right) eigenvector $w_1$.
    \item If $J_n=\one_n\one_n^\top$, where $\one_n$ is all-one vector of length $n$,  then 
    \[\bar{A}:=\begin{pmatrix}pJ_{N/2}&qJ_{N/2}\\qJ_{N/2}&pJ_{N/2}\end{pmatrix}.\]
    \item $\bar{D}=\frac{N}{2}(p+q)I_N$.
    \item $\mu=\alpha w_1+\beta w_2$, where $\alpha=\frac{\mu_1+\mu_2}{2}$ and $\beta=\frac{\mu_1-\mu_2}{2}$.
\end{enumerate}
To control the degree matrix $D^{-1}$, we will use the following standard Chernoff bound~\cite{chung2006concentration}:
\begin{lemma}[Chernoff Bound]\label{lem:chernoff}
Let $X_1,...,X_n$ be independent, $S:=\sum_{i=1}^n X_i$, and $\bar{S}=\E[S]$. Then for all $\varepsilon>0$,
$$\P(S\leq \bar{S} -\varepsilon)\leq e^{-\varepsilon^2/(2\bar{S})},$$
$$\P(S\geq \bar{S} +\varepsilon)\leq e^{-\varepsilon^2/(2(\bar{S}+\varepsilon/3))}.$$
\end{lemma}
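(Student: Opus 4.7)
The plan is to apply the standard Chernoff moment-generating-function argument. The stated bound has exactly the classical form for sums of independent Bernoulli variables, which is also the setting in which the lemma will be invoked in the sequel (to control degree sums $d_i = \sum_j A_{ij}$). Accordingly, I would take $X_i \in \{0,1\}$ with $\P(X_i=1) = p_i$ and $\bar S = \sum_i p_i$. As the authors indicate, one could simply cite Chung--Lu, but the argument is short.

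For the upper tail, fix $\lambda > 0$ and apply Markov's inequality to $e^{\lambda S}$:
\begin{equation*}
\P(S \geq \bar{S}+\varepsilon) \leq e^{-\lambda(\bar S+\varepsilon)}\,\E[e^{\lambda S}] = e^{-\lambda(\bar S+\varepsilon)}\prod_i \bigl(1+p_i(e^\lambda-1)\bigr) \leq \exp\bigl(-\lambda(\bar S+\varepsilon)+\bar S(e^\lambda-1)\bigr),
\end{equation*}
where the final inequality uses independence together with $1+x \leq e^x$. Optimizing at $\lambda = \log(1+\varepsilon/\bar S)$ gives the canonical exponent $\exp(-\bar S \, h(\varepsilon/\bar S))$, with $h(x) := (1+x)\log(1+x)-x$.

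For the lower tail I would run the identical argument with $-X_i$ (equivalently $\lambda < 0$), producing $\exp(-\bar S \, h_{-}(\varepsilon/\bar S))$ where $h_{-}(x) := (1-x)\log(1-x)+x$ for $x \in (0,1)$. The key analytic inputs are then the two elementary numerical inequalities
\begin{equation*}
h(x) \;\geq\; \frac{x^2}{2+2x/3} \quad (x \geq 0), \qquad h_{-}(x) \;\geq\; \frac{x^2}{2} \quad (0 \leq x < 1),
\end{equation*}
which convert the exponents into the stated forms $\varepsilon^2/(2(\bar S + \varepsilon/3))$ and $\varepsilon^2/(2\bar S)$ respectively. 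Note that the sharper lower-tail bound comes from the absence of a cubic correction in $h_{-}$ compared to $h$.

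The only real obstacle is verifying these two scalar inequalities; both follow from Taylor-expanding $\log(1\pm x)$ and comparing term by term, with no deeper input needed. Everything else is the mechanical MGF-plus-Markov pipeline, and no extension beyond the Bernoulli setting is required for the downstream application to degrees in the CSBM.
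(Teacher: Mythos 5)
Your derivation is correct and is exactly the standard moment-generating-function argument behind the bound the paper simply cites from Chung and Lu without proof: Markov applied to $e^{\lambda S}$, optimization of $\lambda$, and the elementary inequalities $h(x)\geq x^2/(2+2x/3)$ and $h_-(x)\geq x^2/2$ to pass from the Bennett-type exponent to the stated Bernstein-type forms. Your restriction to Bernoulli (or $[0,1]$-valued) summands is the right reading and is in fact necessary---the lemma is false for arbitrary independent $X_i$ as literally stated---and it matches both the cited source and the only downstream use, namely the degree sums $d_i=\sum_j A_{ij}$ in Corollary~\ref{cor:degree_concentration}.
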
 
We can thus derive a uniform lower bound on the degree of every vertex:
\begin{corollary}\label{cor:degree_concentration}
For every $r>0$, there is a constant $C(r)$ such that whenever $\bar{d} \geq C\log N$, with probability at least $1-N^{-r}$,
\[\frac{1}{2}\bar{d}\leq d_i\leq \frac{3}{2}\bar{d},\quad\text{for all } 1\leq i\leq N.\]
Consequently, with probability at least $1-N^{-r}$, $\|D^{-1}-\bar{D}^{-1}\|_2\leq  C/\bar{d}$ for some $C$.
\end{corollary}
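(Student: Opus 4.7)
The strategy is to apply the Chernoff bound pointwise to each vertex degree and then union-bound over vertices. Fix a vertex $i\in[N]$. Its degree $d_i=\sum_{j\ne i}A_{ij}$ is a sum of independent Bernoulli variables with mean $\bar d=(N/2)(p+q)$ up to an $O(1)$ diagonal correction that we can absorb. Applying Lemma~\ref{lem:chernoff} with $\varepsilon=\bar d/2$ yields $\P(d_i\leq \bar d/2)\leq e^{-\bar d/8}$ and $\P(d_i\geq 3\bar d/2)\leq e^{-3\bar d/28}$, so $\P(|d_i-\bar d|>\bar d/2)\leq 2e^{-c\bar d}$ for an absolute constant $c>0$.

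Next I would union-bound over all $N$ vertices to conclude that the failure probability is at most $2Ne^{-c\bar d}$. Choosing $C=C(r)$ large enough (any $C\geq (r+1)/c$ works, plus a constant to absorb the factor of $2$) so that the assumption $\bar d\geq C\log N$ forces $2Ne^{-c\bar d}\leq N^{-r}$ gives the uniform bound
$$\tfrac{1}{2}\bar d\leq d_i\leq \tfrac{3}{2}\bar d\quad\text{for all }1\leq i\leq N$$
simultaneously on an event of probability at least $1-N^{-r}$, establishing the first conclusion.

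For the operator-norm bound, the key observation is that both $D$ and $\bar D=\bar d\cdot I_N$ are diagonal, so $D^{-1}-\bar D^{-1}$ is diagonal and its spectral norm equals its largest absolute diagonal entry:
$$\|D^{-1}-\bar D^{-1}\|_2=\max_{i}\Bigl|\tfrac{1}{d_i}-\tfrac{1}{\bar d}\Bigr|=\max_i\frac{|d_i-\bar d|}{d_i\,\bar d}.$$
On the good event from the first part, $|d_i-\bar d|\leq\bar d/2$ and $d_i\geq\bar d/2$, so each diagonal entry is bounded by $(\bar d/2)/((\bar d/2)\bar d)=1/\bar d$. Thus $\|D^{-1}-\bar D^{-1}\|_2\leq 1/\bar d$, which is the claimed bound (with $C=1$ in fact).

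There is no serious obstacle here; the argument is a routine Chernoff plus union bound, with the diagonal structure of $D$ and $\bar D$ making the operator-norm step immediate. The only care needed is tracking the constants in the Chernoff exponents and ensuring $C(r)$ is chosen large enough to beat the $\log N$ factor that comes out of the union bound. A slightly sharper bound of order $\sqrt{\log N/\bar d}\cdot 1/\bar d$ is available via Chernoff with $\varepsilon=\Theta(\sqrt{\bar d\log N})$, but is not needed for the stated corollary.
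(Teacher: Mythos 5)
Your proof is correct and follows essentially the same route as the paper's: a pointwise Chernoff bound (the paper's Lemma~\ref{lem:chernoff}) with $\varepsilon=\bar d/2$, a union bound over the $N$ vertices absorbed by taking $C(r)$ large, and then the observation that $\|D^{-1}-\bar D^{-1}\|_2=\max_i|d_i-\bar d|/(d_i\bar d)$ is controlled by the degree bounds. Your version is in fact slightly more explicit about the upper-tail exponent and the final constant than the paper's, but there is no substantive difference.
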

\begin{proof}
By applying Lemma 1 and a union bound, all degrees are within $1/2 \bar{d}$ of their expectations, with probability at least $1-e^{-\bar{d}/8+\log N}$. Taking $C=8r+8$ yields the desired lower bound. An analogous proof works for the upper bound.

To show the latter part, write

    $$\|D^{-1}-\bar{D}^{-1}\|_2=\max_{1\leq i \leq N}\frac{|d_i-\bar{d}|}{d_i\bar{d}}$$

Using the above bounds, the numerator for each $i$ is at most $1/2\bar{d}$ and the denominator for each $i$ is at least $1/2 {\bar{d}}^2$, with probability at least $1-N^{-r}$. Combining the bounds yields the claim.
\end{proof}

We will also need a result on concentration of random adjacency matrices, which is a corollary of the sharp bounds derived in~\cite{bandeira2016sharp}
\begin{lemma}[Concentration of Adjacency Matrix]\label{lem:adj_concentration}
For every $r>0$, there is a constant $C(r)$ such that whenever $\bar{d} \geq \log N$, with probability at least $1-N^{-r}$,
$$\|A-\bar{A}\|_2<C\sqrt{\bar{d}}.$$
\end{lemma}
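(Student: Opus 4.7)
The plan is to apply the sharp non-asymptotic spectral norm bound of Bandeira--Van Handel~\cite{bandeira2016sharp} to $X := A - \bar{A}$. Observe that the strictly upper-triangular entries of $X$ are mutually independent, have mean zero, and are bounded in absolute value by $1$; the lower triangle is then determined by symmetry. The two parameters that control the Bandeira--Van Handel bound are
$$\sigma_* \,:=\, \max_i \sqrt{\sum_j \Var(A_{ij})}\,, \qquad \sigma_\infty \,:=\, \max_{ij} \|A_{ij}-\bar A_{ij}\|_\infty\,.$$
For the CSBM, each entry of $A$ is Bernoulli, so its variance is at most its mean; summing over a row gives $\sigma_*^2 \leq \max_i \E[d_i] = \bar{d}$, and trivially $\sigma_\infty \leq 1$.

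The Bandeira--Van Handel theorem, combined with a standard Talagrand-type concentration inequality for $\|X\|_2$ (which is a convex $1$-Lipschitz function of the independent entries in the Frobenius metric), yields, for some absolute constants $c_1, c_2 > 0$ and any $t \geq 0$,
$$\P\!\left(\|X\|_2 \,\geq\, c_1 \sigma_* \,+\, t\right) \,\leq\, N \exp\!\left(-c_2\, t^2/\sigma_\infty^2\right).$$
Choosing $t = \sqrt{(r+1)\log N / c_2}$ bounds the failure probability by $N \cdot N^{-(r+1)} = N^{-r}$. Under the hypothesis $\bar{d} \geq \log N$, both $c_1\sigma_* \leq c_1\sqrt{\bar{d}}$ and $t = O(\sqrt{\log N}) = O(\sqrt{\bar{d}})$ are of order $\sqrt{\bar{d}}$, so on this event $\|X\|_2 \leq C(r)\sqrt{\bar{d}}$, as claimed.

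The main ``obstacle'' is purely bookkeeping: one must invoke the form of the Bandeira--Van Handel result that delivers a tail inequality rather than merely a bound on $\E\|X\|_2$, and check that the hidden constants depend only on $r$ (not on $\bar{d}$ or $N$). No additional structural argument is needed, because the regime $\bar{d} \geq \log N$ is precisely where the ``bulk'' contribution $\sqrt{\bar{d}}$ dominates the logarithmic correction $\sigma_\infty\sqrt{\log N}$. In particular, our application in Corollary~\ref{cor:degree_concentration}'s setting is insensitive to the exact value of the absolute constants $c_1, c_2$, so absorbing them into $C(r)$ completes the argument.
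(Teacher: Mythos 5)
Your proposal is correct and follows essentially the same route as the paper: both invoke the Bandeira--Van Handel tail bound (Corollary 3.12 of that reference) for $\|A-\bar A\|_2$ with $\sigma_*\leq\sqrt{\bar d}$, and both use the hypothesis $\bar d\geq\log N$ to absorb the deviation term into $C(r)\sqrt{\bar d}$ while driving the failure probability below $N^{-r}$. Your choice $t=\sqrt{(r+1)\log N/c_2}$ is if anything slightly cleaner in how it tracks the constant in the exponent than the paper's $t=\sqrt{(1+r)\bar d}$.
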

\begin{proof}
By corollary 3.12 from~\cite{bandeira2016sharp}, there is a constant $\kappa$ such that
\begin{equation*}
    \P(\|A-\bar{A}\|_2\geq 3\sqrt{\bar{d}}+t)\leq e^{-t^2/\kappa +\log N}.
\end{equation*}
Setting $t=\sqrt{(1+r)\bar{d}}$, $C=3+\sqrt{(1+r)\kappa}$ suffices to achieve the desired bound.
\end{proof}

\subsection{Sharp concentration of the random walk operator $D^{-1}A$}
In this section, we aim to show the following concentration result for the random walk operator $D^{-1}A$:

\begin{theorem}\label{thm:rw_concentration}
Suppose the edge probabilities are $\omega\left(\frac{\log N}{N}\right)$, and let $\bar{d}$ be the average degree. For any $r$, there exists a constant $C$ such that for sufficiently large $N$, with probability at least $1-O(N^{-r})$,
\begin{equation*}
    \|D^{-1}A-\bar{D}^{-1}\bar{A}\|_2\leq \frac{C}{\sqrt{\bar{d}}}.
\end{equation*}
\end{theorem}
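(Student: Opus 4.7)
The plan is to decompose the difference as
\[
D^{-1}A - \bar{D}^{-1}\bar{A} = D^{-1}(A-\bar{A}) + (D^{-1}-\bar{D}^{-1})\bar{A},
\]
and bound the two summands separately. For the first term, I would combine $\|D^{-1}\|_2 \leq 2/\bar{d}$ from Corollary~\ref{cor:degree_concentration} with the sharp adjacency bound $\|A-\bar{A}\|_2 \leq C\sqrt{\bar{d}}$ from Lemma~\ref{lem:adj_concentration} to obtain $\|D^{-1}(A-\bar{A})\|_2 \leq C/\sqrt{\bar{d}}$ on an event of probability $1-O(N^{-r})$.

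The hard part will be the second term. A naive triangle-inequality bound using $\|D^{-1}-\bar{D}^{-1}\|_2 = O(1/\bar{d})$ from Corollary~\ref{cor:degree_concentration} together with $\|\bar{A}\|_2 = \bar{d}$ only yields $O(1)$, which misses the target by a factor of $\sqrt{\bar{d}}$. My plan to avoid this loss is to exploit the structure of $\bar{A}$: since each row of $\bar{A}$ consists of $N/2$ entries equal to $p$ and $N/2$ entries equal to $q$, Cauchy-Schwarz yields the entrywise bound $|(\bar{A}v)_i| \leq \sqrt{(N/2)(p^2+q^2)} = O(\bar{d}/\sqrt{N})$ for any unit vector $v$—tighter by a factor of $\sqrt{N}$ than what a pure operator norm argument would give. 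Because $D^{-1}-\bar{D}^{-1}$ is diagonal, I can then factor the maximum of $|(\bar{A}v)_i|$ out of the sum:
\[
\|(D^{-1}-\bar{D}^{-1})\bar{A}v\|_2^2 = \sum_i (1/d_i - 1/\bar{d})^2 (\bar{A}v)_i^2 \leq \max_i (\bar{A}v)_i^2 \cdot \sum_i (1/d_i - 1/\bar{d})^2.
\]
The remaining sum I would control by writing $d_i-\bar{d} = ((A-\bar{A})\one)_i$, so that $\sum_i(d_i-\bar{d})^2 = \|(A-\bar{A})\one\|_2^2 \leq N\|A-\bar{A}\|_2^2 \leq CN\bar{d}$ via Lemma~\ref{lem:adj_concentration}; combined with the uniform lower bound $d_i \geq \bar{d}/2$ from Corollary~\ref{cor:degree_concentration}, this yields $\sum_i(1/d_i - 1/\bar{d})^2 = O(N/\bar{d}^3)$. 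Multiplying the two factors gives $\|(D^{-1}-\bar{D}^{-1})\bar{A}v\|_2^2 = O(1/\bar{d})$, as required.

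Summing the two bounds and taking a union bound over the (constant number of) high-probability events—adjacency concentration and uniform degree concentration—completes the argument. The crux is the entrywise $\ell^\infty$ bound on $\bar{A}v$: this is where the CSBM structure, namely $\bar{A}$ being rank-two with uniformly small entries of order $\bar{d}/N$, enters essentially. Without exploiting it, the diagonal perturbation term cannot be upgraded past the $O(1)$ bound produced by triangle inequality, and the sharp $1/\sqrt{\bar{d}}$ rate is lost.
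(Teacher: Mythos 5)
Your proposal is correct and follows the paper's overall strategy: the same decomposition $D^{-1}A-\bar{D}^{-1}\bar{A}=D^{-1}(A-\bar{A})+(D^{-1}-\bar{D}^{-1})\bar{A}$, and the identical treatment of the first term via $\|D^{-1}\|_2\le 2/\bar{d}$ and $\|A-\bar{A}\|_2\le C\sqrt{\bar{d}}$. Where you genuinely diverge is in the second term. The paper exploits the rank-two eigendecomposition of $\bar{A}$ (with eigenvector entries of size $1/\sqrt{N}$) to reach the quantity $\frac{1}{N\bar{d}^2}\|d-\bar{d}\one\|_2^2$ up to constants, and then invokes Talagrand's concentration inequality for convex Lipschitz functions to show $\frac{1}{N}\|d-\bar{d}\one\|_2^2=O(\bar{d})$. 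You instead get the same $O(\bar{d}/\sqrt{N})$ entrywise control on $\bar{A}v$ directly from a rowwise Cauchy--Schwarz bound $\|\bar{A}_{i\cdot}\|_2\le\sqrt{(N/2)(p^2+q^2)}$, and you control the degree fluctuations by observing $d-\bar{d}\one=(A-\bar{A})\one$, so that $\|d-\bar{d}\one\|_2^2\le N\|A-\bar{A}\|_2^2\le CN\bar{d}$ follows from the adjacency concentration lemma you already invoked for the first term. Both routes rest on the same structural fact---$\bar{A}$ maps unit vectors to vectors with uniformly small entries---but yours is more economical: it recycles the Bandeira--van Handel bound rather than introducing Talagrand, and it does not need the explicit eigendecomposition, only the fact that the rows of $\bar{A}$ have small $\ell^2$ norm. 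The arithmetic checks out: $\max_i(\bar{A}v)_i^2=O(\bar{d}^2/N)$ times $\sum_i(1/d_i-1/\bar{d})^2=O(N/\bar{d}^3)$ gives $O(1/\bar{d})$ for $\|T_2v\|_2^2$, as needed.
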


\begin{proof}
We decompose the error $$E=D^{-1}A-\bar{D}^{-1}\bar{A}=D^{-1}(A-\bar{A})+(D^{-1}-\bar{D}^{-1})\bar{A}=T_1+T_2,$$ where
$$T_1=D^{-1}(A-\bar{A}),\quad T_2=(D^{-1}-\bar{D}^{-1})\bar{A}.$$
We bound the two terms separately.
\paragraph{Bounding $T_1$:} 
By Corollary~\ref{cor:degree_concentration}, $\|D^{-1}\|_2=\max_i 1/d_i\leq 2/\bar{d}$ with probability $1-N^{-r}$. Combining this with Lemma~\ref{lem:adj_concentration}, we see that with probability at least $1-2N^{-r}$,
$$\|D^{-1}(A-\bar{A})\|_2\leq \|D^{-1}\|_2\|A-\bar{A}\|_2\leq \frac{C}{\sqrt{\bar{d}}}$$
for some $C$ depending only on $r$.
\paragraph{Bounding $T_2$:} 
Similar to~\cite{Lu2013SpectraOE}, we bound $T_2$ by exploiting the low-rank structure of the expected adjacency matrix, $\bar{A}$. Recall that
$\bar{A}$ has a special block form. The eigendecomposition of $\bar{A}$ is thus
$$\bar{A}=\sum_{j=1}^2 \lambda_jw^{(j)},$$
where $w^{(1)}=\frac{1}{\sqrt{N}}\mathbbm{1}_N, \lambda_1=\frac{N(p+q)}{2}, w^{(2)}=\frac{1}{\sqrt{N}}\begin{pmatrix}\mathbbm{1}_{N/2}\\-\mathbbm{1}_{N/2}\end{pmatrix}, \lambda_2=\frac{N(p-q)}{2}.$

Using the definition of the spectral norm, we can bound $\|T_2\|_2$ as
\begin{align*}
    \|T_2\|_2&\leq \max_{\|x\|=1}\|(D^{-1}-\bar{D}^{-1})\bar{A}x\|_2\\
    &\leq \max_{\alpha\in \mathbb{R}^2,\|\alpha\|=1}\|(D^{-1}-\bar{D}^{-1})\bar{A}(\alpha_1 w^{(1)} + \alpha_2 w^{(2)})\|_2\,.
\end{align*}
Note that when $\|\alpha\|_2=1$,
\begin{align*}
    \|(D^{-1}-\bar{D}^{-1})\bar{A}(\alpha_1w^{(1)}+\alpha_2 w^{(2)})\|_2^2&=\sum_{i=1}^N \left(\frac{1}{d_i}-\frac{1}{\bar{d}}\right)^2\left(\sum_{j=1}^2 \lambda_j\alpha_jw_i^{(j)}\right)^2\\
    &\leq \sum_{i=1}^N \left(\frac{1}{d_i}-\frac{1}{\bar{d}}\right)^2\sum_{j=1}^2\lambda_j^2(w_i^{(j)})^2
\end{align*}
using Cauchy-Schwarz. Since $|w_i^{(j)}|\leq \frac{1}{\sqrt{N}}$ for all $i,j$, the second summation can be bounded by $\frac{1}{N}\sum_{j=1}^2\lambda_j^2$.
Overall, the upper bound is now
\begin{align*}
    &\frac{1}{N}\sum_{i=1}^N \frac{(d_i-\bar{d})^2}{(d_i\bar{d})^2}\sum_{j=1}^2 \lambda_j^2\,,
\end{align*}
Under the event of Corollary~\ref{cor:degree_concentration}, $d_i\geq C\bar{d}$ for some $C<1$. Under our setup, we also have $\lambda_1^2=\bar{d}^2$, $\lambda_2^2\leq \bar{d}^2$. This means that the upper bound is
\begin{align*}
    \frac{1}{C^2 \bar{d}^2 N}\|d-\bar{d}\mathbbm{1}_N\|_2^2\,,
\end{align*}
where $d$ is the vector of node degrees. It remains to show that $\frac{1}{N}\|d-\bar{d}\mathbbm{1}_N\|^2_2=O(\bar{d})$. To do this, we use a form of Talagrand's concentration inequality, given in~\cite{boucheron2013concentration}. Since the function $\frac{1}{\sqrt{N}}\|d-\bar{d}\mathbbm{1}_N\|_2=\frac{1}{\sqrt{N}}\|(A-\bar{d}I_N)\mathbbm{1}_N\|_2$ is a convex, $1$-Lipschitz function of $A$, Theorem 6.10 from~\cite{boucheron2013concentration} guarantees that for any $t>0$,
$$\P(\frac{1}{\sqrt{N}}\|d-\bar{d}\mathbbm{1}_N\|_2>\E[\frac{1}{\sqrt{N}}\|d-\bar{d}\mathbbm{1}_N\|_2]+t)\leq e^{-t^2/2}.$$
Using Jensen's inequality,
\begin{align*}
    \E[\|d-\bar{d}\mathbbm{1}_N\|_2]&\leq \sqrt{\E[\|d-\bar{d}\mathbbm{1}_N\|^2_2]}\\
    &=\sqrt{\sum_{i=1}^N \text{Var}(d_i)}=\sqrt{N\text{Var}(d_1)}\leq \sqrt{N\bar{d}}\,.
\end{align*}
If $\bar{d}=\omega(\log N)$, we can guarantee that
\begin{align*}
    \frac{1}{\sqrt{N}}\|d-\bar{d}\mathbbm{1}_N\|_2\leq C\sqrt{\bar{d}}
\end{align*}
with probability at least $1-e^{-(C-1)^2\bar{d}/2}= 1- O(N^{-r})$ for an appropriate constant $C$. Thus we have shown that with high probability, $T_2=O(1/\sqrt{\bar{d}})$, which proves the claim.
\end{proof}

\subsection{Proof of Theorem~\ref{thm:mean_concentration}}
Fix $r$ and $K$. We desire to bound
\begin{align*}
    \frac{1}{N/2}w_2^\top((D^{-1}A)^k-(\bar{D}^{-1}\bar{A})^k)\mu\,.
\end{align*}
By the first property of adjacency matrices in auxiliary results, it suffices to bound
\begin{align*}
    \beta\frac{1}{N/2}w_2^\top((D^{-1}A)^k-(\bar{D}^{-1}\bar{A})^k)w_2.
\end{align*}
where $\beta=\frac{\mu_1-\mu_2}{2}$.
We will show inductively that there is a $C$ such that for every $k=1,...,K$,
$$\|(D^{-1}A)^k-(\bar{D}^{-1}\bar{A})^k\|_2\leq C/\sqrt{\bar{d}}.$$
If this is true, then Cauchy-Schwarz gives
\begin{align*}
    \beta\frac{1}{N/2}w_2^\top((D^{-1}A)^k-(\bar{D}^{-1}\bar{A})^k)w_2&\leq \beta \frac{1}{N/2}\|w_2\|_2\|(D^{-1}A)^k-(\bar{D}^{-1}\bar{A})^k\|_2\| w_2\|_2\\
    &\leq  C/\sqrt{\bar{d}}.
\end{align*}
By Theorem~\ref{thm:rw_concentration}, we have that with probability at least $1-O(N^{-r})$,
\begin{equation*}
    \|D^{-1}A-\bar{D}^{-1}\bar{A}\|_2\leq \frac{C}{\sqrt{\bar{d}}}.
\end{equation*}
So $D^{-1}A=\bar{D}^{-1}\bar{A}+J$ where $\|J\|\leq C/\sqrt{\bar{d}}$.
Iterating, we have
\begin{align}\|(D^{-1}A)^k-(\bar{D}^{-1}\bar{A})^k\|_2&=\|(D^{-1}A)^{k-1}D^{-1}A-(\bar{D}^{-1}\bar{A})^k\|_2\label{eq:induction}
\end{align}

Inductively, $(D^{-1}A)^{k-1}=(\bar{D}^{-1}\bar{A})^{k-1}+H$ where $\|H\|_2\leq C/\sqrt{\bar{d}}$. Plugging this in~(\ref{eq:induction}), we have
\begin{align*}
\|(D^{-1}A)^{k-1}D^{-1}A-(\bar{D}^{-1}\bar{A})^k\|_2=\|((\bar{D}^{-1}\bar{A})^{k-1}+H)(\bar{D}^{-1}\bar{A}+J)-(\bar{D}^{-1}\bar{A})^k\|_2\,.
\end{align*}
Of these terms, $(\bar{D}^{-1}\bar{A})^{k-1}J$ has norm at most $\|J\|_2$, $H(\bar{D}^{-1}\bar{A})$ has norm at most $\|H\|_2$, and $HJ$ has norm at most $C/\bar{d}$.\footnote{More precisely, the $C$ becomes $C^k$, which is why we restrict the approximation guarantee to constant $K$.} Hence the induction step is complete.

We have thus shown that there is a constant $C(r,K)$ such that with probability at least $1-N^{-r}$,
\[
    |\frac{1}{N/2}w_2^\top((D^{-1}A)^k-(\bar{D}^{-1}\bar{A})^k)\mu|\leq \frac{C}{\bar{d}}\,.
\]
which proves the claim. 

By simulation one can verify that indeed $\frac{1}{N/2}w_2^\top(\bar{D}^{-1}\bar{A})^k\mu\approx\left(\frac{p-q}{p+q}\right)^k(\mu_2-\mu_1)$. Figure~\ref{fig:mean_sim} presents $\mu_1^{(n)}, \mu_2^{(n)}$ calculated from simulation against predicted values from our theoretical results. The simulation results are averaged over $20$ instances generated from CSBM($N=2000, p=0.0114, q=0.0038, \mu_1=1, \mu_2=1.5, \sigma^2=1$).

\begin{figure}[h]
    \centering
    \includegraphics[width=7cm]{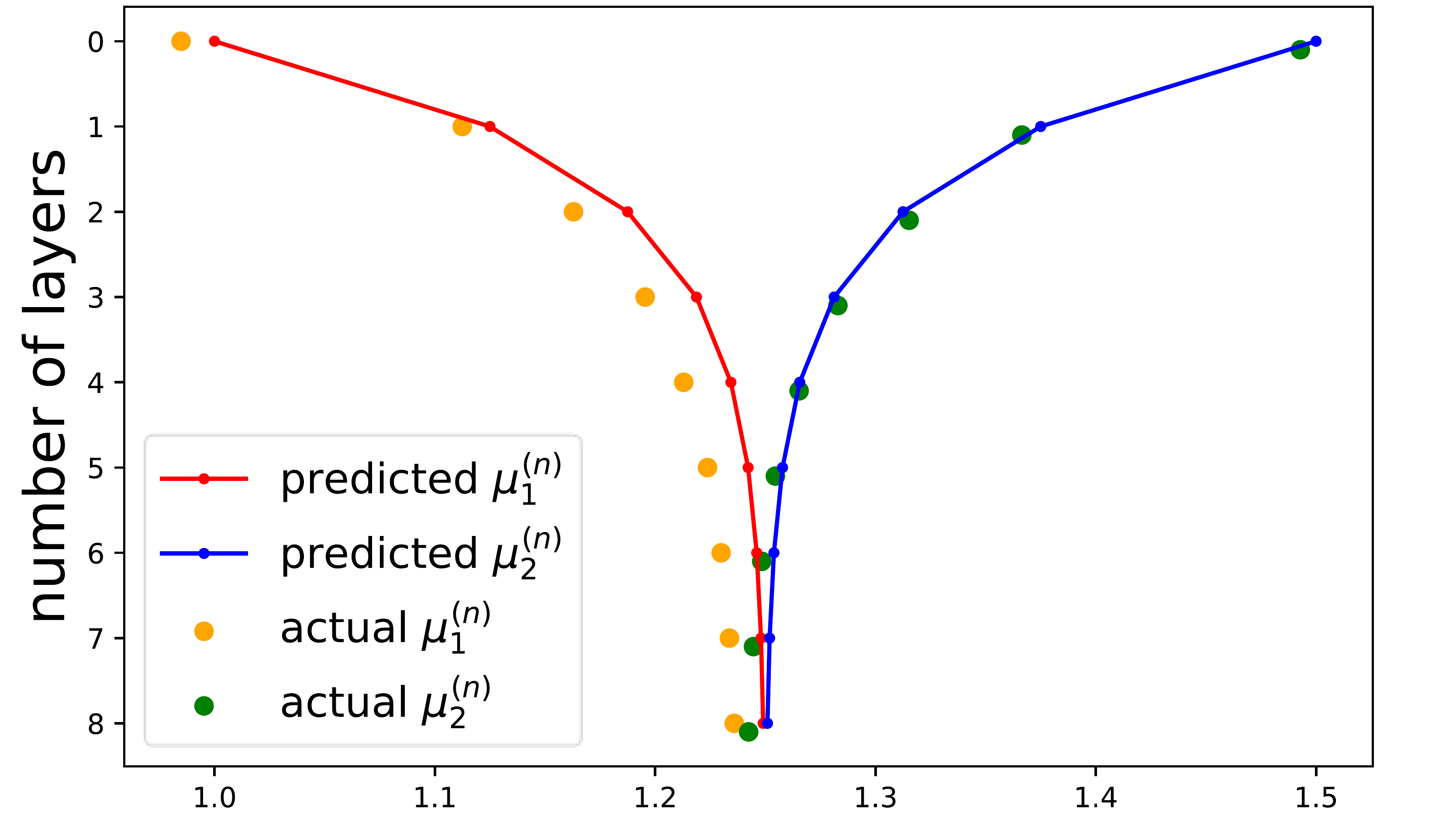}
    \caption{Comparison of the mean estimation in Lemma~\ref{thm:mean} against simulation results.}
    \label{fig:mean_sim}
\end{figure}

\section{Proof of Lemma~\ref{lem:generalvariance}}
Fix n and let the element in the $i^{th}$ row and $j^{th}$ column of $(D^{-1}A)^n$ be $p_{ij}^{(n)}$. Consider a fixed node $i$. The variance of the feature for node $i$ after $n$ layers of convolutions is  $(\sum_{j} (p_{ij}^{(n)})^2)\sigma^2$, by the basic property of variance of sum. Since $\sum_{j}|p_{ij}^{(n)}| = 1$, it follows that $\sum_{j}(p_{ij}^{(n)})^2 \leq 1$, which is the second inequality.

To show the first inequality, consider the following optimization problem:

\begin{mini*}|s|
{p_{ij}^{(n)},1\leq j\leq N}{\sum_{j} (p_{ij}^{(n)})^2}
{}{}
\addConstraint{\sum_{j}p_{ij}^{(n)} = 1}
\addConstraint{p_{ij}^{(n)}\geq 0,\quad 1\leq j\leq N}
\end{mini*}
This part of proof goes by contradiction. Suppose $\exists k,l$ such that $ p_{ik}^{(n)} \neq \exists p_{il}^{(n)}$. Fixing all other $p_{ij}^{(n)}, j\neq k,l$, if we average $p_{ik}^{(n)}$ and $p_{il}^{(n)}$, their sum of squares will strictly decrease while not breaking the constraints:
\begin{align*}
    2\Big(\frac{p_{ik}^{(n)}+p_{il}^{(n)}}{2}\Big)^2 - ((p_{ik}^{(n)})^2+(p_{il}^{(n)})^2) = -\frac{1}{2}(p_{ik}^{(n)}-p_{il}^{(n)})^2 < 0\,.
\end{align*}
So we obtain a contradiction. Thus to minimize $\sum_{j}(p_{ij}^{(n)})^2$, $p_{ij}^{(n)} = \frac{1}{N}, 1\leq j \leq N\,,$
and the mimimum is $1/N$.

\section{Proof of Lemma~\ref{lem:variance}}

The proof relies on the following definition of neighborhood size: in a graph $\mathcal{G}$, we denote by $\Gamma_k(x)$ the set of vertices in $\mathcal{G}$ at distance $k$ from a vertex $x$:
\begin{equation*}
    \Gamma_k(x) = \{y\in\mathcal{G}:d(x,y) = k\}\,.
\end{equation*}
we define $N_k(x)$ to be the set of vertices within distance $k$ of x:
\begin{equation*}
    N_k(x) = \bigcup_{i=0}^k\Gamma_i(x)\,.
\end{equation*}

To prove the lower bound, we first show an intermediate step that \begin{equation*}
    \frac{1}{|N_n|}\sigma^2 \leq (\sigma^{(n)})^2\,.
\end{equation*} The proof is the same as the one for the first inequality in Lemma~\ref{lem:generalvariance}, except we add in another constraint that for a fixed $i$, the row $p_{i\cdot}$ is $|N_n(i)|$-sparse. This implies that the minimum of $\sum_{j}(p_{ij}^{(n)})^2$ becomes $1/|N_n(i)|$.
The we apply the result on upper bound of neighborhood sizes in Erd\H{o}s-R\'enyi graph $\mathcal{G}(N,p)$~(Lemma 2~\cite{Graham2001TheDO}), as it also serves as upper bound of neighborhood sizes in SBM($N$, $p$, $q$). The result implies that with probability at least $1-O(1/N)$, we have 
\begin{equation}
    |N_n|\leq \frac{10}{\min \{a,2\}}(Np)^n\,, \forall 1\leq n \leq N\,.
\end{equation}
We ignore $i$ for $N_n$ because of all nodes are identical in CSBM, so the bound applies for every nodes in the graph.

The proof of upper bound is combinatorial. Corollary~\ref{cor:degree_concentration} states that when $N$ is large, the degree of node $i$ is approximately the expected degree in $\mathcal{G}$, namely, $\mathbb{E}[\text{degree}] = \frac{N}{2}(p+q)$. 
Since 
\begin{equation}
    p_{ij}^{(n)} = \sum_{\text{path}~P = \{i,v_1,...,v_{n-1},j\}}\frac{1}{\deg(i)}\frac{1}{\deg(v_1)}...\frac{1}{\deg(v_{n-1})}\,,
\end{equation}
using the approximation of degrees, we get that 
$$p_{ij}^{(n)} = \left(\frac{2}{N(p+q)}\right)^n\text{($\#$ of paths $P$ of length n between i and j)}\,.$$
Then we use a tree approximation to calculate the number of paths $P$ of length $n$ between $i$ and $j$ by regarding $i$ as the root. Note that \begin{equation}
    \sum_{j} (p_{ij}^{(n)})^2 = \sum_{k= 0}^{\lfloor \frac{n}{2} \rfloor}\sum_{j\in \Gamma_{n-2k}}(p_{ij}^{(n)})^2
\label{eq:var_decomp}
\end{equation}
and for $j\in \Gamma_{n-2k}$, a deterministic path $P'$ of length $n-2k$ is needed in order to reach $j$ from $i$. This implies that there are only $k$ steps deviating from $P'$. There are $(n-2k+1)^k$ ways of choosing when to deviate. For each specific way of when to deviate, there are approximately $\mathbb{E}[\text{degree}]^k$ ways of choosing the destinations for deviation. Hence in total, for $j\in \Gamma_{n-2k}$, there are $(n-2k+1)^k\mathbb{E}[\text{degree}]^k$ path of length $n$ between $i$ and $j$. Thus
\begin{equation}
    p_{ij}^{(n)} = (n-2k+1)^k \left(\frac{2}{N(p+q)}\right)^{n-k}\,.
\label{eq:pij}
\end{equation}
Plug in~(\ref{eq:pij}) into~(\ref{eq:var_decomp}), we get that 
\begin{align}
    \sum_{j} (p_{ij}^{(n)})^2 &= \sum_{k= 0}^{\lfloor \frac{n}{2} \rfloor}|\Gamma_{n-2k}|(n-2k+1)^{2k} \left(\frac{2}{N(p+q)}\right)^{2n-2k}\\
    & \leq \sum_{k= 0}^{\lfloor \frac{n}{2} \rfloor}\frac{9}{\min \{a,2\}}(n-2k+1)^{2k}(Np)^{n-2k}\left(\frac{2}{N(p+q)}\right)^{2n-2k}\label{var_upper}
\end{align}
Again, (\ref{var_upper}) follows from using the upper bound on $|\Gamma_{n-2k}|$~\cite{Graham2001TheDO} such that with probability at least $1-O(1/N)$, 
$$|\Gamma_{n-2k}|\leq \frac{9}{\min \{a,2\}}(Np)^{n-2k}\,, \forall 1\leq k \leq \lfloor \frac{n}{2} \rfloor.$$
Combining with Lemma~\ref{lem:generalvariance}, we obtain the final result.

Figure~\ref{fig:var_vd_sim} presents variance calculated from simulation against predicted upper and lower bounds from our theoretical results. The simulation results are averaged over 1000 instances generated from CSBM($N=2000, p=0.0114, q=0.0038, \mu_1=1, \mu_2=1.5, \sigma^2=1$).

\begin{figure}[h]
    \centering
    \includegraphics[width=7cm]{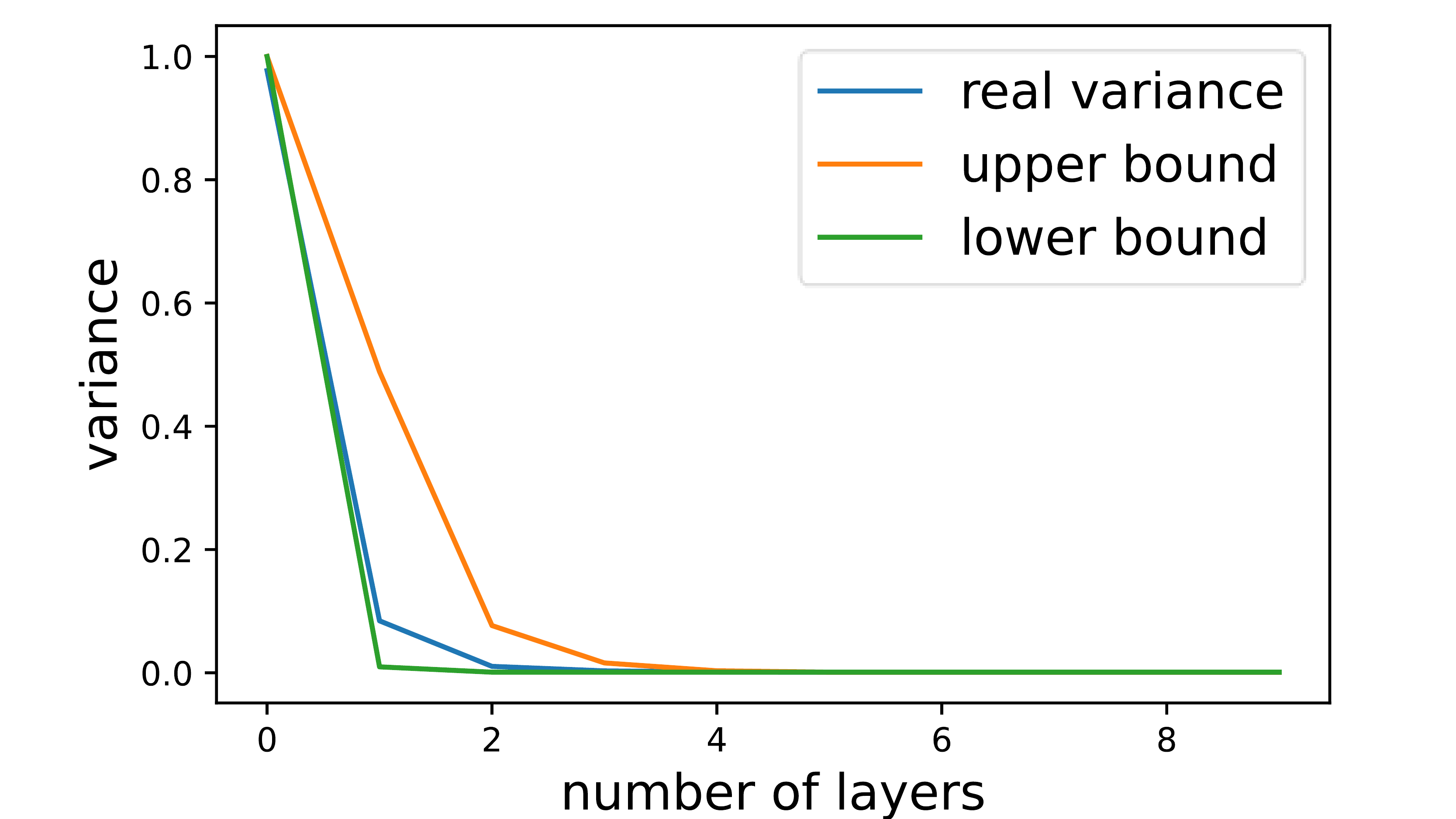}
    \caption{Comparison of the bounds on variance in Theorem~\ref{thm:variance} against simulation results.}
    \label{fig:var_vd_sim}
\end{figure}

\vspace{-2ex}
\section{Proof of Theorem~\ref{thm:variance}}
When we fix $K\in\mathbb{N}$, only the upper bound in Theorem~\ref{thm:variance} will change. Note that now the upper bound in~(\ref{var_upper}) can be written as 
\vspace{-1ex}
\begin{align*}
\small
   \sum_{k= 0}^{\lfloor \frac{n}{2} \rfloor}\frac{9}{\min \{a,2\}}(n-2k+1)^{2k}\left(\frac{p+q}{2p}\right)^{2k}\left(\frac{2p}{p+q}\right)^n\left(\frac{2}{N(p+q)}\right)^n \\
    \leq \frac{C}{\min \{a,2\}}\left(\sum_{k= 0}^{C}\left(\frac{p+q}{2p}\right)^{2k}\right)\left(\frac{2}{N(p+q)}\right)^n\\
    \leq \frac{C}{\min \{a,2\}} \left(\frac{2}{N(p+q)}\right)^n\,.\\
\end{align*}
\vspace{-10ex}

\section{Proof of Proposition~\ref{prop:input_dim}}
Let the node representation vector of node $v$ after $n$ graph convolutions be $h_v^{(n)}$. The Bayes error rate could be written as $\frac{1}{2}(\mathbb{P}[h_v^{(n)} > \mathcal{D}|v\in\mathcal{C}_1]+\mathbb{P}[h_v^{(n)} \leq \mathcal{D}|v\in\mathcal{C}_2])$. For $d\in
\mathbb{N}$, due to the symmetry of our setup, one can easily see that the optimal linear decision boundary is the hyperplane $\sum_{j=1}^d x_j = \frac{d}{2}(\mu_1+\mu_2)$. Then for $v\in\mathcal{C}_1$, $\sum_{j=1}^d {(h_v^{(n)})}_j\sim \mathcal{N}(d\mu_1^{(n)},d(\sigma^{(n)})^2)$ and for $v\in\mathcal{C}_2$, $\sum_{j=1}^d {(h_v^{(n)})}_j\sim \mathcal{N}(d\mu_2^{(n)},d(\sigma^{(n)})^2)$. Thus the Bayes error rate can be written as 
\begin{align*}
    &\frac{1}{2}(\mathbb{P}[\sum_{j=1}^d {(h_v^{(n)})}_j > \mathcal{D}|v\in\mathcal{C}_1]+\mathbb{P}[\sum_{j=1}^d {(h_v^{(n)})}_j \leq \mathcal{D}|v\in\mathcal{C}_2])\\
    &=\frac{1}{2}\left(1-\Phi\left(\frac{\frac{d}{2}(\mu_1+\mu_2)-d\mu_1^{(n)}}{\sqrt{d}\sigma^{(n)}}\right)\right)+\frac{1}{2}\left(\Phi\left(\frac{\frac{d}{2}(\mu_1+\mu_2)-d\mu_2^{(n)}}{\sqrt{d}\sigma^{(n)}}\right)\right)\\&=1-\Phi\left(\frac{\frac{d}{2}(\mu_1+\mu_2)-d\mu_1^{(n)}}{\sqrt{d}\sigma^{(n)}}\right)\,.
\end{align*}
The last equality follows from the fact that $\frac{d}{2}(\mu_1+\mu_2)-d\mu_1^{(n)} = -(\frac{d}{2}(\mu_1+\mu_2)-d\mu_2^{(n)})$.

\section{How to use the z-score to choose the number of layers}\label{app:z-score}
The bounds of the z-score with respect to the number of layers, $z_\text{lower}^{(n)}$ and $z_\text{upper}^{(n)}$ allow us to calculate bounds for $n^\star$ and $n_0$ under different scenarios. Specifically,
\begin{enumerate}
    \item $\forall n\in \mathbb{N}, z_\text{upper}^{(n)} < z^{(0)} = (\mu_2-\mu_1)/\sigma$, then $n^\star = n_{0} = 0$, meaning that no graph convolution should be applied.
    \item $|\{n\in\mathbb{N}:z_\text{upper}^{(n)} \geq z^{(0)}\}| > 0$, and
    \begin{enumerate}
        \item $\forall n\in \mathbb{N}, z_\text{lower}^{(n)} < z^{(0)}$, then 
        $0\leq n_0 \leq \min\{n\in\mathbb{N}: z_\text{upper}^{(n)} \leq z^{(0)}\}\,,$
        which means that the number of graph convolutions should not exceed the upper bound of $n_0$, or otherwise one gets worse performance than having no graph convolution. Note that in this case, since $n^\star \leq n_0$, we can only conclude that $$0 \leq n^\star \leq \min\{n\in\mathbb{N}: z_\text{upper}^{(n)} \leq z^{(0)}\}\,.$$
        \item $|\{n\in\mathbb{N}:z_\text{lower}^{(n)} \geq z^{(0)}\}| > 0$, then 
        $0\leq n_0 \leq \min\{n\in\mathbb{N}: z_\text{upper}^{(n)} \leq z^{(0)}\}\,,$
        and let $\underset{n}{\arg\max}z_\text{lower}^{(n)} = n^\star_\text{floor}$,
        $$\max\Bigg\{ n\leq n^\star_\text{floor} :z_\text{upper}^{(n)}\leq z_\text{lower}^{(n^\star_\text{floor})}\Bigg\}\leq n^\star \leq \min\Bigg\{ n\geq n^\star_\text{floor} :z_\text{upper}^{(n)}\leq z_\text{lower}^{(n^\star_\text{floor})}\Bigg\}\,,$$
        meaning that the number of layers one should apply for optimal node classification performance is more than the lower bound of $n^\star$, and less than the upper bound of $n^\star$.
    \end{enumerate}
\end{enumerate}

\section{Proofs of Proposition~\ref{prop:PPNP_mean}-\ref{prop:APPNP_var}}

\subsection{Proof of Proposition~\ref{prop:PPNP_mean}}

Since the spectral radius of $D^{-1}A$ is 1,  
$$\alpha(Id - (1-\alpha)(D^{-1}A))^{-1} = \alpha\sum_{k=0}^\infty (1-\alpha)^k (D^{-1}A)^k\,.$$ 
Apply Lemma~\ref{thm:mean}, we get that $\mu_2^{\text{PPNP}} - \mu_1^{\text{PPNP}} \approx \frac{p+q}{p+\frac{2-\alpha}{\alpha }q}(\mu_2-\mu_1)$.

To bound the approximation error, similar to the proof of the concentration bound in Theorem~\ref{thm:mean_concentration}, it suffices to bound
$$
\frac{\mu_1-\mu_2}{N}w_2^\top(\sum_{k=0}^\infty\alpha(1-\alpha)^k((D^{-1}A)^k-(\bar{D}^{-1}\bar{A})^k))w_2  = \frac{\mu_1-\mu_2}{N}w_2^\top(T_{K}+ T_{K+1,\infty})w_2\,,$$
where $T_{K} = \sum_{k=0}^K\alpha(1-\alpha)^k((D^{-1}A)^k-(\bar{D}^{-1}\bar{A})^k)$, $T_{K+1,\infty} = \sum_{k=K+1}^\infty \alpha(1-\alpha)^k((D^{-1}A)^k-(\bar{D}^{-1}\bar{A})^k)$, and $K\in\mathbb{N}$ up to our own choice.
\paragraph{Bounding $T_{K}$:} Apply Theorem~\ref{thm:mean_concentration}, fix $r>0$, there exists a constant $C(r,K,\alpha)$ such that with probability $1-O(N^{-r})$, $$\|T_K\|_2 \leq \frac{C}{\sqrt{\bar{d}}}\,.$$
\paragraph{Bounding $T_{K+1,\infty}$:} We will show upper bound for $(D^{-1}A)^k-(\bar{D}^{-1}\bar{A})^k$ that applies for all $k\in\mathbb{N}$.
Note that for every $k\in\mathbb{N}$, 
$$(D^{-1}A)^k = D^{-1/2}(D^{-1/2}AD^{-1/2})^kD^{1/2} = D^{-1/2}(V\Lambda^kV^\top)D^{1/2}\,,$$
where $D^{-1/2}AD^{-1/2} = V\Lambda V^\top$ is the eigenvalue decomposition. 
Then \begin{align*}
    \|(D^{-1}A)^k-(\bar{D}^{-1}\bar{A})^k)\|_2 &\leq \|(D^{-1}A)^k\|_2+ \|(\bar{D}^{-1}\bar{A})^k\|_2 =  \|(D^{-1}A)^k\|_2 + 1\\
     & \leq \|D^{-1/2}\|_2\|(D^{-1/2}AD^{-1/2})^k\|_2\|D^{-1/2}\|_2+1\,.
\end{align*} 
Since $\|(D^{-1/2}AD^{-1/2})^k\|_2 = 1$ and by Corollary~\ref{cor:degree_concentration}, with probability at least $1-N^{-r}$, 
$$\|D^{1/2}\|_2\leq \sqrt{3\bar{d}/2}, \|D^{-1/2}\|_2\leq \sqrt{2/\bar{d}}\,,$$
the previous inequality becomes $\|(D^{-1}A)^k-(\bar{D}^{-1}\bar{A})^k)\|_2\leq \sqrt{3}+1$. Hence 
$$\|T_{K+1,\infty}\|_2 \leq (1-\alpha)^{K+1}\,.$$
Combining the two results, we prove the claim.

\subsection{Proof of Proposition~\ref{prop:APPNP_mean}}
The claim is a direct corollary of Theorem~\ref{thm:mean_concentration}.

\subsection{Proof of Proposition~\ref{prop:PPNP_var}}
The covariance matrix $\Sigma^\text{PPNP}$ of $h^\text{PPNP}$ could be written as
\begin{align*}
    \Sigma^\text{PPNP} &=\alpha^2(\sum_{k=0}^\infty (1-\alpha)^k(D^{-1}A)^k)(\sum_{l=0}^\infty (1-\alpha)^l(D^{-1}A)^l)^\top\sigma^2\,.
\end{align*}
Note that the variance of node $i$ equals $\alpha^2\sum_{k,l=0}^\infty (1-\alpha)^{k+l}(D^{-1}A)^k_{i\cdot} ((D^{-1}A)^l)_{i\cdot}^\top$, where $i\cdot$ refers row $i$ of a matrix. Then by Cauchy-Schwarz Theorem,
\begin{align*}
    (D^{-1}A)^k_{i\cdot} ((D^{-1}A)^l)_{i\cdot}^\top &\leq \|(D^{-1}A)^k_{i\cdot}\| \|((D^{-1}A)^l)_{i\cdot}\|\\
                                                     &\leq \sqrt{({\sigma}^{(k)})^2({\sigma}^{(l)})^2}/\sigma^2, \text{for all } 1\leq k,l \leq N\,.
\end{align*}
Moreover, by Lemma~\ref{lem:generalvariance}, $({\sigma}^{(k)})^2 \leq \sigma^2$. 
Due to the identity of each node $i$, we get that with probability $1-O(1/N)$, for all $1\leq K \leq N$,
\begin{align*}
     ({\sigma}^{\text{PPNP}})^2 &\leq \alpha^2\left(\sum_{k=0}^K(1-\alpha)^k\sqrt{({\sigma}^{(k)})^2_{\text{upper}}}+\sum_{k=K+1}^\infty(1-\alpha)^k\sigma\right)^2\\
&\leq \alpha^2\left(\sum_{k=0}^K(1-\alpha)^k\sqrt{({\sigma}^{(k)})^2_{\text{upper}}}+\frac{(1-\alpha)^{K+1}}{\alpha}\sigma\right)^2\,.
\end{align*}
For the lower bound, note that with probability $1-O(1/N)$, $$ ({\sigma}^{\text{PPNP}})^2 \geq \alpha^2 \left(\sum_{k=0}^N (1-\alpha)^{2k} \frac{1}{N_k} +\sum_{k=N+1}^\infty (1-\alpha)^{2k}\frac{1}{N}\right)\sigma^2\,,$$where $N_k$ is the size of $k$-hop neighborhood. Then 
\begin{align*}
   ({\sigma}^{\text{PPNP}})^2 &\geq \alpha^2 \left(\sum_{k=0}^N (1-\alpha)^{2k} \frac{\min\{a,2\}}{10}\frac{1}{(Np)^k}\right)\sigma^2\\
   & \geq \alpha^2 \frac{\min\{a,2\}}{10}\frac{(Np)^{N+1}-(1-\alpha)^{2N+2}}{(Np)^N(Np-(1-\alpha)^2)}\sigma^2\\
   & \geq \alpha^2 \frac{\min\{a,2\}}{10}\sigma^2\,.
\end{align*}
It is easy to see that Lemma~\ref{lem:generalvariance} applies to any message-passing scheme which could be regarded as a random walk on the graph. Combining with Lemma~\ref{lem:generalvariance}, we get the final result.

\subsection{Proof of Proposition~\ref{prop:APPNP_var}}
Since 
$$h^{\text{APPNP}(n)} = \left(\alpha\left(\sum_{k=0}^{n-1}(1-\alpha)^k(D^{-1}A)^k\right) + (1-\alpha)^n(D^{-1}A)^n\right)X\,$$
Through the same calculation as for the upper bound in the proof of Proposition~\ref{prop:PPNP_mean}, we get that with probability $1-O(1/N)$,
$$({\sigma}^{\text{APPNP}(n)})^2\leq \Bigg(\alpha\bigg(\sum_{k=0}^{n-1}(1-\alpha)^k\sqrt{(\sigma^{(k)})^2_{\text{upper}}}\bigg)+(1-\alpha)^n\sqrt{(\sigma^{(n)})^2_{\text{upper}}}\Bigg)^2\,.$$
For the lower bound, through the same calculation as for the upper bound in the proof of Proposition~\ref{prop:PPNP_mean}, we get that with probability $1-O(1/N)$,
\begin{align*}
    ({\sigma}^{\text{APPNP}(n)})^2 &\geq \alpha^2\sum_{k=0}^{n-1}(1-\alpha)^{2k}({\sigma}^{(k)})^2+(1-\alpha)^{2n}(\sigma^{(n)})^2  \\
    &\geq \alpha^2\frac{\min\{a,2\}}{10}\left(\sum_{k=0}^{n-1}(1-\alpha)^{2k}\frac{1}{(Np)^k}\right)\sigma^2+ \frac{\min\{a,2\}}{10}(1-\alpha)^{2n}\frac{1}{(Np)^n}\sigma^2 \\
    &\geq \frac{\min\{a,2\}}{10}\left(\alpha^2+\frac{(1-\alpha)^{2n}}{(Np)^n}\right)\sigma^2\,.
\end{align*}
Combining with Lemma~\ref{lem:generalvariance}, we get the final result.

\section{Experiments}\label{app:exp}

Here we provide more details on the models that we use in Section~\ref{exp}. In all cases we use the Adam optimizer and tune some hyperparameters for better performance. The hyperparameters used are summarized as follows.

\begin{table*}[h]
\scriptsize
  \centering

\begin{tabular}{ccccc}
    \toprule
     Data & final linear classifier & weights in GNN layer & learning rate (width) & iterations (width) \\
    \midrule
     
     \multirow{ 2}{*}{synthetic} &\multirow{ 2}{*}{1 layer}& no &0.01  & 8000     
           \\  
    & & yes  &
     0.01(1,4,16)/0.001(64,256)  & 8000(1,4,16)/10000(64)/50000(256)    \\  
     \multirow{ 2}{*}{Cora} &\multirow{ 2}{*}{3 layer with 32 hidden channels}& no &0.001&150       
           \\  
    & & yes  &
     0.001   & 200   \\  
     \multirow{ 2}{*}{CiteSeer} &\multirow{ 2}{*}{3 layer with 16 hidden channels}& no &0.001  & 100   
           \\  
    & & yes  &
    0.001 & 100    \\ 
    \multirow{ 2}{*}{PubMed} &\multirow{ 2}{*}{3 layer with 32 hidden channels}& no &0.001& 500       
           \\  
    & & yes  &
     0.001  & 500   \\  
    \bottomrule
  \end{tabular}
\end{table*}

We empirically find that after adding in weights in each GNN layer, it takes much longer to train the model for one iteration, and the time increases when the depth or the width increases (Figure~\ref{fig:iter_time}). Since for some combinations, it takes more than 200,000 iterations for the validation accuracy to finally increase, for each case, we only train for a reasonable amount of iterations.
\begin{figure}[h]
    \centering
    \includegraphics[width = 8cm]{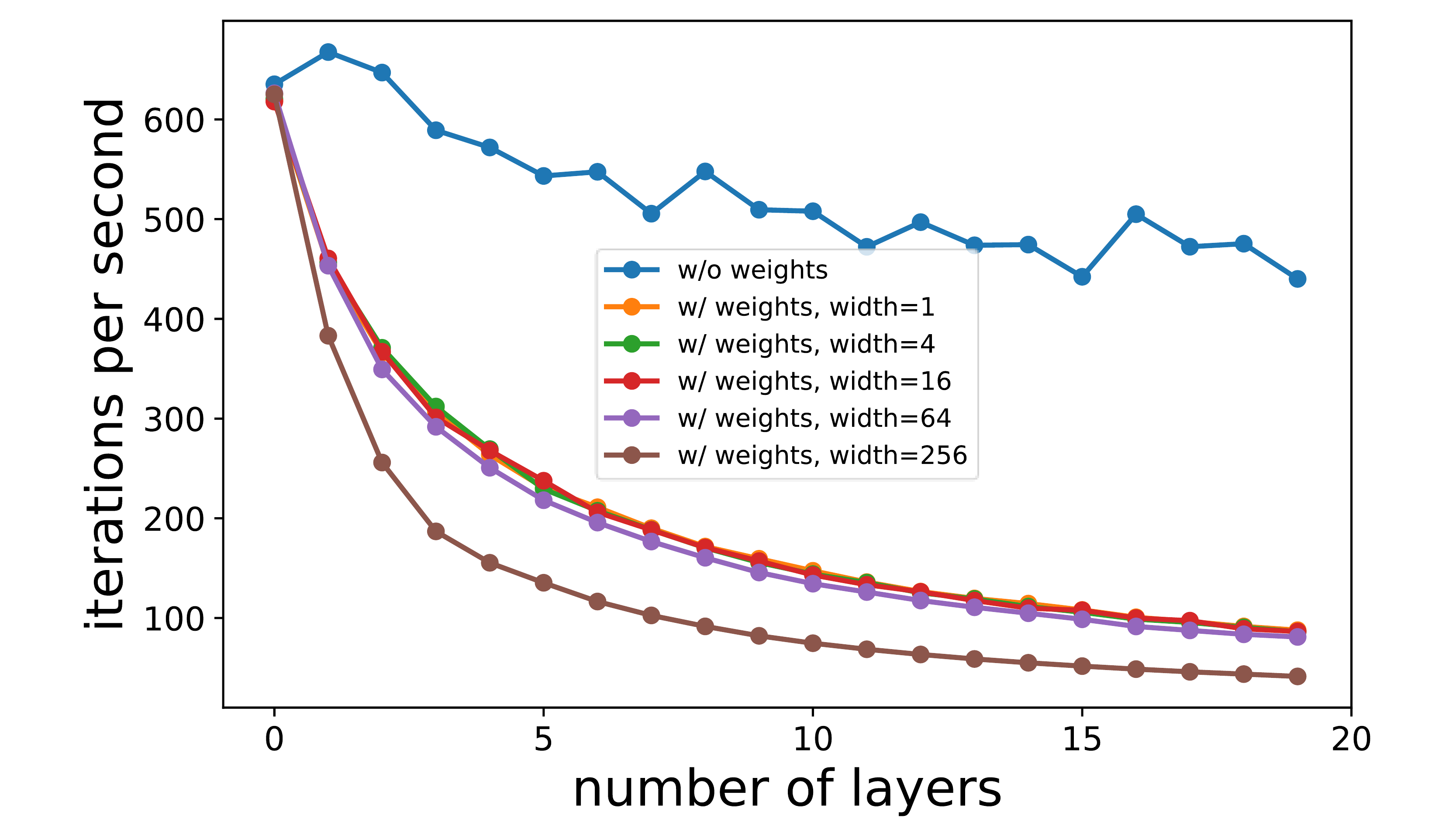}
    \caption{Iterations per second for each model.}
    \label{fig:iter_time}
    \vspace{-1ex}
\end{figure}

All models were implemented with PyTorch~\citep{Paszke2019PyTorchAI} and PyTorch Geometric~\citep{Fey/Lenssen/2019}.

\vspace{-1ex}
\section{Additional Results}
\vspace{-1ex}

\subsection{Effect of nonlinearity on classification performance}\label{app:nonlinearity}    
In section~\ref{sec:main_results}, we consider the case of a simplified linear GNN. What would happen if we add nonlinearity after linear graph convolutions? Here, we consider the case of a GNN with a ReLU activation function added after $n$ linear graph convolutions, i.e. $h^{(n)\text{ReLU}} = \text{ReLU}((D^{-1}A)^n X)$. We show that adding such nonlinearity does not improve the classification performance.

\begin{proposition}
Applying a ReLU activation function after n linear graph convolutions does not decrease the Bayes error rate, i.e.
\small Bayes error rate based on $h^{(n)\text{ReLU}} \geq$ Bayes error rate based on $h^{(n)}$, \normalsize and equality holds if $\mu_1 \geq -\mu_2$.
\label{prop:nonlinearity}
\end{proposition}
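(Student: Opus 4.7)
The inequality is an instance of the data processing inequality (DPI) for Bayes risk: applying ReLU (or any deterministic function) to the features cannot decrease the optimal classification error. The plan is to spell this out explicitly in our setting, then handle the equality case via a direct case analysis that exploits the special structure of the decision boundary.

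First, I would prove the inequality. Let $\mathcal{B}^{\text{ReLU}}$ denote any classifier acting on $h^{(n)\text{ReLU}}$. Since $h^{(n)\text{ReLU}}=\text{ReLU}(h^{(n)})$ is a deterministic function of $h^{(n)}$, the composition $\mathcal{B}^{\text{ReLU}}\circ\text{ReLU}$ is a valid classifier on $h^{(n)}$ with \emph{exactly} the same error probability. Taking the infimum over all such $\mathcal{B}^{\text{ReLU}}$ and noting that the Bayes classifier on $h^{(n)}$ dominates every classifier on $h^{(n)}$ gives the desired inequality between Bayes error rates.

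Second, I would establish the equality claim when $\mu_1\ge -\mu_2$. From Lemma~\ref{thm:mean} we have $\mu_1^{(n)}+\mu_2^{(n)}=\mu_1+\mu_2$, so by Lemma~\ref{lem:overlapping_area} the Bayes decision boundary on $h^{(n)}$ is $\mathcal{D}=(\mu_1+\mu_2)/2$, which under the assumption $\mu_1\ge -\mu_2$ satisfies $\mathcal{D}\ge 0$. I would then consider the specific classifier $\tilde{g}$ on $h^{(n)\text{ReLU}}$ that predicts class $2$ iff its input exceeds $\mathcal{D}$, and verify by cases that $\tilde{g}(h^{(n)\text{ReLU}})=\mathcal{B}(h^{(n)})$ pointwise, where $\mathcal{B}$ is the Bayes classifier on $h^{(n)}$: (i) if $h^{(n)}>\mathcal{D}\ge 0$, then $\text{ReLU}(h^{(n)})=h^{(n)}>\mathcal{D}$, so both predict class $2$; (ii) if $h^{(n)}\le\mathcal{D}$, then $\text{ReLU}(h^{(n)})=\max(h^{(n)},0)\le\mathcal{D}$ because $\mathcal{D}\ge 0$, so both predict class $1$. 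Thus $\tilde{g}$ attains the original Bayes error rate, which together with the DPI bound forces equality.

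The argument is essentially elementary; the only subtle point, and the one I would pay closest attention to, is that the equality half genuinely requires the hypothesis $\mathcal{D}\ge 0$: when $\mathcal{D}<0$ the above case analysis breaks down (ReLU sends many samples from $\mathcal{C}_1$ to $0>\mathcal{D}$, which would be misclassified as class $2$), so one cannot recover the Bayes classifier on $h^{(n)}$ through any threshold rule applied to $h^{(n)\text{ReLU}}$. Making this dichotomy explicit in the case analysis is where care is needed, but no nontrivial estimate is required.
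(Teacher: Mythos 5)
Your proof is correct, and it takes a genuinely different route from the paper's. The paper argues geometrically: it observes that ReLU of a Gaussian is a rectified Gaussian (a truncated density plus an atom at $0$) and compares the overlapping area of the two class-conditional distributions before and after rectification via a figure, the Bayes error being half that area; equality holds when the atom at $0$ sits on the class-$1$ side of the boundary $\mathcal{D}=(\mu_1+\mu_2)/2$. You instead avoid the rectified Gaussian entirely: the inequality is the data processing inequality for Bayes risk (any classifier on $\mathrm{ReLU}(h^{(n)})$ pre-composed with ReLU is a classifier on $h^{(n)}$, so the infimum over the former cannot beat the infimum over the latter), and the equality case is an explicit threshold classifier $\tilde g$ on the rectified features that reproduces the Bayes classifier $\mathcal{B}$ pointwise when $\mathcal{D}\ge 0$, using $\mu_1^{(n)}+\mu_2^{(n)}=\mu_1+\mu_2$ so that the boundary is unchanged by convolutions. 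Your version is more rigorous (it does not lean on a picture) and more general — the inequality half works verbatim for any deterministic feature map, and the equality half isolates exactly the hypothesis needed, namely that ReLU is invertible-enough on the decision-relevant region. What the paper's density-level argument buys in exchange is quantitative information: from the overlapping-area picture one can read off \emph{how much} the error degrades when $\mathcal{D}<0$, whereas your DPI argument only certifies the direction of the inequality. Both arguments are sound; yours would be the cleaner one to put in print.
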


\begin{proof}
If is known that if $x$ follows a Gaussian distribution, then ReLU$(x)$ follows a Rectified Gaussian distribution. Following the definition of the Bayes optimal classifier, we present a geometric proof in Figure~\ref{fig:nonlinearity_pf} (see next page, top),  where the dark blue bar denotes the location of $0$ and the red bar denotes the decision boundary $\mathcal{D}$ of the Bayes optimal classifier, and the light blue area denotes the overlapping area $S$, which is twice the Bayes error rate.

\begin{figure}[h]
    \centering
    \includegraphics[width =0.7\textwidth]{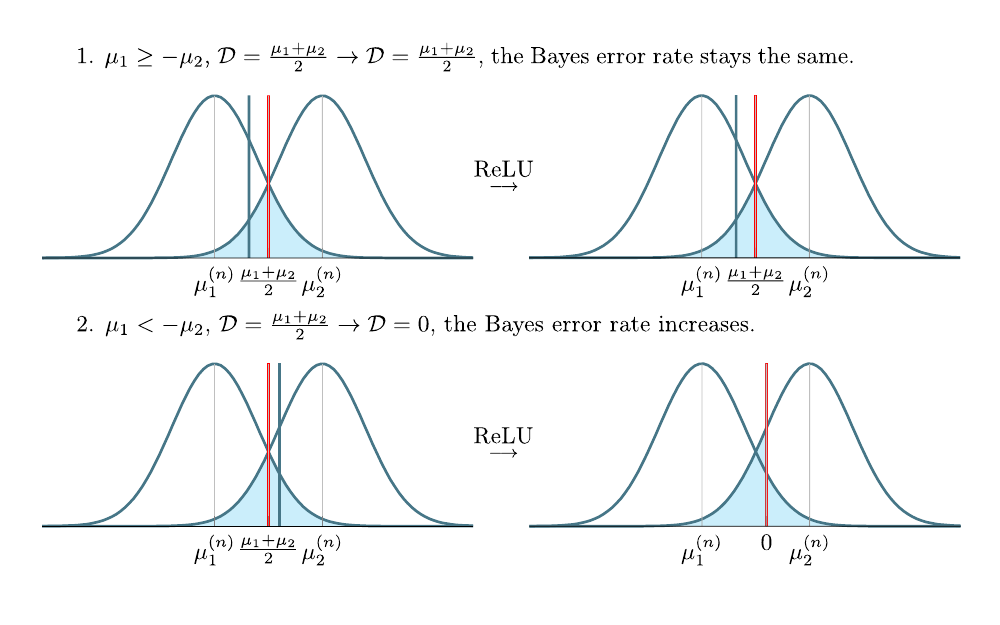}
    \caption{A geometric proof of Proposition~\ref{prop:nonlinearity}.}
    \label{fig:nonlinearity_pf}
    \vspace{-2ex}
\end{figure}
\end{proof}

\subsection{Exact limit of variance $(\sigma^{(n)})^2$ as $n \to \infty$ }~\label{app:var_exact_limit}
\vspace{-3ex}
\begin{proposition}
Given a graph $\mathcal{G}$ with adjacency matrix $A$, let its degree vector be $d = A\mathbbm{1}_N$, where $\mathbbm{1}_N$ is the all-one vector of length $N$. If $\mathcal{G}$ is connected and non-bipartite, the variance of each node $i$, denoted as $({\sigma_i}^{(n)})^2$, converges asymptotically to $\frac{\|d\|_2^2}{\|d\|_1^2}$,  i.e. 
\vspace{-2ex}
$$({\sigma_i}^{(n)})^2 \xrightarrow{n \to \infty} \frac{\|d\|_2^2}{\|d\|_1^2} \,. $$
Then $\frac{\|d\|^2}{\|d\|_1^2} \geq \frac{1}{N}$, and the equality holds if and only if $\mathcal{G}$ is regular.
\end{proposition}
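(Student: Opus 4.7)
The plan is to exploit the fact that $D^{-1}A$ is similar to the symmetric matrix $S := D^{-1/2} A D^{-1/2}$, so that the spectral theorem can be applied, and then to combine the resulting entry-wise limit with the identity $(\sigma_i^{(n)})^2/\sigma^2 = \sum_{j} (p_{ij}^{(n)})^2$ already derived in the proof of Lemma~\ref{lem:generalvariance} (where $p_{ij}^{(n)} = (D^{-1}A)^n_{ij}$).

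First I would record the spectral facts about $S$. Writing $\sqrt{d} := (\sqrt{d_1},\dots,\sqrt{d_N})^\top$, direct computation gives $S\sqrt{d} = \sqrt{d}$, so $+1$ is an eigenvalue of $S$ with eigenvector $\sqrt{d}/\sqrt{\|d\|_1}$; connectedness of $\mathcal{G}$ makes it simple, and non-bipartiteness is equivalent to $-1$ not being in the spectrum. All other eigenvalues of $S$ therefore lie strictly in $(-1,1)$, so by spectral decomposition $S^n \to \sqrt{d}\sqrt{d}^\top/\|d\|_1$ as $n\to\infty$. Using $D^{-1}A = D^{-1/2} S\, D^{1/2}$, one then reads off
\begin{equation*}
\lim_{n\to\infty} p_{ij}^{(n)} = \frac{1}{\sqrt{d_i}}\cdot\sqrt{d_i}\sqrt{d_j}\cdot\sqrt{d_j}\cdot\frac{1}{\|d\|_1} = \frac{d_j}{\|d\|_1},
\end{equation*}
which is exactly the stationary distribution of the simple random walk on $\mathcal{G}$ (and could alternatively be invoked directly from standard Markov chain convergence).

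Combining with Lemma~\ref{lem:generalvariance} and swapping the (finite) sum with the limit gives
\begin{equation*}
\lim_{n\to\infty}\frac{(\sigma_i^{(n)})^2}{\sigma^2} = \sum_{j=1}^N \left(\frac{d_j}{\|d\|_1}\right)^2 = \frac{\|d\|_2^2}{\|d\|_1^2},
\end{equation*}
which is the first assertion. The inequality $\|d\|_2^2/\|d\|_1^2 \geq 1/N$ then follows immediately from Cauchy--Schwarz applied to $d$ and $\mathbbm{1}_N$: $\|d\|_1^2 = \langle d,\mathbbm{1}_N\rangle^2 \leq \|d\|_2^2\cdot\|\mathbbm{1}_N\|_2^2 = N\|d\|_2^2$, with equality iff $d \propto \mathbbm{1}_N$, i.e.\ iff $\mathcal{G}$ is regular.

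The only nontrivial step is the spectral convergence $S^n \to \sqrt{d}\sqrt{d}^\top/\|d\|_1$, and it is precisely here that both hypotheses enter: connectedness ensures simplicity of the Perron eigenvalue $+1$, while non-bipartiteness rules out the oscillatory eigenvalue $-1$. Neither is difficult, but both must be invoked explicitly to justify taking the entry-wise limit; everything else is routine linear algebra.
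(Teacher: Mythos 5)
Your proof is correct and follows essentially the same route as the paper: both arguments reduce to the fact that each row of $(D^{-1}A)^n$ converges to the stationary distribution $\pi_j = d_j/\|d\|_1$, after which $(\sigma_i^{(n)})^2 \to \|\pi\|_2^2 = \|d\|_2^2/\|d\|_1^2$ and the bound $\geq 1/N$ with equality iff $\mathcal{G}$ is regular follows from Cauchy--Schwarz (the paper phrases it as minimizing $\|\pi\|_2^2$ over the simplex). The only difference is that the paper simply invokes ergodicity of the random walk, whereas you prove that convergence explicitly via the spectral decomposition of $D^{-1/2}AD^{-1/2}$ --- a self-contained justification of the same fact, which you yourself note could be replaced by the standard Markov-chain convergence theorem.
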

\vspace{-2ex}
\begin{proof}
Let $e_i$ denotes the standard basis unit vector with the $i^{th}$ entry equals $1$, and all other entries equal $0$.  Since $\mathcal{G}$ is connected and non-bipartite, the random walk represented by $P = D^{-1}A$ is ergodic, meaning that
$$e_{i}^\top P^{(n)} \xrightarrow{n \to \infty} \pi\,,$$
where $\pi$ is the stationary distribution of this random walk with $\pi_i = \frac{d_i}{\|d\|_1}$.
Then since norms are continuous functions, we conclude that 
$$({\sigma_i}^{(n)})^2 = \sum_{j} (p_{ij}^{(n)})^2 = \|e_{i}^\top P^{(n)}\|_2^2 \xrightarrow{n \to \infty} \|\pi\|_2^2 = \frac{\|d\|_2^2}{\|d\|_1^2}\,.$$
By Lemma~\ref{lem:generalvariance}, it follows that $\frac{\|d\|_2^2}{\|d\|_1^2}\geq \frac{1}{N}$. The unique minimizer of $\|\pi\|_2^2$ subject to $\|\pi\|_1 = 1$ is $\pi = \frac{1}{N}\mathbbm{1}_N$. This means that $\mathcal{G}$ must be regular to achieve the lower bound asymptotically.
\end{proof}
\vspace{-1ex}
Under Assumption~\ref{assumption}, the graph generated by our CSBM is almost surely connected. Here, we remain to show that with high probability, the graph will also be non-bipartite.
\begin{proposition}
With probability at least $1-O(1/(Np)^3)$, a graph $\mathcal{G}$ generated from CSBM($N$, $p$, $q$, $\mu_1$, $\mu_2$, $\sigma^2$) contains a triangle, which implies that it is non-bipartite.
\end{proposition}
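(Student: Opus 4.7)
The plan is to prove existence of a triangle via the second moment method, restricting attention to triangles whose three vertices all lie in $\mathcal{C}_1$. Since bipartite graphs contain no odd cycles, the existence of any triangle immediately rules out bipartiteness, so it suffices to show such an intra-class triangle exists with probability at least $1 - O(1/(Np)^3)$.

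First, let $T = \sum_S X_S$, where $S$ ranges over unordered triples in $\mathcal{C}_1$ and $X_S$ is the indicator that all three edges of $S$ are present in $\mathcal{G}$. Since intra-class edges are i.i.d.\ Bernoulli$(p)$ and independent across disjoint vertex pairs, $\E[T] = \binom{N/2}{3} p^3 = \Theta((Np)^3)$. To bound $\Var(T) = \sum_{S, S'}\Cov(X_S, X_{S'})$, I would enumerate by $|S \cap S'|$: when $|S \cap S'| \leq 1$, the two triples' edge sets are disjoint (a shared vertex alone does not induce a shared edge), so $X_S$ and $X_{S'}$ are independent and $\Cov(X_S, X_{S'}) = 0$; when $|S \cap S'| = 2$, the triangles share exactly one edge, so $\E[X_S X_{S'}] = p^5$ and $\Cov(X_S, X_{S'}) \leq p^5$. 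There are $O(N^4)$ ordered pairs of the latter type and $O(N^3)$ diagonal terms contributing at most $p^3$ each, yielding $\Var(T) = O(N^3 p^3 + N^4 p^5)$.

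Chebyshev's inequality then gives
$$\P(T = 0) \leq \P(|T - \E[T]| \geq \E[T]) \leq \frac{\Var(T)}{\E[T]^2} = O\!\left(\frac{1}{(Np)^3}\right) + O\!\left(\frac{1}{N^2 p}\right).$$
Under Assumption~\ref{assumption}, $Np = \omega(\log N)$, so both terms vanish; in the sparse regime $Np^2 = O(1)$ the cross term $1/(N^2 p)$ is absorbed into $1/(Np)^3$, yielding the stated $O(1/(Np)^3)$ failure probability, and the non-bipartite conclusion follows immediately.

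The main obstacle will be the variance bookkeeping---precisely justifying independence for the $|S \cap S'| \leq 1$ cases and carefully counting the $O(N^4)$ overlap pairs with the right constants---together with verifying that the cross term $O(1/(N^2 p))$ is indeed dominated by $O(1/(Np)^3)$ in the regime that matters. Outside the very sparse regime (i.e.\ when $Np^2$ diverges), recovering the exact rate $O(1/(Np)^3)$ may require a sharper concentration tool such as a Janson-type inequality or a vertex-exposure martingale, but the qualitative conclusion that the failure probability vanishes as $N\to\infty$ is robust to the choice of concentration argument.
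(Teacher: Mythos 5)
Your proof follows essentially the same route as the paper's: a second moment (Chebyshev) bound on the number of triangles, with your restriction to intra-class triples being only a cosmetic simplification that makes every relevant edge probability equal to $p$. If anything, your variance bookkeeping is more explicit, and your caveat that the cross term $O(1/(N^2p))$ dominates $O(1/(Np)^3)$ once $Np^2$ diverges flags a genuine issue that the paper's own proof glosses over when it asserts its analogous $O(1/N)$ term is bounded by $O(1/(Np)^3)$.
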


\vspace{-5ex}
\begin{proof}

The proof goes by the classic probabilistic method. Let $T_\Delta = \sum\limits_i^{ {N \choose 3} }\mathbbm{1}_{\tau_i}$ denotes the number of triangles in $\mathcal{G}$, where $\mathbbm{1}_{\tau_i}$ equals $1$ if potential triangle $\tau_i$ exists and $0$ otherwise. Then by second moment method, 
$$\mathbb{P}[T_\Delta  = 0] \leq \frac{\text{Var}(T_\Delta)}{(\mathbb{E}[T_\Delta])^2} = \frac{1}{\mathbb{E}[T_\Delta])}+\frac{\sum_{i\neq j}\mathbb{E}[\mathbbm{1}_{\tau_i}\mathbbm{1}_{\tau_j}]-(\mathbb{E}[T_\Delta])^2}{(\mathbb{E}[T_\Delta])^2}\,.$$
Since $\mathbb{E}[T_\Delta] = O(Np)$, $\sum_{i\neq j}\mathbb{E}[\mathbbm{1}_{\tau_i}\mathbbm{1}_{\tau_j}]=(1+O(1/N))(\mathbb{E}[T_\Delta])^2$, we get that 
$$\mathbb{P}[T_\Delta  = 0] \leq O(1/(Np)^3)+O(1/N) \leq O(1/(Np)^3)\,.$$
Hence $\mathbb{P}[\mathcal{G} \text{ is non-bipartite}]\geq \mathbb{P}[T_\Delta \geq 1] \geq 1- O(1/(Np)^3)$.
\end{proof}

\subsection{Symmetric Graph Convolution $D^{-1/2}AD^{-1/2}$ }

\begin{proposition}
When using symmetric message-passing convolution $D^{-1/2}AD^{-1/2}$ instead, the variance $(\sigma^{(n)})^2$ is non-increasing with respect to the number of convolutional layers $n$. i.e. 
$$(\sigma^{(n+1)})^2 \leq (\sigma^{(n)})^2, n\in\mathbb{N}\cup\{0\}\,.$$
\end{proposition}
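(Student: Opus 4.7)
The plan is to reduce the claim to a single eigenvalue inequality by exploiting the symmetry of $S := D^{-1/2}AD^{-1/2}$. First I would write the representation of node $i$ after $n$ symmetric convolutions as $h_i^{(n)} = \sum_j (S^n)_{ij} X_j$, where the $X_j$'s are independent Gaussians of variance $\sigma^2$. Using symmetry of $S$, the per-node variance collapses cleanly to
\[
(\sigma_i^{(n)})^2 \;=\; \sigma^2 \sum_j (S^n)_{ij}^2 \;=\; \sigma^2\,[S^n (S^n)^\top]_{ii} \;=\; \sigma^2\,[S^{2n}]_{ii}\,.
\]

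Next I would use that $S$ is similar to the row-stochastic matrix $D^{-1}A$ via $S = D^{1/2}(D^{-1}A)D^{-1/2}$, so $S$ is symmetric with real spectrum contained in $[-1,1]$. Taking the spectral decomposition $S = U\Lambda U^\top$,
\[
[S^{2n}]_{ii} \;=\; \sum_k \lambda_k^{2n}\,U_{ik}^2\,.
\]
Since $U_{ik}^2 \geq 0$ and $\lambda_k^2 \leq 1$, incrementing the exponent from $2n$ to $2n+2$ multiplies each nonnegative term by a factor at most $1$, giving $[S^{2(n+1)}]_{ii} \leq [S^{2n}]_{ii}$. This yields $(\sigma_i^{(n+1)})^2 \leq (\sigma_i^{(n)})^2$ for every node $i$, and hence the shared-variance version stated in the proposition.

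There is no real obstacle beyond checking $|\lambda_k|\leq 1$. The one subtle point worth emphasizing is that $S$ itself need not be positive semidefinite — it can have negative eigenvalues coming from near-bipartite substructure — but this is harmless because only $S^{2n}$ enters the variance, and squaring eliminates the sign. It is also worth noting that the same argument does \emph{not} transfer to the row-normalized operator $D^{-1}A$ used in the main body: that operator is not symmetric, so $\sum_j (P^n)_{ij}^2$ does not reduce to a diagonal entry of a single power of $P$, and in fact the variance under $D^{-1}A$ need not be monotone. Monotonicity of the denoising effect is thus a genuine consequence of the symmetric normalization.
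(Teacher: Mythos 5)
Your proof is correct and follows essentially the same route as the paper's: both reduce the per-node variance to the diagonal entries of $S^{2n}$ via the eigendecomposition of the symmetric operator $S = D^{-1/2}AD^{-1/2}$ and conclude monotonicity from $|\lambda_k|\leq 1$. Your write-up is in fact slightly more complete, since you explicitly justify the spectral bound via similarity of $S$ to the row-stochastic matrix $D^{-1}A$, a step the paper's proof leaves implicit.
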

\vspace{-2ex}
\begin{proof}
We want to calculate the diagonal entries of the covariance matrix $\Sigma^{(n)}$ of $(D^{-1/2}AD^{-1/2})^nX$, where the covariance matrix of $X$ is $\sigma^2 I_N$. Hence 
$$\Sigma^{(n)} = (D^{-1/2}AD^{-1/2})^n\big ((D^{-1/2}AD^{-1/2})^n \big )^\top\,. $$
Since $D^{-1/2}AD^{-1/2}$ is symmetric, let its eigendecomposition be $V\Lambda V^\top$ and we could rewrite 
$$\Sigma^{(n)} = (V\Lambda^n V^\top)(V\Lambda^n V^\top) = V\Lambda^{2n} V^\top\,.$$
Notice that the closed form of the diagonal entries is 
$$\diag(\Sigma^{(n)}) = \sum_{i=1}^N\lambda_i^{2n}|v|^2\,.$$

Since for all $1\leq i \leq N$, $|\lambda_i| \leq 1,$ we obtain monotonicity of each entry of $\diag(\Sigma^{(n)})$, i.e. variance of each node.
\end{proof}

\begin{figure}[b]
    \centering
    \includegraphics[width =11cm]{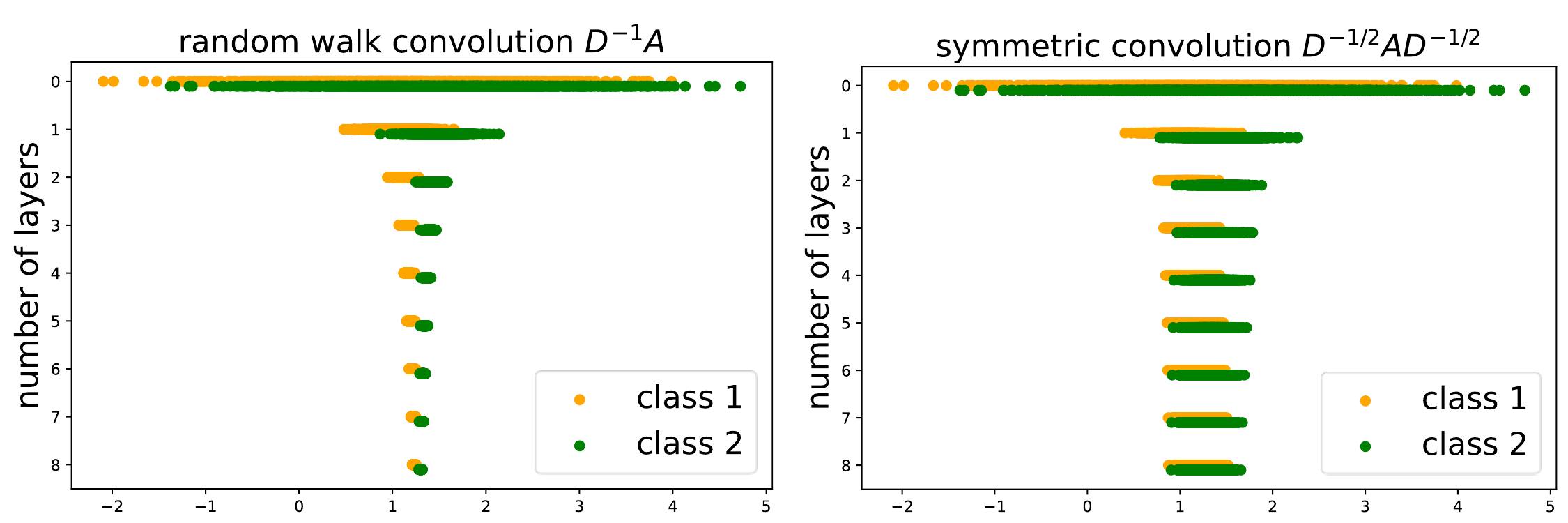}
    \caption{The change of variance with respect to the number of layers using random walk convolution $D^{-1}A$ and symmetric message-passing convolution $D^{-1/2}AD^{-1/2}$. }
    \label{fig:var_sim}
\end{figure}
Although the proposition does not always hold for random walk message-passing convolution $D^{-1}A$ as one can construct specific counterexamples (Appendix~\ref{app:counterexamples}), in practice, variances are observed to be decreasing with respect with the number of layers. Moreover, we empirically observe that variance goes down more than the variance using symmetric message-passing convolutions. Figure~\ref{fig:var_sim} presents visualization of node representations comparing the change of variance with respect to the number of layers using random walk convolution and symmetric message-passing convolution. The data is generated from CSBM($N=2000, p=0.0114, q=0.0038, \mu_1=1,\mu_2=1.5,\sigma^2=1$).

\subsection{Counterexamples}\label{app:counterexamples}

Here, we construct a specific example where the variance $(\sigma^{(n)})^2$ is not non-increasing with respect to the number of layers $n$ (Figure~\ref{fig:counterexamples}A).
We remark that such a non-monotone nature of change in variance is not caused by the bipartiteness of the graph, as a cycle graph with even number of nodes is also bipartite, but does not exhibit such phenomenon (Figure~\ref{fig:counterexamples}B). We conjecture the increase in variance is rather caused by the tree-like structure.

\begin{figure}[h]
    \centering
    \includegraphics[width =11cm]{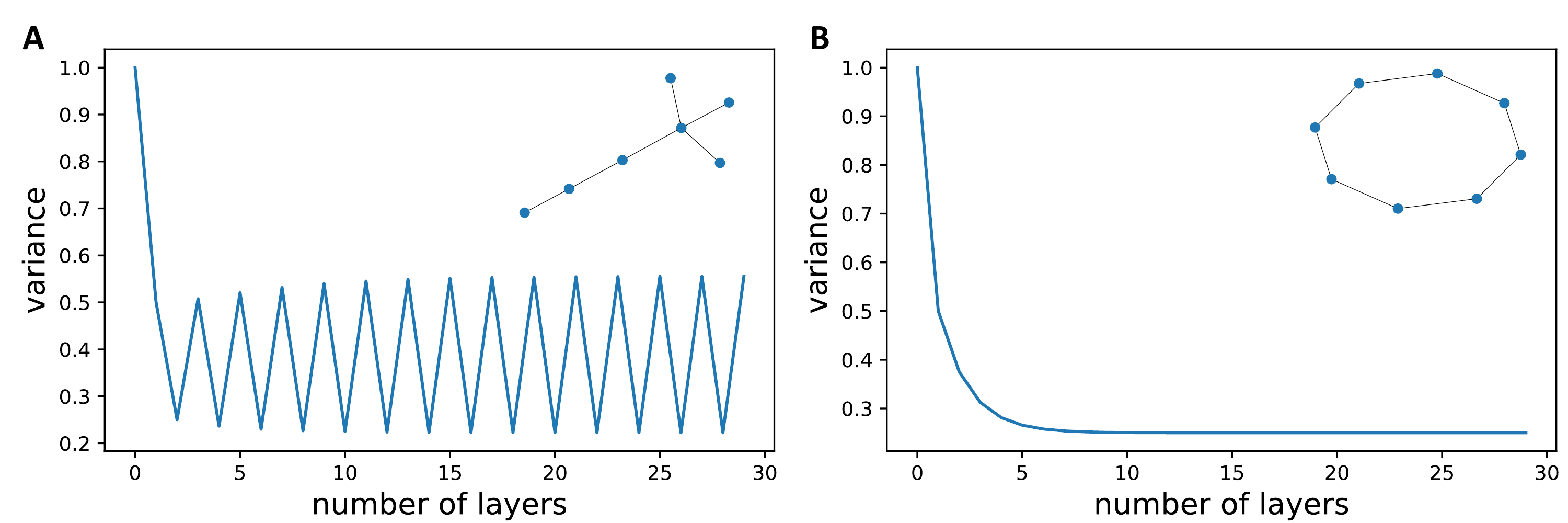}
    \caption{Counterexamples.}
    \label{fig:counterexamples}
\end{figure}

\subsection{The mixing and denoising effects in practice}\label{app:realexamples}
In this section, we measure the mixing and denoising effects of graph convolutions identified by our theoretical results in practice, and show that the same tradeoff between the two counteracting effects exists for real-world graphs. For the mixing effect, we measure the pairwise $L_2$ distances between the means of different classes, and for the denoising effect, we measure the within-class variances, both respect to the number of layers. Figure~\ref{fig:mix_denoise_in_practice} gives a visualization of both metrics for all classes on Cora, CiteSeer and PubMed. We observe that similar to the synthetic CSBM data, adding graph convolutions increases both the mixing effect (homogenizing node representations in different classes, measured by the inter-class distances) and the denoising effect (homogenizing node representations in the same class, measured by the within-class distances). In addition, the beneficial denoising effect clearly reaches saturation just after a small number of layers, as predicted by our theory.
\begin{figure}[h]
    \centering
    \includegraphics[width=0.9\linewidth]{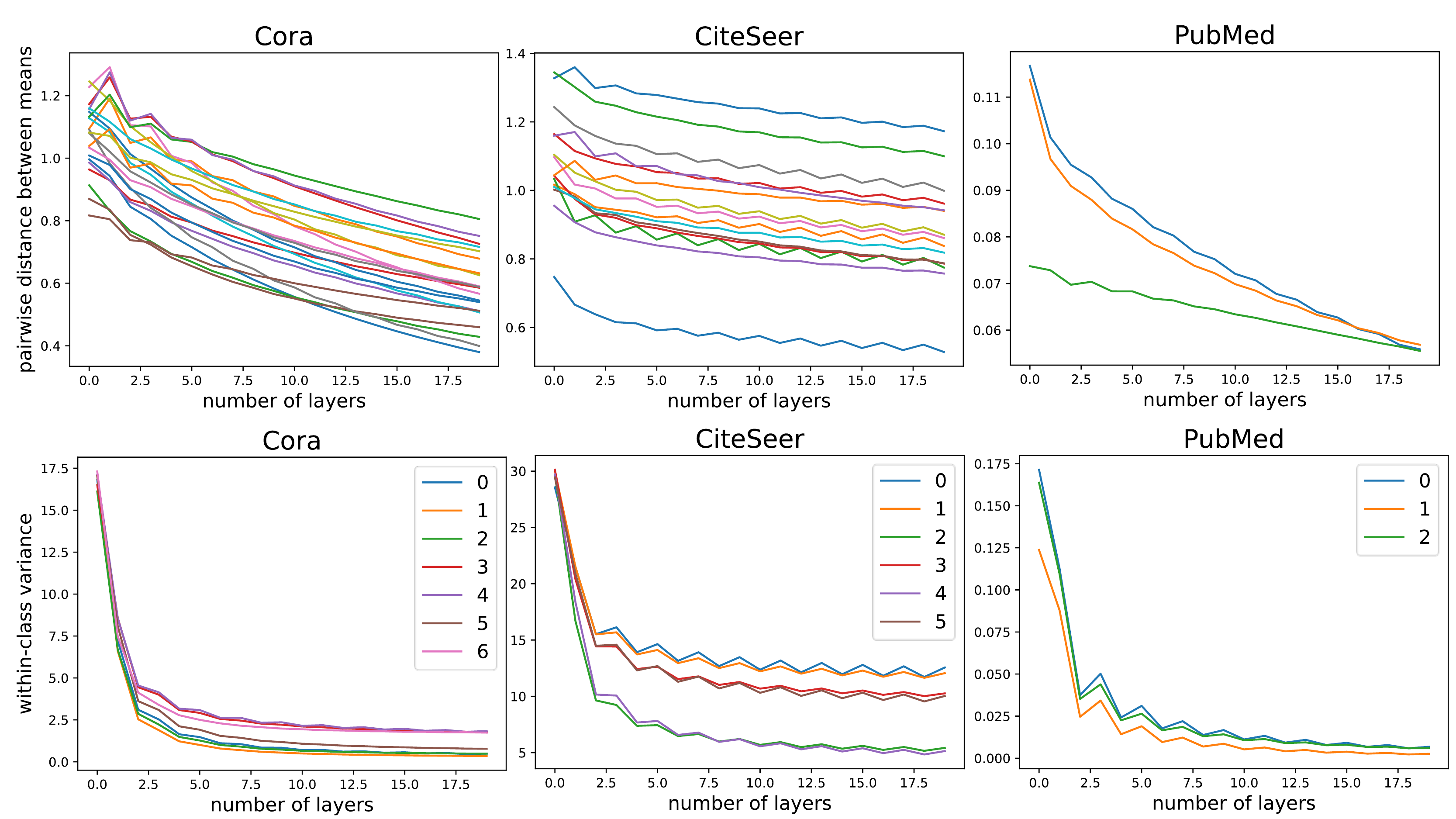}
    \caption{The existence of the mixing (top row) and denoising effects (bottom row) of graph convolutions in practice. Adding graph convolutions increases both effects and the beneficial denoising effect clearly reaches saturation just after a small number of layers, as predicted by our theory in Section~\ref{sec:main_results}.}
    \label{fig:mix_denoise_in_practice}
\end{figure}

\end{document}